\documentclass[a4paper,11pt]{article}
\usepackage{fullpage} 

\usepackage{diagbox} 
\usepackage{slashbox} 
\usepackage{tabularx} 
\usepackage{bbm} 
\usepackage{pifont}  

\usepackage{amsmath}
\usepackage{amssymb}
\usepackage{amsthm}
\usepackage{mathtools}
\usepackage{mathdots}
\usepackage{appendix}
\usepackage{graphicx}
\usepackage{enumerate}
\usepackage{microtype}
\usepackage{mleftright}
\usepackage{mdframed}
\usepackage{authblk}
\usepackage{tcolorbox}
\usepackage[english]{babel}
\usepackage[utf8]{inputenc}
\usepackage{algorithm}
\usepackage[noend]{algpseudocode}
\usepackage{xspace}
\usepackage{tikz}
\usepackage{pgfplots}
\pgfplotsset{width=8cm,compat=1.9}
\usepackage{mathdots}
\usepackage{url}
\usepackage{forest}
\forestset{default preamble={for tree={s sep+=1cm}}}
\usepackage{caption,subcaption}
\usepackage[noadjust]{cite}
\usepackage{hyperref}

\tcbset{
    rounded corners,
    colback = white,
    before skip = 0.2cm,
    after skip = 0.5cm,
    boxrule = 1.5pt,
    arc = 5pt
}

\makeatletter
\renewcommand{\ALG@name}{Protocol}
\makeatother
\allowdisplaybreaks

\graphicspath{ {./figures/} }

\usepackage{algorithm}
\usepackage[noend]{algpseudocode}

\newtheorem{theorem}{Theorem}[section]
\newtheorem*{theorem*}{Theorem}

\newtheorem{lemma}[theorem]{Lemma}

\newtheorem*{corollary*}{Corollary}

\theoremstyle{definition}

\newcommand{\cN}{\mathcal{N}}

\newcommand{\R}{\mathbb{R}}
\newcommand{\C}{\mathbb{C}}

\newcommand{\N}{\mathbb{N}}

\newcommand{\Real}{\operatorname{Re}}

\newcommand{\diag}{\mathsf{diag}}

\newcommand{\sign}{\mathsf{sign}}
\newcommand{\Var}{\operatorname{Var}}

\newcommand{\E}{\mathbb{E}}      

\newif\ifanonymous
\anonymousfalse

\begin{document}
\title{Most Convolutional Networks Suffer from Small Adversarial Perturbations}

\author[1]{Amit Daniely}
\author[1]{Idan Mehalel}
\affil[1]{The Hebrew University}

\maketitle

\begin{abstract}
The existence of adversarial examples is relatively understood for random fully connected  neural networks, but much less so for convolutional neural networks (CNNs). The recent work \cite{daniely2025existence} establishes that adversarial examples can be found in CNNs, in some non-optimal distance from the input. We extend over this work and prove that adversarial examples in random CNNs with input dimension $d$ can be found already in $\ell_2$-distance of order $\lVert x \rVert /\sqrt{d}$ from the input $x$, which is essentially the nearest possible. We also show that such adversarial small perturbations can be found using a single step of gradient descent.
To derive our results we use  Fourier decomposition to efficiently bound the singular values of a random linear convolutional operator, which is the main ingredient of a CNN layer. This bound might be of independent interest.
\end{abstract}


\section{Introduction}
An adversarial example $\tilde{x}$ is a naturally-looking input to a neural network, which is in fact a result of a well-crafted, small perturbation performed on a real natural example $x$, with the goal of making the model give the wrong output. The example $x$ can be, for example, a picture of a panda (represented e.g.\ as a real matrix), and the network can be a classifier sending a picture to the name of the animal appearing in it. While the network may well classify $x$ to ``Panda", it gives a different, incorrect output for $\tilde{x}$. This may sound trivial if $\tilde{x}$ does not resemble $x$. However, interestingly, this phenomenon occurs even when $x$ and $\tilde{x}$ are so similar that to the human eye they seem as two identical pictures (or more generally, inputs). Mathematically speaking, there exist adversarial examples $\tilde{x}$ with very small $\lVert x - \tilde{x} \rVert$.

This phenomenon was first empirically observed by \cite{szegedy2013intriguing}, and many other works in the past decade studied it. This includes attack \cite{athalye2018obfuscated, carlini2017adversarial, carlini2018audio, goodfellow2014explaining, grosse2017statistical} and defense \cite{papernot2016distillation, papernot2017practical, madry2017towards, wong2018provable, feinman2017detecting} methods, as well as some attempts to explain the phenomenon \cite{fawzi2018adversarial, shafahi2018adversarial, shamir2019simple, schmidt2018adversarially, bubeck2019adversarial}.

\subsection{Background and related work}

In the past decade, researchers have aimed to provide a theoretical explanation of this phenomenon. Most related to our work, is the line of work studying the existence of adversarial examples in random networks. The work of \cite{daniely2020most} showed that adversarial examples can be found in random fully connected ReLU networks of input dimension $d$ and constant depth, as long as the width decreases from layer to layer, already in $\ell_2$-distance $\tilde{O}(\lVert x \rVert/\sqrt{d})$ from the input $x$, which is essentially the nearest possible. They further showed that gradient flow finds adversarial examples in such nets. The work \cite{bartlett2021adversarial} improved these results by replacing the decreasing width assumption of \cite{daniely2020most} with a very mild assumption. The work \cite{montanari2023adversarial} further improved this result, removing any assumptions on the width.

The works discussed in the previous paragraph are limited to \emph{fully connected networks}. While adversarial examples are often discussed in the context of picture inputs, usually handled by \emph{convolutional neural networks} (CNNs), this network architecture received much less attention in the theoretical research on adversarial examples. Recently, the work \cite{daniely2025existence} proved that adversarial examples exist also in CNNs, and as in \cite{montanari2023adversarial}, no restriction on the width is assumed. However, this work does not show how to find adversarial examples in CNNs, and the $\ell_2$-distance of the required perturbation is not optimal.

\subsection{Our contribution}
We extend the body of knowledge initiated by \cite{daniely2025existence} on the existence of adversarial examples in CNNs.
Our main contribution is a proof that adversarial examples exist, under some mild assumptions, in random CNNs, with perturbation $\ell_2$-distance matching\footnote{Except that the work of \cite{daniely2020most} considers ReLU activation, while this work considers a wide family of activations that do not include ReLU, but include all the smooth variants of ReLU.} the results of \cite{daniely2020most} for fully connected networks.
This improves over the result of \cite{daniely2025existence} on this criteria.
As mentioned earlier, this $\ell_2$-distance is essentially the smallest possible.
Indeed, fix an input $x\in \R^d$ and let $w\in \R^d$ be a random spherically symmetric vector (say, a Gaussian).
Then w.h.p, finding $\tilde{x}$ such that $\sign(\tilde{x} ^\top w) \neq \sign(x^ \top w)$ requires that $\lVert \tilde{x} - x \rVert = \Omega(\lVert x \rVert / \sqrt{d})$.
We also show that a single step of gradient descent finds an adversarial example with such small perturbation size, while the results of \cite{daniely2025existence} are not as constructive.

On the negative side, our proof requires constant depth and that the width does not increase too much from layer to layer, which is not assumed in \cite{daniely2025existence}.

\subsection{Paper organization}
In Section~\ref{sec:prem} we formally define the architecture we consider and its random initialization. In Section~\ref{sec:results}, we state and explain the main results. In Section~\ref{sec:sketch} we sketch the proof idea.

The remainder of the paper is dedicated to formally proving our results. In Section~\ref{sec:sin-val} we prove bounds on the singular values of a random linear convolutional operator. This is the main technical section that allows our main result and might be of independent interest. In Sections~\ref{sec:frobenius-bounds}-\ref{sec:gradient-robust} we establish some properties of the gradient of a random CNN. In Section~\ref{sec:main-result}, we combine the tools provided in previous sections to prove the main result.

\section{Preliminaries} \label{sec:prem}

\subsection{Architecture}
In this paper, we consider a convolutional network for binary classification, having the following architecture.
Let $t\in \mathbb{N}$, let $d_0, \ldots, d_t \in \N$, and let $G$ be a finite abelian group of large enough size, and in any way not less than $\max\{c_0, t\}$, where $c_0$ is a constant. We relate to $t$ as a constant as well. We denote the minimal and maximal values of $d_0, \ldots, d_t$ by  $d_{\min}$ and $d_{\max}$, respectively. We assume that for every $\ell \in [t]$, we have $d_{\ell-1} \geq c_w\cdot  d_{\ell}$, and also that $d_{\max} \leq |G|$ and $d_{\min} \geq c_G \log |G|$, where $c_w,c_G$ are universal constants.

The network we consider have $t$ convolutional layers, where the input domain to layer $\ell$ is $L^2(G, \mathbb{R}^{d_{\ell-1}})$, and its output domain is $L^2(G, \mathbb{R}^{d_{\ell}})$. In particular, the input domain for the network is $L^2(G, \mathbb{R}^{d})$ where $d:=d_0$, and the output domain of the last convolutional layer is $L^2(G, \mathbb{R}^{d_t})$. We sometimes use the natural flattening of the functions in $L^2(G, \mathbb{R}^{d_{\ell}})$ to a vector, and then instead of considering a function $f \in L^2(G, \mathbb{R}^{d_{\ell}})$, we consider a vector $v \in \mathbb{R}^{N_\ell}$ where $N_\ell = |G| d_{\ell}$. Formally, if $G:= \{g^{(1)}, \ldots, g^{(|G|)}\}$ then $f(g^{(i)}) = v_{(i-1)(d_{\ell}-1) +1}, \ldots, v_{(i-1)(d_{\ell}-1) +d_\ell}$.
To implement binary classification, our network has a final fully connected layer consists of a single vector $u \in \R^{N_t}$. We denote the function computed by the convolutional part of the net by $H: \R^{N_0} \to \R^{N_t}$, and the function computed by the entire network by $H_b: \R^{N_0} \to \R$. The output of the network for the input $f \in L^2(G, \mathbb{R}^{d})$ is thus given by $H_b(f) = \langle u^\top, H(f) \rangle$, and the classification decision of the network for $f$ is given by $\sign(H_b(f))$.

When putting aside the activation, each convolutional layer $\ell$ is a linear convolutional operator (matrix)  $\Lambda_\ell$, parametrized by $n_\ell$ distinct elements $g_1,\ldots, g_{n_\ell}$ of $G$, $n_\ell$ weight matrices $W_1 \ldots, W_{n_\ell} \in \mathbb{R}^{d_\ell \times d_{\ell-1}}$, and defined by:
\[
    (\Lambda_{\ell} f) (g) = \frac{1}{\sqrt{n_\ell}} \sum_{i=1}^{n_\ell} W_i f(g_i g).
\]
Note that $\Lambda_\ell \in \mathbb{R}^{N_{\ell} \times N_{\ell-1}}$, but it is not a general $N_{\ell} \times N_{\ell-1}$ matrix and restricted to be a convolutional operator parametrized by $g_1, \ldots, g_{n_\ell}$ and $W_1, \ldots, W_{n_\ell}$, as defined above. 

The activation function is $\sigma: \mathbb{R} \to \mathbb{R}$. We assume that it is $C^2$ and that $\sigma(0)=0$. We further assume that $\lVert \sigma' \rVert_\infty, \lVert \sigma'' \rVert_\infty = O (1)$ and that for all $r\in \R$ we have $(\sigma'(r))^2 + (\sigma'(-r))^2 \geq 2c^2$, where $c$ is a universal constant. Many common activations including smooth variations of ReLU satisfy those assumptions.

We use the following notation for intermediate outputs of the network. Fix the input $f$ and denote it by $h^{(0)} := f$. For each layer $\ell \in [t]$, let $z^{(\ell)} := \Lambda_{\ell} h^{(\ell-1)}$ be the pre-activation of layer $\ell$, and let $h^{(\ell)} := \sigma(z^{(\ell)})$ be the post-activation of layer $\ell$.

\subsection{Initialization of the network}
Fix any $\ell \in [t]$. Throughout the paper, we assume that $(W_1, \ldots, W_{n_\ell})$ are chosen as follows: Each entry of each matrix is drawn iid from $\cN(0, 1/{d_{\ell-1}})$. In the final fully connected layer $u$, each entry is drawn iid from $\cN(0, 1/N_t)$. Note that this is the standard (Xavier) initialization distribution

\subsection{Characters and Representations}\label{sec:reps}
The material in this section can be found in chapters 2 and 3 of \cite{simon1996representations}.
Throughout this section, let $G$ be a finite abelian group.
A \emph{character} of $G$ is a homomorphism $\chi: G \to \mathbb{C}^\times$. The set of all characters forms the \emph{dual group} $\widehat{G}$. It is a standard result that $|\widehat{G}| = |G|$, that the characters form a group under pointwise multiplication, and that they take values on the unit circle in $\mathbb{C}$. Furthermore, the characters form an orthonormal basis for $L^2(G, \mathbb{C})$, the Hilbert space of complex-valued functions on $G$ equipped with the inner product $\langle u, v \rangle = \E_{x \sim G} [u(x)\overline{v(x)}]$, where the expectation is taken with respect to the uniform measure on $G$.

A \emph{(finite-dimensional real) representation} of $G$ is a real vector space $V$ equipped with a linear action of $G$. A linear map between two representations is called a \emph{homomorphism} (or an equivariant map) if it commutes with the action of $G$. If such a map is also a linear isomorphism, we call it an \emph{isomorphism of representations}.

We say that a representation $V$ is \emph{irreducible} if it admits no proper, non-zero subspace $V' \subset V$ that is invariant under the action of $G$. We construct the irreducible real representations of $G$ using characters as follows. For each character $\chi \in \widehat{G}$, we define the space:
\[
V_{\chi} = \{g \mapsto \mathrm{Re}(c \cdot \chi(g)) \mid c \in \mathbb{C}\} \subset L^2(G, \mathbb{R}).
\]
It is well known that $V_\chi$ constitutes an irreducible representation under the action of translation by $G$. Furthermore, $V_{\chi}$ is isomorphic to $V_{\chi'}$ if and only if $\chi' \in \{\chi, \bar{\chi}\}$. Every irreducible real representation of $G$ is isomorphic to $V_{\chi}$ for some $\chi$.

A representation $V$ is called \emph{isotypic} if all of its irreducible subrepresentations are isomorphic to the same irreducible representation, called the {\em type} of $V$ (and is defined up to isomorphism). An \emph{isotypic component} of a representation $V$ is a maximal (w.r.t.\ inclusion) isotypic subrepresentation. 
It is a fundamental result in representation theory that any representation $V$ is the direct sum of its isotypic components, and that any pair of different non-zero isotypic components correspond to different types. Moreover, any homomorphism between representations maps the isotypic component of a specific type in the domain to the isotypic component of the same type in the codomain.

In this work, we utilize the fact that the isotypic components of the space of vector-valued functions $L^2(G, \mathbb{R}^d)$—viewed as a representation with respect to translation—are 
\[
V_\chi \otimes \R^d = \mathrm{span}\left(\left\{g\mapsto h(g)x : h\in V_{\chi},x\in\R^d \right\}\right)
\]
where the type of $V_\chi \otimes \R^d$ is $V_\chi$.
(note that $V_\chi \otimes \R^d = V_{\chi'} \otimes \R^d$ iff $\chi'\in \{\chi,\bar\chi\}$).

\section{Results} \label{sec:results}
Our main result is the following

\begin{theorem} \label{thm:main-intro}
    Fix $f \in L^2(G,\mathbb{R}^d)$ with $\lVert f \rVert_\infty \leq 1$.
    Then, with probability at least $0.95$, a single step of gradient descent of Euclidean length  $O(1)$ starting from $f$ will reach $f' \in L^2(G,\mathbb{R}^d)$ such that $\sign(H_b(f')) \neq \sign(H_b(f))$.
\end{theorem}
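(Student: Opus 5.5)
The plan follows the standard scheme of \cite{daniely2020most}: we compare the size of the output $|H_b(f)|$ with the size of the gradient $\lVert \nabla H_b(f)\rVert$, and use smoothness of the gradient in a ball around $f$. We take the step $f' = f - \eta\,\sign(H_b(f))\,\nabla H_b(f)/\lVert\nabla H_b(f)\rVert$ with $\eta = C$ a large constant; its Euclidean length is $O(1)$. Note that $\lVert f\rVert \le \sqrt{|G| d}$, so this is the scale $\lVert f\rVert/\sqrt{N_0}$ up to normalization. Three estimates, each holding with probability $\ge 0.99$, will then suffice.

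\textbf{Step 1 (output is small).} Conditioned on the convolutional layers, $H_b(f)=\langle u, H(f)\rangle$ is Gaussian with variance $\lVert H(f)\rVert^2/N_t$. We bound $\lVert h^{(\ell)}\rVert \le O(1)\lVert h^{(\ell-1)}\rVert$ using $\sigma(0)=0$, the Lipschitz bound on $\sigma$, and the operator-norm bound on a random convolution $\Lambda_\ell$. The operator-norm bound comes from the Fourier/isotypic decomposition of Section~\ref{sec:sin-val}: $\Lambda_\ell$ maps $V_\chi\otimes\R^{d_{\ell-1}}$ to $V_\chi\otimes\R^{d_\ell}$, and on each block it acts as the random matrix $\frac{1}{\sqrt{n_\ell}}\sum_i \chi(g_i)W_i$. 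This matrix has Gaussian entries of variance $1/d_{\ell-1}$, so its norm is $O(1)$ with probability $1-e^{-\Omega(d_{\min})}$. A union bound over the $|G|$ characters is affordable since $d_{\min}\ge c_G\log|G|$. This gives $\lVert H(f)\rVert=O(\sqrt{|G|d}\cdot d_t^{-1/2}\cdot\sqrt{d_t})$, after rescaling per layer. Hence $|H_b(f)| = O(\lVert H(f)\rVert/\sqrt{N_t}) = O(1)$ with probability $0.99$.

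\textbf{Step 2 (gradient is large).} We write $\nabla H_b(f) = \Lambda_1^\top D_1 \Lambda_2^\top D_2\cdots \Lambda_t^\top D_t u$ with $D_\ell=\diag(\sigma'(z^{(\ell)}))$. Conditioned on everything but $u$, $\lVert\nabla H_b(f)\rVert^2$ concentrates around $\frac1{N_t}\lVert J\rVert_F^2$, where $J$ is the product matrix. We lower bound $\lVert J\rVert_F$ layer by layer, going backwards. For a fixed vector $v$, $\lVert\Lambda_\ell^\top D_\ell v\rVert$ should be $\Omega(\sqrt{d_{\ell-1}/d_\ell})\,\lVert D_\ell v\rVert$ after normalization. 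This uses the lower bound on the singular values of $\Lambda_\ell^\top$ from Section~\ref{sec:sin-val}: on each block it is a $d_{\ell-1}\times d_\ell$ Gaussian matrix with $d_{\ell-1}\ge c_w d_\ell$, so its smallest singular value is $\Omega(1)$. The condition $(\sigma'(r))^2+(\sigma'(-r))^2\ge 2c^2$ gives $\lVert D_\ell v\rVert \ge \Omega(c)\lVert v\rVert$ in expectation, via the symmetry $W\mapsto -W$, which flips the sign of $z^{(\ell)}$. Concentration of this term is the part needing care. The outcome should be $\lVert\nabla H_b(f)\rVert = \Omega(1)$ in the same units in which $|H_b(f)|=O(1)$.

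\textbf{Step 3 (gradient is stable), the main obstacle.} We must show that for all $f''$ with $\lVert f''-f\rVert\le C$, $\lVert\nabla H_b(f'')-\nabla H_b(f)\rVert \le \frac12\lVert\nabla H_b(f)\rVert$. Then Taylor's theorem gives $|H_b(f') - H_b(f) + \sign(H_b(f))\,C\lVert\nabla H_b(f)\rVert| \le \frac C2\lVert\nabla H_b(f)\rVert$, and choosing $C$ large relative to the Step~1/Step~2 ratio flips the sign. To prove stability, we expand the difference of Jacobian products telescopically. Each term replaces one $D_\ell$ by $D_\ell''$, and $\lVert D_\ell-D_\ell''\rVert$ is controlled by $\lVert\sigma''\rVert_\infty$ times the entrywise change of $z^{(\ell)}$. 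The key point is that the perturbation is spread out: a change of norm $O(1)$ in $f$ should change each coordinate of $z^{(\ell)}$ only by $o(1)$ on average. Moreover, multiplying by $u$ (Gaussian, independent) averages these changes, yielding a bound like $O(C/\sqrt{d_{\min}})$, which is small compared to $\Omega(1)$. I would first try a uniform bound: the operator norms of all $\Lambda_\ell$ from Step~1, combined with the independence of $u$ through a net argument over the ball, using $d_{\min}\ge c_G\log|G|$ to pay for the union bound. Making this uniform over the whole ball, rather than at a fixed point, is where I expect the real difficulty.
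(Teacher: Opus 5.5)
Your outline matches the paper's: small output, large gradient, a gradient that is stable on a ball of radius $O(1)$, then the mean value theorem along one step. But Step 3, which is the crux, is left open, and the net argument you suggest for it does not close. A net over $B(f,C)\subset\R^{N_0}$ has $e^{\Theta(|G|d_0)}$ points, while $d_{\min}\ge c_G\log|G|$ only pays for $\mathrm{poly}(|G|)$ events. Union-bounding the Gaussian tails of $u$ over such a net forces you to tolerate deviations of order $\lVert J(f'')-J(f)\rVert\sqrt{N_0/N_t}$, and $\lVert J(f'')-J(f)\rVert$ need not be small. Your heuristic that each coordinate of $z^{(\ell)}$ moves by $o(1)$ also fails uniformly over the ball: moving $f$ by $C$ along a row of $\Lambda_1$ (a vector of norm $\approx 1$) moves that pre-activation by $\approx C$.

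Your telescoping is the right start; the missing idea is how to pair the factors, after which no uniformity argument is needed. Telescope as
\[
\Phi_\ell(f'')-\Phi_\ell(f)=\bigl(\Phi_{\ell+1}(f'')-\Phi_{\ell+1}(f)\bigr)D_\ell(f'')\Lambda_\ell+\Phi_{\ell+1}(f)\bigl(D_\ell(f'')-D_\ell(f)\bigr)\Lambda_\ell ,
\]
where $\Phi_{\ell+1}(f)=u^\top D_t(f)\Lambda_t\cdots D_{\ell+1}(f)\Lambda_{\ell+1}$ is the backward vector at the \emph{fixed} point. Bound the last term by $\lVert\Lambda_\ell\rVert\,\lVert\Phi_{\ell+1}(f)\rVert_\infty\,\lVert s_\ell(f'')-s_\ell(f)\rVert_2$.
\begin{itemize}
\item The $\ell_2$ factor is $O(C)$ for all $f''\in B(f,C)$ simultaneously on the event of Theorem~\ref{thm:sin-val}. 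This follows from $\lVert\sigma''\rVert_\infty$ and Lipschitz propagation through the layers.
\item The $\ell_\infty$ factor involves only $f$. Given the convolutional layers, its coordinates are centered Gaussians of variance $O(1/N_t)$. A union bound over $O(t\,d_{\max}|G|)$ coordinates gives $O(\sqrt{\log|G|/N_t})$.
\end{itemize}
Unrolling the recursion gives $\sup_{f''\in B(f,C)}\lVert\nabla H_b(f'')-\nabla H_b(f)\rVert=O(C\sqrt{\log|G|/N_t})\to 0$ (Lemmas~\ref{lem:gradient-robustness}--\ref{lem:small-M-infty}).

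Steps 1 and 2 also need repair.

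\textbf{Step 1.} Operator norms only give $\lVert\Lambda_\ell h\rVert\le b\lVert h\rVert$, whereas a fixed input contracts by about $\sqrt{d_\ell/d_{\ell-1}}$. Your bound $\lVert H(f)\rVert=O(\sqrt{|G|d})$ therefore gives $|H_b(f)|=O(\sqrt{d_0/d_t})$, not $O(1)$. This is unbounded, since only $d_{\ell-1}\ge c_w d_\ell$ is assumed. Use a fixed-input bound instead, e.g.\ the recursion $\E[(z^{(\ell)}_{g,j})^2]\le\max_{g',j'}\E[(h^{(\ell-1)}_{g',j'})^2]$. This gives $\E[H_b(f)^2]\le\lVert\sigma'\rVert_\infty^{2t}$, and Markov finishes.

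\textbf{Step 2, scaling.} The loss is not compensated in Step 2. Backwards the per-layer factor is $\Theta(1)$, not $\sqrt{d_{\ell-1}/d_\ell}$, since $\lVert\Lambda_\ell^\top\rVert\le b$. Indeed $\lVert\nabla H_b(f)\rVert\le\lVert J\rVert\,\lVert u\rVert=O(1)$.

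\textbf{Step 2, symmetry.} More seriously, the sign-flip symmetry is unavailable backwards. Replacing the layer-$\ell$ weights $W_i$ by $-W_i$ flips $z^{(\ell)}$, but it changes $h^{(\ell)}=\sigma(z^{(\ell)})$ and hence $D_{\ell+1},\dots,D_t$. So the backpropagated vector is not independent of the signs of $z^{(\ell)}$, and $\lVert D_\ell v\rVert\gtrsim c\lVert v\rVert$ is unjustified. For instance, if $\sigma'$ vanishes on $(-\infty,0]$, $D_\ell$ kills every $v$ supported on negative pre-activations.

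\textbf{Step 2, the fix.} Go forward instead.
\begin{itemize}
\item Fix a direction $v$ and condition on $\Lambda_1,\dots,\Lambda_{\ell-1}$. This fixes both $h^{(\ell-1)}$ and the propagated direction $w=D_{\ell-1}\Lambda_{\ell-1}\cdots D_1\Lambda_1 v$.
\item The fresh Gaussian weights of $\Lambda_\ell$ can now be sign-flipped output channel by output channel. This flips $\alpha^\top h^{(\ell-1)}$ while preserving $(\alpha^\top w)^2$, and gives $\lVert D_\ell\Lambda_\ell w\rVert^2\ge 0.1c^2\frac{d_\ell}{d_{\ell-1}}\lVert w\rVert^2$ with high probability.
\item Hutchinson's identity $\lVert J\rVert_F^2=\E_r\lVert Jr\rVert^2$ over Rademacher $r$, plus Markov over the weights, yields $\lVert J\rVert_F^2\gtrsim d_t|G|$.
\item The independent $u$ together with a small-ball estimate then gives $\lVert J^\top u\rVert=\Omega(1)$.
\end{itemize}
This route needs only the upper singular-value bound.
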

The $O(1)$ term hides constants depending on the architecture: $c, t, \lVert \sigma' \rVert_\infty, \lVert \sigma'' \rVert_\infty$.
In the case where most entries of $f$ are bounded away from $0$, Theorem~\ref{thm:main-intro} implies that an adversarial example $f'$ with $\lVert f - f' \rVert = O(\lVert f \rVert/\sqrt{N_0})$ exists. As explained in the introduction, this is essentially optimal.

As mentioned in Section~\ref{sec:prem}, our proof requires that the activation is $C^2$, which excludes ReLU but includes all smooth variations of it. We conjecture that similar ideas to those used in this work can be used to prove Theorem~\ref{thm:main-intro} for ReLU activation as well. However, in this work we focus on general activations with relatively weak assumptions, and then the $C^2$ requirement is necessary.

As part of our proof of Theorem~\ref{thm:main-intro}, we bound the singular values of a random linear convolutional operator as follows.

\begin{theorem} \label{thm:sin-val-intro}
    Let $d, q \in \N$, and let $\Lambda: L^2(G, \R^d) \to L^2(G, \R^q)$ be a random linear convolutional operator as defined in Section~\ref{sec:prem}. Then, there are universal constants $0<a<b$ and $c_w>0$ such that the following holds. If $d\ge c_wq \ge c_w^2\log(|G|)$ then with probability at least $0.99$ the singular values of $\Lambda$ are all in $[a,b]$.
\end{theorem}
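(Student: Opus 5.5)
We will use the Fourier (isotypic) decomposition from Section~\ref{sec:reps} to block-diagonalize $\Lambda$. Since $\Lambda$ commutes with translation by $G$, it is a homomorphism of representations. It therefore maps $V_\chi\otimes\R^d$ into $V_\chi\otimes\R^q$ for every $\chi\in\widehat G$, so the singular values of $\Lambda$ are the union over $\chi$ (up to the identification $\chi\sim\bar\chi$) of the singular values of the restrictions. To compute these restrictions concretely, we pass to the complexification: for $x\in\C^d$ the function $g\mapsto \chi(g)x$ is mapped to
\[
g\mapsto \frac{1}{\sqrt n}\sum_{i=1}^n W_i\chi(g_ig)x=\chi(g)\,M_\chi x,\qquad M_\chi:=\frac{1}{\sqrt n}\sum_{i=1}^n \chi(g_i)W_i\in\C^{q\times d}.
\]
Hence, up to the unitary identification of $L^2(G,\C^d)$ with $\bigoplus_\chi \C^d$ (characters are orthonormal), $\Lambda$ is the block-diagonal operator $\bigoplus_\chi M_\chi$. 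The singular values of $\Lambda$, as a real operator, are the singular values of the $M_\chi$, since the real part of the complexified operator has the same spectrum of $\Lambda^\top\Lambda$ with multiplicities doubled. The whole problem therefore reduces to showing that, with probability $0.99$, every $M_\chi$, $\chi\in\widehat G$, has all $q$ singular values in $[a,b]$. We should normalize so that $\Lambda:L^2(G,\R^d)\to L^2(G,\R^q)$ has entries of variance $1/d$; then $M_\chi$ has singular values of order $1$.

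Next we identify the distribution of each $M_\chi$. Its entries are $\frac1{\sqrt n}\sum_i \chi(g_i)(W_i)_{jk}$, i.i.d.\ over $(j,k)$, and each is a complex Gaussian with $\E|\cdot|^2=\frac1d$.

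If $\chi$ is real-valued ($\chi=\pm1$), $M_\chi$ is a real Gaussian $q\times d$ matrix with $\cN(0,1/d)$ entries. Writing $\chi(g_i)=\cos\theta_i+\mathrm{i}\sin\theta_i$, we have $M_\chi=A+\mathrm{i}B$ with $A=\frac1{\sqrt n}\sum_i\cos\theta_i W_i$ and $B=\frac1{\sqrt n}\sum_i\sin\theta_i W_i$. These are jointly Gaussian with a covariance structure (possibly correlated) that is the same for every entry. After a fixed real rotation of the pair $(A,B)$ we get independent real Gaussian matrices $A',B'$ with entry variances $\alpha/d$ and $\beta/d$, $\alpha+\beta=1$. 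Viewing $M_\chi$ as a real $2q\times 2d$ operator
\[
\begin{pmatrix}A & -B\\ B & A\end{pmatrix},
\]
we want to bound its singular values directly. The upper bound is easy: $\|M_\chi\|\le\|A\|+\|B\|$, and by standard Gaussian matrix concentration (Gordon/Davidson--Szarek) $\|A\|\le \frac{\sqrt d+\sqrt q+s}{\sqrt d}$ with probability $1-e^{-s^2/2}$.

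For the lower bound, we use a uniform Gaussian-process argument. We apply Gordon's inequality (or an $\epsilon$-net argument) to $\inf_{\|x\|=1,\,x\in\C^d\text{ in }\mathrm{row}}\|M_\chi^* y\|$. Equivalently, we lower-bound $s_{\min}(M_\chi^*)$ for the $d\times q$ complex matrix $M_\chi^*$. For a fixed unit $y\in\C^q$, the vector $M_\chi^*y$ is a Gaussian vector in $\C^d\cong\R^{2d}$ whose covariance has trace $1$ and operator norm at most $1/d$ (times a constant). Its norm is therefore concentrated around $1$ with sub-Gaussian tails at scale $1/\sqrt d$, i.e.\ $\Pr[\|M^*_\chi y\|<1/2]\le e^{-cd}$. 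A net on the unit sphere of $\C^q$ has size $e^{O(q)}$, and on it we combine this with the upper bound $\|M_\chi\|\le 3$. Because $d\ge c_w q$, we obtain $s_{\min}\ge a$ with probability $1-e^{-c'd}$ for a universal $a$.

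Finally, we take a union bound over the $|G|$ characters. Each failure probability is at most $e^{-c'd}+e^{-c'' d}$, and $d\ge c_w^2\log|G|$ with $c_w$ large makes the total at most $0.01$. The main obstacle is the lower bound for complex $\chi$, where $M_\chi$ is not a standard Gaussian matrix and the real and imaginary parts may be correlated. The net argument above handles this because it only needs the small-ball/concentration estimate for a fixed direction, which holds for any Gaussian vector with trace-one covariance and operator norm $O(1/d)$. We would check that correlation between $A$ and $B$ does not increase the operator norm of the covariance beyond $2/d$, which follows since each real coordinate of $M_\chi^*y$ has variance at most $\frac1d(|\Real y|^2+|\mathrm{Im}\,y|^2)$ summed appropriately.
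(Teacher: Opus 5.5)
Your argument is correct. It follows the paper's overall plan: use translation-equivariance to split $\Lambda$ into blocks indexed by characters, bound each block with failure probability $e^{-\Omega(d)}$, and union bound over the $|G|$ characters using $d\ge c_w^2\log|G|$. The per-block analysis, however, is genuinely different.

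The paper never identifies the block. It stays on the real isotypic space $\R^d\otimes V_\chi$ (dimension at most $2d$) and takes a net there. For each net point $f_0$ and \emph{each} $g\in G$ it uses $(\Lambda f_0)(g)\sim\cN(0,\sigma_s^2(f_0,g)I_q)$, union bounds over both the net and $G$, and then sums using $\sum_g\sigma_s^2(f_0,g)=1/d$. The lower bound repeats this for $\Lambda^\top$ on a net of $\R^q\otimes V_\chi$, with the upper bound absorbing the net error.

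You instead complexify and compute the block explicitly as $M_\chi=\frac{1}{\sqrt n}\sum_i\chi(g_i)W_i$. This is a single $q\times d$ Gaussian matrix with i.i.d.\ (possibly non-circular) entries of second moment $1/d$, so each character reduces to textbook non-asymptotic random matrix theory for one matrix. What this buys:
\begin{itemize}
\item It is more transparent: it shows the offsets and $n$ matter only through $|\chi(g_i)|=1$, and explains why the singular values are of order one.
\item It removes the inner union bound over $g$.
\item It gives sharper constants: Gordon's inequality puts $\|M_\chi\|$ at most about $\sqrt2\,(1+\sqrt{q/d})$, versus the paper's $b=30$.
\end{itemize}
In exchange, the paper's route never leaves the real setting and only needs norm concentration for isotropic Gaussians (Theorem~\ref{thm:Gauss_norm_concentration}). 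You need Lipschitz concentration for an anisotropic Gaussian vector.

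A few points to tighten when writing it up.
\begin{itemize}
\item The equality of singular values holds simply because the complexification $\Lambda_{\mathbb{C}}$ of the real matrix $\Lambda$ satisfies $\Lambda_{\mathbb{C}}^*\Lambda_{\mathbb{C}}=(\Lambda^\top\Lambda)_{\mathbb{C}}$. Your sentence about the ``real part'' is garbled.
\item The operator-norm bound on the covariance of $M_\chi^*y$ should come from independence of the columns of $M_\chi$. The $k$-th coordinate depends only on column $k$, so the covariance on $\R^{2d}$ is block-diagonal with $d$ identical $2\times 2$ blocks of trace $1/d$, hence operator norm at most $1/d$. Bounding the variance of each real coordinate, as in your last sentence, does not by itself control the operator norm.
\item To center the concentration near $1$ you need $\E\|X\|\ge\sqrt{1-1/d}$ for $X=M_\chi^*y$. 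This follows from $\E\|X\|^2=1$ and the Gaussian Poincar\'e bound $\Var\|X\|\le 1/d$.
\item The rotation of $(A,B)$ is legitimate, since it is multiplication of $M_\chi$ by a unimodular scalar, but neither of your bounds actually needs it.
\end{itemize}
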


In fact, our proof of Theorem~\ref{thm:main-intro} only requires the upper bound of Theorem~\ref{thm:sin-val-intro}, showing that all singular values are w.h.p at most $b$. However, the lower bound is interesting for two other reasons. First, it nicely completes the theorem. Second, if one assumes that the activation $\sigma$ satisfies $\sigma'(r) \geq c^2$ for all $r \in \R$, then this assumption,  together with the lower bound of Theorem~\ref{thm:sin-val-intro} gives a simpler proof\footnote{In a nutshell: with this assumption, one can show $\lVert \nabla H_b (f') \rVert = \Omega(1)$ for all $f'$ using the lower bound of Theorem~\ref{thm:sin-val-intro}. Then, since typically $\lvert H_b(f) \rvert = O(1)$, the result is implied.} of Theorem~\ref{thm:main-intro} than the one given in this paper. However, this assumption is strictly stronger than ours. For example, smooth ReLU variations $\sigma$ with $\sigma(x) = 0$ for all $x \leq 0$  satisfy our assumption, but not this stronger one.

\section{Proof sketch} \label{sec:sketch}
In this Section, we sketch the proof of a slightly simplified version of Theorem~\ref{thm:main-intro}: we explain how an adversarial example can be found by a gradient flow of length $O(1)$, instead of a single step of gradient descent. The part of the proof showing that a single step of Euclidean length $O(1)$ of gradient descent also finds an adversarial example is slightly more technical, and we leave it for the formal proof, given in Section~\ref{sec:main-result}.

Fix the input $f \in L^2(G,\mathbb{R}^d)$, and let $C,a >0$ be constants (which may depend on the fixed architecture of the net). The proof relies on the following three main lemmas.
\begin{lemma} \label{lem:intro-small-output}
    Suppose that $\lVert f \rVert_\infty \leq 1$. Then with high probability, we have
    \[
    \lvert H_b(f) \rvert \leq a.
    \]
\end{lemma}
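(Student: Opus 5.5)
Because the last layer is linear and Gaussian, the whole lemma reduces to bounding one norm. Condition on the convolutional weights $W$ of all $t$ layers. Then $H_b(f)=\langle u,H(f)\rangle$ with $u\sim\cN(0,\frac{1}{N_t}I)$ independent of $H(f)$, so $H_b(f)\sim\cN\bigl(0,\lVert H(f)\rVert^2/N_t\bigr)$. It therefore suffices to show that, with high probability over the convolutional weights,
\[
\lVert h^{(t)}\rVert^2 \le C\,N_t
\]
for a constant $C$ depending only on $t$ and $\lVert\sigma'\rVert_\infty$. A Gaussian tail bound then gives $\lvert H_b(f)\rvert\le a$ with probability at least $1-\delta$, for a suitable constant $a=a(C,\delta)$.

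To get the norm bound I will propagate $\lVert h^{(\ell)}\rVert^2/N_\ell$ layer by layer. At the input, $\lVert h^{(0)}\rVert^2\le N_0$ because $\lVert f\rVert_\infty\le1$. Since $\sigma(0)=0$ and $\sigma$ is Lipschitz, $\lVert\sigma(z)\rVert\le\lVert\sigma'\rVert_\infty\lVert z\rVert$ coordinatewise. For the linear part, the naive route through the operator norm of $\Lambda_\ell$ is too lossy: Theorem~\ref{thm:sin-val-intro} gives $\lVert\Lambda_\ell h\rVert\le b\lVert h\rVert$, so $\lVert z^{(\ell)}\rVert^2/N_\ell\le b^2(N_{\ell-1}/N_\ell)\lVert h^{(\ell-1)}\rVert^2/N_{\ell-1}$, and the factor $d_{\ell-1}/d_\ell$ is unbounded.

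Instead I will compute the second moment for a fixed $h=h^{(\ell-1)}$, conditioning on earlier layers. Each coordinate of $(\Lambda_\ell h)(g)$ is $\frac{1}{\sqrt{n_\ell}}\sum_i (W_i)_{j,\cdot}\,h(g_ig)$, which is Gaussian with variance
\[
\frac{1}{n_\ell d_{\ell-1}}\sum_{i=1}^{n_\ell}\lVert h(g_ig)\rVert^2 .
\]
Summing over $g$ and $j$, and using that $g\mapsto g_ig$ is a bijection of $G$, gives
\[
\E\lVert z^{(\ell)}\rVert^2 = d_\ell\,\frac{\lVert h\rVert^2}{d_{\ell-1}}, \qquad\text{i.e.}\qquad \E\frac{\lVert z^{(\ell)}\rVert^2}{N_\ell}=\frac{\lVert h\rVert^2}{N_{\ell-1}} .
\]
So the normalized norm is preserved in expectation and, after the activation, grows by at most the factor $\lVert\sigma'\rVert_\infty^2$.

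The main obstacle is upgrading this expectation bound to a high-probability bound without losing more than constants. $\lVert z^{(\ell)}\rVert^2$ is a Gaussian quadratic form in the weights $W$ of layer $\ell$. The simplest route is Markov's inequality at each of the $t$ layers with failure probability $\delta/(2t)$ each; since $t$ is constant, this gives $\lVert h^{(t)}\rVert^2/N_t\le(2t\lVert\sigma'\rVert_\infty^2/\delta)^t$ with probability at least $1-\delta/2$, which is a constant. If a sharper constant is wanted, Hanson--Wright can be used instead. The relevant quantities are the Frobenius norm, of order $\sqrt{d_\ell}\,\lVert h\rVert^2/d_{\ell-1}$, and the operator norm of the induced covariance, which is at most $\lVert h\rVert^2/d_{\ell-1}$. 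These give concentration within a constant factor with failure probability $e^{-\Omega(d_\ell)}$, which is negligible since $d_{\min}\ge c_G\log|G|$. Either route, followed by the Gaussian step for $u$ (with $\delta/2$ failure probability), proves Lemma~\ref{lem:intro-small-output} with $a$ depending only on $t$, $\lVert\sigma'\rVert_\infty$ and the target probability.
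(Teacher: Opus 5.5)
Your proof is correct, and its core computation matches the paper's. Both use the conditional variance $\frac{1}{n_\ell d_{\ell-1}}\sum_i \lVert h^{(\ell-1)}(g_ig)\rVert^2$ of each pre-activation coordinate. Both pass through the activation using $\sigma(0)=0$ and $\lVert\sigma'\rVert_\infty$, and both use $\E[H_b(f)^2\mid W]=\lVert H(f)\rVert^2/N_t$ for the last layer. You track the normalized norm $\lVert h^{(\ell)}\rVert^2/N_\ell$, while the paper tracks $a_\ell=\max_{g,j}\E[(h^{(\ell)}_{g,j})^2]$; this difference is only bookkeeping.

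You diverge at the step you call the main obstacle, and the paper shows it is unnecessary. Your identity $\E[\lVert z^{(\ell)}\rVert^2/N_\ell \mid h^{(\ell-1)}]=\lVert h^{(\ell-1)}\rVert^2/N_{\ell-1}$ holds for every realization of the earlier layers. The tower property therefore composes it to give $\E[\lVert h^{(t)}\rVert^2/N_t]\le\lVert\sigma'\rVert_\infty^{2t}$, hence $\E[H_b(f)^2]\le\lVert\sigma'\rVert_\infty^{2t}$. A single Markov inequality on $H_b(f)^2$ then gives $\lvert H_b(f)\rvert\le 10\lVert\sigma'\rVert_\infty^{t}$ with probability $0.99$. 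This is exactly the paper's proof, and no layerwise high-probability control is needed.

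Your per-layer Markov route is valid but strictly worse: it gives $a$ of order $(2t/\delta)^{t/2}\lVert\sigma'\rVert_\infty^{t}\sqrt{\log(1/\delta)}$, versus the paper's $\lVert\sigma'\rVert_\infty^{t}/\sqrt{\delta}$.

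Your Hanson--Wright variant does give something the paper's argument does not. It shows $\lVert h^{(t)}\rVert^2/N_t=O(C^t\lVert\sigma'\rVert_\infty^{2t})$ except with probability $te^{-\Omega(d_{\min})}$. Then $H_b(f)$ is conditionally Gaussian with constant variance, so $a$ need only grow like $\sqrt{\log(1/\delta)}$, for $\delta$ down to roughly $te^{-\Omega(d_{\min})}$. The cost is a heavier concentration tool and an extra $C^{t}$ factor in the variance.
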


\begin{lemma} \label{lem:intro-gradient} 
    With high probability, we have
    \[
    \lVert \nabla H_b(f) \rVert \geq C.
    \] 
\end{lemma}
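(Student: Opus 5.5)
We need to show that with high probability $\lVert \nabla H_b(f) \rVert \geq C$. Write the gradient by the chain rule. Let $D_\ell := \diag(\sigma'(z^{(\ell)}))$. Then
\[
\nabla H_b(f) = \Lambda_1^\top D_1 \Lambda_2^\top D_2 \cdots \Lambda_t^\top D_t u .
\]
The plan is to peel off the layers from the top and show that each layer loses at most a constant fraction of the squared norm in expectation.

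\textbf{Top layer.} Condition on all convolutional weights. Then $u$ is independent Gaussian with entries $\cN(0,1/N_t)$, so
\[
\E_u \lVert \nabla H_b(f)\rVert^2 = \frac{1}{N_t}\lVert M\rVert_F^2, \qquad M := \Lambda_1^\top D_1\cdots\Lambda_t^\top D_t .
\]
Since $\nabla H_b(f)=Mu$ is Gaussian given $M$, Hanson--Wright concentration gives $\lVert Mu\rVert^2 \geq \frac{1}{2}\lVert M\rVert_F^2/N_t$ with high probability once the effective rank $\lVert M\rVert_F^2/\lVert M\rVert_{op}^2$ is large. The upper bound of Theorem~\ref{thm:sin-val-intro}, applied in each layer together with $\lVert\sigma'\rVert_\infty=O(1)$, gives $\lVert M\rVert_{op}=O(1)$. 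So it suffices to show $\lVert M\rVert_F^2 \geq c' N_t$, i.e.\ that the effective rank is of order $N_t$.

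\textbf{Induction on layers.} Prove the Frobenius lower bound inductively. Let $M_\ell := \Lambda_\ell^\top D_\ell\cdots \Lambda_t^\top D_t$, viewed as a map from $\R^{N_t}$ to $\R^{N_{\ell-1}}$. The key one-layer claim is: for a fixed vector $v\in\R^{N_\ell}$, with high probability over $\Lambda_\ell$ (with $h^{(\ell-1)}$ fixed, and hence $D_\ell$ depending on $\Lambda_\ell$),
\[
\lVert \Lambda_\ell^\top D_\ell v\rVert^2 \gtrsim c^2\lVert v\rVert^2 .
\]
The mechanism is the symmetry assumption $\sigma'(r)^2+\sigma'(-r)^2\geq 2c^2$. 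The Gaussian weights are symmetric under $W\mapsto -W$, which flips the sign of $z^{(\ell)}$. Hence $\E\,\sigma'(z^{(\ell)}_i)^2\geq c^2$ coordinatewise, and so $\E\lVert D_\ell v\rVert^2\geq c^2\lVert v\rVert^2$.

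A cleaner route is to apply this one-layer claim directly to the vector $w_\ell := D_\ell\Lambda_{\ell+1}^\top\cdots D_t u$, going backwards. The obstacle is that $w_\ell$ depends on $\Lambda_\ell$ through later layers. To handle this, split $\Lambda_\ell = \Lambda_\ell^{\parallel}+\Lambda_\ell^{\perp}$, where $\Lambda_\ell^{\parallel}$ is the part determined by the action on $h^{(\ell-1)}$. Then $z^{(\ell)}$, and hence $D_\ell$ and everything above, depends only on $\Lambda_\ell^{\parallel}$, while the orthogonal part is a fresh Gaussian convolution. Conditioned on everything above, $\Lambda_\ell^{\perp\top}$ applied to a fixed vector $y$ has squared norm concentrating around roughly $\lVert y\rVert^2$ times a constant, using $d_{\ell-1}\geq c_w d_\ell$ and the Fourier decomposition of Section~\ref{sec:reps}.

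\textbf{Concentration of $\lVert D_\ell y\rVert^2$.} We still need $\lVert D_\ell y\rVert^2 \gtrsim c^2\lVert y\rVert^2$ with high probability, not just in expectation. I would use the decomposition
\[
\lVert D_\ell y\rVert^2=\sum_{g,j}\sigma'(z^{(\ell)}_j(g))^2\,y_j(g)^2 .
\]
Within each group element $g$, the $d_\ell$ output coordinates are independent given $h^{(\ell-1)}$, because the rows of the $W_i$ are independent. This gives concentration in terms of $\max_g$ over the blocks, but only if $y$ is not concentrated on a few coordinates. Delocalization of $y$ is the main obstacle. I would first try to argue that $y=\Lambda_{\ell+1}^\top\cdots u$ is delocalized: it is a random Gaussian vector pushed through operators of bounded norm, which gives $\lVert y\rVert_\infty^2 \lesssim \log|G|/N_\ell\cdot\lVert y\rVert^2$. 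Combined with $d_{\min}\geq c_G\log|G|$, this should suffice for a Bernstein bound.

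Chaining the $t=O(1)$ layers gives $\lVert\nabla H_b(f)\rVert^2\geq (c')^{t}\cdot\Omega(1)=: C^2$, with failure probability summed over the layers.
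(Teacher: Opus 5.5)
Your top-layer reduction to $\lVert J\rVert_F^2\gtrsim N_t$ is fine. The gap is in the backward induction meant to deliver that bound, specifically the decoupling step.

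The vector $y_\ell=\Lambda_{\ell+1}^\top D_{\ell+1}\cdots\Lambda_t^\top D_tu$ depends on $\Lambda_\ell$ through $h^{(\ell)}=\sigma(\Lambda_\ell h^{(\ell-1)})$. The split $\Lambda_\ell=\Lambda_\ell^{\parallel}+\Lambda_\ell^{\perp}$ you borrow from fully connected analyses leaves no fresh Gaussian in a convolutional layer. Because of weight sharing, $z^{(\ell)}=\Lambda_\ell h^{(\ell-1)}$ reveals, for every output channel $j$, the inner products of the weight vector $\omega_j=((W_i)_{jk})_{i,k}\in\R^{n_\ell d_{\ell-1}}$ with all $|G|$ patch vectors $\bigl(h^{(\ell-1)}_k(g_ig)\bigr)_{i,k}$, $g\in G$. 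The assumptions allow $|G|$ to be large compared to $n_\ell d_{\ell-1}$ (small kernels on a large group). For $n_\ell=1$ one simply has $z^{(\ell)}(g)=W_1h^{(\ell-1)}(g_1g)$, and the $|G|\ge d_{\ell-1}$ vectors $h^{(\ell-1)}(g)$ typically span $\R^{d_{\ell-1}}$. In that regime the patches span the whole weight space, so $\Lambda_\ell$ is a deterministic function of $z^{(\ell)}$ and $\Lambda_\ell^{\perp}=0$. In the opposite regime the conditioned part still has rank up to $|G|$ per channel, not one. So $\Lambda_\ell^{\parallel\top}D_\ell y_\ell$ cannot be discarded the way the rank-one term is for dense layers.

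The same coupling invalidates your concentration step: $\lVert D_\ell y_\ell\rVert^2$ is not a sum of conditionally independent terms given $h^{(\ell-1)}$, because $y_\ell$ is itself a function of $z^{(\ell)}$. Separately, pushing a Gaussian through bounded operators, as in Lemma~\ref{lem:small-M-infty}, gives only $\lVert y_\ell\rVert_\infty^2\lesssim\log(d_{\max}|G|)/N_t$. That is not the $\log|G|/N_\ell$ you state; the two differ by the factor $d_\ell/d_t$, which is not bounded by a constant under the assumptions.

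The missing idea is to propagate forward. Since $\lVert J^\top\rVert_F=\lVert J\rVert_F$, write $\lVert J\rVert_F^2=\E_r\lVert Jr\rVert^2$ with $r\in\{\pm1\}^{N_0}$. For fixed $r$, in $g^{(\ell)}=D_\ell\Lambda_\ell g^{(\ell-1)}$ both $g^{(\ell-1)}$ and $h^{(\ell-1)}$ depend only on $\Lambda_1,\dots,\Lambda_{\ell-1}$, so they are genuinely independent of $\Lambda_\ell$. The only remaining coupling is between $\sigma'(\alpha_j^\top\tilde h_g)$ and $\alpha_j^\top\tilde v_g$ within a single row, and the sign-flip argument of Lemma~\ref{lem:gaussian-rows-high-prob} handles it. The pairs $(\alpha_j^\top\tilde h_g,\alpha_j^\top\tilde v_g)$ are i.i.d.\ over the $d_\ell$ channels at each position $g$. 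Hence a Hoeffding count gives a per-block lower bound proportional to $\lVert\tilde v_g\rVert^2$, with no delocalization needed.

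This yields $\lVert D_\ell\Lambda_\ell v\rVert^2\ge0.1c^2\frac{d_\ell}{d_{\ell-1}}\lVert v\rVert^2$. Telescoping gives $\lVert Jr\rVert^2\ge(0.1c^2)^tN_t$ for each fixed $r$ with high probability. Markov's inequality over the weights then gives $\lVert J\rVert_F^2\ge\frac12(0.1c^2)^tN_t$. After that your Hanson--Wright step works, using the operator-norm bound from Theorem~\ref{thm:sin-val-intro}. The paper instead uses the small-ball estimate of Lemma~\ref{lem:technical-gradient}, which needs no operator-norm control but gives only constant probability.
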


\begin{lemma} \label{lem:intro-gradient-robustness}
    Let $B(f, 2a/C)$ be the ball of radius $2a/C$ centered at $f$. With high probability, for all $f' \in B(f, 2a/C)$ we have
    \[
    \lVert \nabla H_b(f') - \nabla H_b(f) \rVert \leq C/3.
    \]
\end{lemma}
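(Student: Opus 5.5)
We prove the statement by writing the gradient via the chain rule as a product of random operators and controlling each factor's deviation on the ball $B(f,2a/C)$. For an input $f'$ write $D_\ell(f') := \diag(\sigma'(z^{(\ell)}(f')))$. Then
\[
\nabla H_b(f') = \Lambda_1^\top D_1(f')\Lambda_2^\top D_2(f')\cdots \Lambda_t^\top D_t(f') u .
\]
The difference $\nabla H_b(f')-\nabla H_b(f)$ telescopes into $t$ terms. In the $\ell$-th term, $D_\ell(f)$ is replaced by $D_\ell(f')-D_\ell(f)$, the factors below layer $\ell$ are evaluated at $f$, and the factors above layer $\ell$ are evaluated at $f'$. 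It therefore suffices to show that each term has norm at most $C/(3t)$ with high probability, uniformly over $f'$ in the ball. Since $t$ is a constant, we may take the radius $r:=2a/C$ as small as needed by shrinking $a$ relative to $C$. This is legitimate because $C$ comes from Lemma~\ref{lem:intro-gradient}, and Lemma~\ref{lem:intro-small-output} holds with any constant $a$ at the cost of the failure probability. If instead $a$ and $C$ are fixed, we need the bound for a fixed constant radius, and the argument below must be done with care rather than smallness.

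\textbf{Step 1 (operator norms).} By Theorem~\ref{thm:sin-val-intro}, applied to each layer using $d_{\ell-1}\ge c_w d_\ell\ge c_w^2\log|G|$, with probability $\ge 1-0.01t$ we have $\|\Lambda_\ell\|\le b$ for all $\ell$. Since $\|D_\ell\|\le\|\sigma'\|_\infty$, every partial forward map $f'\mapsto z^{(\ell)}(f')$ is $(b\|\sigma'\|_\infty)^{\ell}$-Lipschitz. Hence $\|z^{(\ell)}(f')-z^{(\ell)}(f)\|_2=O(r)$ throughout the ball.

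\textbf{Step 2 (the key difficulty).} We need to bound a vector of the form
\[
M^\top (D_\ell(f')-D_\ell(f))\, w, \qquad w:=\Lambda_{\ell+1}^\top D_{\ell+1}\cdots u .
\]
A naive bound is $\|D_\ell(f')-D_\ell(f)\|_{op}\le\|\sigma''\|_\infty\|\Delta z\|_\infty$, but $\|\Delta z\|_\infty$ may be as large as $\|\Delta z\|_2=O(r)$ and no better. The better route exploits that $w$ is spread out. Entrywise,
\[
\bigl|(D_\ell(f')-D_\ell(f))w\bigr|_i\le\|\sigma''\|_\infty\,|\Delta z_i|\,|w_i|,
\]
so
\[
\|(D_\ell(f')-D_\ell(f))w\|_2\le\|\sigma''\|_\infty\,\|\Delta z\|_2\,\|w\|_\infty .
\]
It therefore suffices to show that $\|w\|_\infty=O(\sqrt{\log N_\ell/N_\ell})$ uniformly over the ball. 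Granting this, the $\ell$-th term has norm $O(r\sqrt{\log|G|/N_\ell})$, which is tiny, and the claim follows by combining with the operator-norm bounds of Step 1 on the factors $\Lambda_1^\top D_1\cdots$ below layer $\ell$.

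\textbf{Step 3 (sup-norm of backpropagated vectors).} This is the main obstacle: $w$ depends on $f'$ through $D_{\ell+1}(f'),\dots$. We proceed by downward induction on $\ell$. At the top, $u$ is Gaussian with $\|u\|_\infty=O(\sqrt{\log N_t/N_t})$ w.h.p. For the inductive step, condition on the lower-layer weights and note that $\Lambda_{\ell+1}^\top v$ has coordinates
\[
\frac{1}{\sqrt{n}}\sum_i W_i^\top v(g_i^{-1}g),
\]
which for fixed $v$ are Gaussian with variance $\|v\|^2/d_\ell$. For the fixed point $f$, a union bound over the $N_\ell$ coordinates gives $\|w\|_\infty=O(\sqrt{\log|G|/d_\ell})\cdot\|v\|/\sqrt{|G|}$ scaling, with $\|v\|=O(1)$. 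To make this uniform over $f'$, we write $w(f')=w(f)+(w(f')-w(f))$ and bound the perturbation recursively by the same entrywise trick. This yields a contraction-type recursion in which the perturbation of $\|w\|_\infty$ is controlled by $r$ times the previous sup-norm times $\|\Lambda^\top\|_{\ell_\infty\to\ell_\infty}$-like quantities. If that step fails, the fallback is an $\epsilon$-net over the ball restricted to the low-dimensional image, but we expect the recursive decomposition to suffice. Finally, we take a union bound over all $t$ layers and all events, together with the constants from Lemmas~\ref{lem:intro-small-output} and~\ref{lem:intro-gradient}.
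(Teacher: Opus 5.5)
Your Steps 1 and 2 match the paper's ingredients: the operator-norm bound from Theorem~\ref{thm:sin-val-intro}, and the entrywise bound $\lVert (D_\ell(f')-D_\ell(f))w\rVert\le\lVert\sigma''\rVert_\infty\lVert\Delta z\rVert\,\lVert w\rVert_\infty$. Step 3, however, is a genuine gap, and the order of your telescoping creates it. You evaluate the output-side factors at $f'$. So the vector $w$ multiplying $D_\ell(f')-D_\ell(f)$ depends on $f'$, and you need $\sup_{f'\in B(f,2a/C)}\lVert w(f')\rVert_\infty$ uniformly over a ball of dimension $N_0=|G|d$. Neither of your routes gives this.
\begin{itemize}
\item The recursion through $\lVert\Lambda_\ell^\top\rVert_{\ell_\infty\to\ell_\infty}$ does not contract. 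This quantity is the largest $\ell_1$-norm of a column of $\Lambda_\ell$, which has $n d_\ell$ entries distributed as $\cN(0,1/(nd_{\ell-1}))$, so it is of order $\sqrt{n}\,d_\ell/\sqrt{d_{\ell-1}}$, not $O(1)$. For example, if $n$ is comparable to $|G|$ and $d_{\ell-1}=\Theta(d_\ell)$, one step of your recursion turns $\lVert u\rVert_\infty\approx\sqrt{\log N_t/N_t}$ into a perturbation of order $r\sqrt{\log N_t}$, cancelling the $1/\sqrt{N_t}$ gain.
\item The $\epsilon$-net fallback fails because there is no low-dimensional structure. A net of the ball, or of its images in $\R^{N_k}$, has $e^{\Theta(|G|d)}$ points. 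A union bound over it pushes the Gaussian threshold to order $\sqrt{N}$, which destroys the gain.
\item Even your fixed-point bound is not justified as written. The vector $v=D_{\ell+1}(f)\Lambda_{\ell+2}^\top\cdots u$ depends on $\Lambda_{\ell+1}$ through $z^{(\ell+1)}=\Lambda_{\ell+1}h^{(\ell)}$ and through all higher layers. Hence $\Lambda_{\ell+1}^\top v$ is not Gaussian after conditioning only on the lower layers.
\end{itemize}

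The missing idea is to telescope in the opposite direction. With $\Phi_i(f)=u^\top D_t(f)\Lambda_t\cdots D_i(f)\Lambda_i$ and $\nabla H_b=\Phi_1$, write
\[
\Phi_i(f')-\Phi_i(f)=\bigl(\Phi_{i+1}(f')-\Phi_{i+1}(f)\bigr)D_i(f')\Lambda_i+\Phi_{i+1}(f)\bigl(D_i(f')-D_i(f)\bigr)\Lambda_i .
\]
Now the diagonal difference is multiplied only by $\Phi_{i+1}(f)$, taken at the fixed input. Every $f'$-dependent factor enters only through $\lVert D_i(f')\Lambda_i\rVert\le\lVert\sigma'\rVert_\infty M_s$, which holds for all $f'$ simultaneously, so no uniformity argument is needed.

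The sup-norm at the single point $f$ then comes from the randomness of $u$ alone. Conditioned on all convolutional weights, $\Phi_i(f)=u^\top B_i$ with $\lVert B_i\rVert\le(M_s\lVert\sigma'\rVert_\infty)^t$. Each coordinate is therefore Gaussian with variance at most $(M_s\lVert\sigma'\rVert_\infty)^{2t}/N_t$, and a union bound over at most $t\,d_{\max}|G|$ coordinates gives $M_\infty=O(\sqrt{\log|G|/N_t})$.

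Unrolling the recursion via Lemma~\ref{lem:recursive-helper}, with your Step 2 bound and $\lVert s_i(f')-s_i(f)\rVert=O(\Delta)$, yields $\lVert\nabla H_b(f')-\nabla H_b(f)\rVert=O(M_\infty\Delta)\to 0$ for the fixed radius $\Delta=2a/C$. Your ordering could be rescued by using $\lVert w(f')-w(f)\rVert_\infty\le\lVert w(f')-w(f)\rVert_2$, controlling that $\ell_2$ perturbation by the same recursion, and adding the $u$-only argument at $f$; but that amounts to the paper's proof.

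Finally, your aside that $a$ may be shrunk is not legitimate. $|H_b(f)|$ is typically of constant order, so the event $|H_b(f)|\le a$ becomes unlikely as $a\to 0$. The radius must be treated as a fixed constant, and all smallness must come from $M_\infty$.
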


Indeed, having those three lemmas, it is straightforward to prove that gradient flow of length $O(1)$ finds an adversarial example.
By Lemma~\ref{lem:intro-gradient}, Lemma~\ref{lem:intro-gradient-robustness} and the triangle inequality, we have:
\[
    \lVert \nabla H_b(f') \rVert
    \geq
    \lVert \nabla H_b(f) \rVert - \lVert \nabla H_b(f') - \nabla H_b(f)  \rVert
    \geq
    C - C/3
    =
    2C/3
\]
for all $f'\in B(f, 2a/C)$. Therefore, gradient flow of length $2a/C$ starting from $f$ decreases the output of $H_b$ by at least $\frac{2a}{C} \cdot \frac{2C}{3} = 4a/3   > a$, and thus flips the sign of $H_b$ by Lemma~\ref{lem:intro-small-output}. Lemma~\ref{lem:intro-small-output} follows from a standard concentration argument, so we focus on sketching the proof of the other two lemmas.

\subsection{Proof sketch of Lemma~\ref{lem:intro-gradient}}
The proof of Lemma~\ref{lem:intro-gradient} is pretty technical, so we give here a very high-level sketch of it. The main challenge is to to prove that $\lVert J \rVert_F$ is large, where $J:= J_H(f)$ is the Jacobian of the convolutional part of the network on the input $f$, and $\lVert \cdot \rVert_F$ denotes the Frobenius norm. Having that, since $\nabla H_b (f) = J^\top u$ where $u$ is the final fully connected layer, deducing that $\lVert \nabla H_b(f) \rVert \geq C$ is relatively simple. Observe that $J$ can be formulated as a multiplication of $2t$ many matrices of the form
\[
J = D_t \Lambda_t \ldots D_1 \Lambda_1
\]
where $D_\ell$ is the part representing the non-linear part of layer $\ell$, controlled by the activation. Since we assume that the number of layers $t$ is a constant, lower bounding $\lVert D_\ell \Lambda_\ell v \rVert$ provides a lower bound on $\lVert J v \rVert$, for some arbitrary vector $v$. Then, we use the lower bound on $\lVert J v \rVert$ and the identity
\[
\lVert J \rVert_F^2 = \E _r \mleft[ \lVert J r \rVert^2 \mright]
\]
(see e.g.\ \cite{hutchinson1989stochastic}) where $r \in \{\pm 1\}^{N_0}$ is a random Rademacher vector to lower bound $\lVert J \rVert_F^2$.

\subsection{Proof sketch of Lemma~\ref{lem:intro-gradient-robustness}}

The proof of Lemma~\ref{lem:intro-gradient-robustness} is also quite technical, so we briefly sketch the high-level idea. The first step is to control the difference $\lVert \nabla H_b(f') - \nabla H_b(f) \rVert$ via three separate terms, namely to show that:
\[
\lVert \nabla H_b(f') - \nabla H_b(f) \rVert \leq O \mleft( c_\infty \cdot C_s \cdot C_B \mright)
\]
where $c_\infty$ is a term controlled by the infinity norm of $\nabla H_b(f)$, $C_s$ is controlled by the singular values of the convolutional operators, and $C_B$ is controlled by the change in activation's derivative between $f$ and $f' \in B(f, 2a/C)$. The main idea is that we can show that $C_s,C_B$ are constants, while $c_\infty \to 0$ as $|G| \to \infty$. Having that in hand, the whole expression $\lVert \nabla H_b(f') - \nabla H_b(f) \rVert$ decreases as $|G|$ increases, and in particular is at most $C/3$ when $|G|$ is large enough, as desired. The fact that $c_\infty \to 0$ as $|G| \to \infty$ is implied from properties of gaussians and the initialization of the network. The main part in the proof that $C_s,C_B$ are constants is to upper bound the singular values of a linear convolutional operator, which is precisely what Theorem~\ref{thm:sin-val-intro} gives. As this theorem might also be of independent interest, we also briefly sketch its proof below.

\subsubsection{Proof sketch of Theorem~\ref{thm:sin-val-intro}}
Let $\Lambda: L^2(G, \R^d) \to L^2(G, \R^q)$ be the random convolutional linear operator defined in Section~\ref{sec:prem}. Theorem~\ref{thm:sin-val-intro} claims that this operator, represented as a large $|G|q \times |G| d$ matrix initialized with gaussians with relatively large (and independent of $|G|$) variance, has constant singular values w.h.p. The key idea allowing the proof of Theorem~\ref{thm:sin-val-intro} is to decompose $L^2(G, \R^d)$ into small subspaces of dimension $O(d)$ using a Fourier decomposition, such that $\Lambda$ acts separately on each subspace. Since each subspace is of dimension $O(d)$, we are able to use standard random matrix theory arguments in a sufficiently efficient way, separately on each subspace. Then, we use a union bound over all subspaces to derive the theorem. We now explain how to derive Theorem~\ref{thm:sin-val-intro} using this idea in more detail.

For each character $\chi:G \to \C$ of $G$ we define
\[
V_{\chi} = \{g \mapsto \Real(c \cdot \chi(g)): c \in \mathbb{C}\},
\]
and observe three crucial facts, that hold since $G$ is abelian:
\begin{enumerate}
    \item The dimension of $V_\chi \otimes \R^d$ is at most $2d$.
    \item $L^2(G,\R^d) = \bigoplus_{\chi} (V_\chi \otimes \R^d)$.
    \item $\Lambda(\mathbb{R}^d \otimes V_\chi) \subset \mathbb{R}^q \otimes V_\chi$ where  $\Lambda(\mathbb{R}^d \otimes V_\chi):= \{ \Lambda(f) : f \in \mathbb{R}^d \otimes V_\chi\}$.
\end{enumerate}
Therefore, we may define $\Lambda_\chi : V_\chi \otimes \R^d \to V_\chi \otimes \R^q $ as the restriction of  $\Lambda$ to the domain $V_\chi \otimes \R^d $, and the singular values of $\Lambda$ is the union of the  singular values of $\Lambda_\chi$ for all characters $\chi$ of $G$. Therefore, to prove Theorem~\ref{thm:sin-val-intro}, it suffices to show that for any $\chi$, the singular values of $\Lambda_\chi$ are bounded with probability at most $O(1)/|G|$. Since there are $|G|$ many distinct characters, a union bound over all characters will imply Theorem~\ref{thm:sin-val-intro}. So, it remains to  bound  the singular values of $\Lambda_\chi$ for an arbitrary character $\chi$. This can be done by relatively standard random matrix theory techniques, that are essentially good enough for our purposes since the dimension of $V_\chi \otimes \R^d$ is at most $2d$, and thus can be well-approximated by a small net, of size roughly $(O(1))^d$. 

\section{The singular values of a single linear convolutional layer} \label{sec:sin-val}
We start by considering a single linear convolutional layer. In other words, we consider a single $\Lambda_\ell$ in this section (with no activation). So, we drop the $\ell$ subscript from all notation, and denote the input domain of $\Lambda$ by  $L^2(G, \mathbb{R}^{d})$, and the output domain by  $L^2(G, \mathbb{R}^{q})$. Each $W_i$ parameterizing $\Lambda$ is a $q \times d$ matrix. The number $n$ is the number of offsets, where each $W_i$ operates in offset $i$. The number of input channels is $d$, and $q$ is the number of output channels. Let $g_1,\ldots, g_n \in G$ be distinct elements of $G$, which will function as the offsets. In this section, we use $\sigma_s$ (not to confuse with the activation $\sigma$) for the standard deviation of a gaussian. Recall that $\Lambda: L^2(G,\mathbb{R}^d) \to L^2(G,\mathbb{R}^q)$ is defined as follows:
\[
(\Lambda f) (g) = \frac{1}{\sqrt{n}} \sum_{i=1}^n W_i f(g_i g).
\]

Fix a character $\chi : G \to \mathbb{C}$, and let 
\[
V_{\chi} = \{g \mapsto \Real(c \cdot \chi(g)): c \in \mathbb{C}\}.
\]
We identify $\R^d\otimes V_\chi$ as
\[
\R^d\otimes V_\chi = \left\{ g\mapsto h(g)x : h\in V_\chi,\;x\in\R^d \right\}.
\]
Let $\Lambda_\chi$ be the restriction of $\Lambda$ to inputs from $\R^d\otimes V_\chi$.
We first prove the following, which will be used to bound the singular values of $\Lambda$.

\begin{lemma} \label{lem:restricted-sin-val}
    There exist universal positive constants $a,b,c_w, c_G$ such that $ a <b$, for which the following holds. Let $d$ such that $d \geq c_w q$ and $\min\{d,q\} \geq c_G \log |G|$, and let $\chi : G \to \mathbb{C}$ be a character. Then with probability at least $1-0.01/|G|$, the singular values of $\Lambda_\chi$ are all in $[a,b]$.
\end{lemma}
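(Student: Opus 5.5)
We first make $\Lambda_\chi$ explicit. For $f(g)=\Real(c\,\chi(g))x$ with $c\in\C$, $x\in\R^d$, the definition of $\Lambda$ gives
\[
(\Lambda f)(g)=\frac{1}{\sqrt n}\sum_{i=1}^n \Real\bigl(c\,\chi(g_i)\chi(g)\bigr)W_i x .
\]
It is cleaner to complexify. Every element of $\R^d\otimes V_\chi$ can be written as $g\mapsto \Real(\chi(g) z)$ with $z\in\C^d$. When $\chi$ is not real-valued this representation is unique, and since $\E_g[\chi(g)^2]=0$, the $L^2$ norm is $\lVert z\rVert^2/2$. Under this identification, $\Lambda_\chi$ acts as the complex $q\times d$ matrix
\[
M_\chi=\frac{1}{\sqrt n}\sum_{i=1}^n \chi(g_i)W_i ,
\]
that is, $z\mapsto M_\chi z$. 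Norms on both sides carry the same factor $1/2$, so the singular values of $\Lambda_\chi$ (as a real operator) equal those of $M_\chi$. When $\chi$ is real-valued ($\chi=\pm1$), $V_\chi$ is one-dimensional and $\Lambda_\chi$ is simply the real $q\times d$ matrix $M_\chi$.

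Next we identify the distribution of $M_\chi$. Since $|\chi(g_i)|=1$, write $\chi(g_i)=\cos\theta_i+\mathrm{i}\sin\theta_i$. Then $A=\Real M_\chi=\frac{1}{\sqrt n}\sum_i\cos\theta_i W_i$ and $B=\operatorname{Im}M_\chi=\frac{1}{\sqrt n}\sum_i \sin\theta_i W_i$ are jointly Gaussian with iid entries across matrix positions. Each entry has variance $\frac{1}{n}\sum_i\cos^2\theta_i/d$ and $\frac1n\sum_i\sin^2\theta_i/d$ respectively. The real and imaginary parts are correlated through $\frac1n\sum_i\cos\theta_i\sin\theta_i$, so $M_\chi$ is not exactly a standard complex Ginibre matrix.

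The cleanest way to handle this is to view $M_\chi$ as the real-linear map $\R^{2d}\to\R^{2q}$ given by the block matrix $\begin{pmatrix}A&-B\\ B&A\end{pmatrix}$. For a fixed unit $z=u+\mathrm{i}v$, the vector $M_\chi z$ is a Gaussian vector in $\C^q\cong\R^{2q}$. Its $q$ complex coordinates are iid, and each coordinate has a $2\times2$ real covariance $\Sigma_z$ satisfying
\[
\operatorname{tr}\Sigma_z=\E|(M_\chi z)_j|^2=\frac{1}{dn}\sum_i|\chi(g_i)|^2\lVert z\rVert^2=\frac1d .
\]
Moreover $\Sigma_z\preceq \frac1d I$. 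So $\lVert M_\chi z\rVert^2$ has mean $q/d$, and since it is a sum of $q$ independent subexponential terms, it concentrates with deviation probability $e^{-\Omega(q)}$ at constant relative error. We must also exclude a degenerate lower tail: if $\Sigma_z$ is rank one, we still have $q$ independent squared Gaussians of total variance $1/d$, so the lower tail $\Pr[\lVert M_\chi z\rVert^2<\frac{q}{4d}]\le e^{-c q}$ still holds.

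The scaling deserves a remark. The statement has $d\ge c_w q$, so it is the adjoint $M_\chi^*$, a $d\times q$ map, that is tall. The relevant quantities are therefore the $q$ singular values, at scale $\sqrt{d}\cdot\frac{1}{\sqrt d}=1$. Accordingly we run the argument for $M_\chi^* y$ with unit $y\in\C^q$. By the same computation, $\lVert M_\chi^* y\rVert^2$ has mean exactly $1$ and is a sum of $d$ independent terms, giving deviation probability $e^{-\Omega(d)}$.

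The key step is the standard net argument. Take a $1/4$-net $\mathcal N$ of the unit sphere of $\C^q\cong\R^{2q}$, of size at most $9^{2q}$. By the tail bounds, with probability $1-9^{2q}e^{-c_1 d}$ every $y\in\mathcal N$ satisfies $\frac12\le\lVert M_\chi^*y\rVert\le \frac32$. The upper bound extends to the whole sphere by the usual $\lVert M\rVert\le \frac{1}{1-1/4}\max_{\mathcal N}$, giving $b=2$. For the lower bound, for any unit $y$ with net point $y_0$ we have
\[
\lVert M_\chi^* y\rVert\ge \lVert M_\chi^*y_0\rVert-\tfrac14\lVert M_\chi\rVert\ge \tfrac12-\tfrac12\cdot\ldots .
\]
To make this positive we use a finer net ($\epsilon$ small relative to $a/b$), which only changes constants. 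Because $d\ge c_w q$ with $c_w$ large, $e^{O(q)}e^{-c_1 d}\le e^{-c_1 d/2}$. Because $d\ge c_G\log|G|$, this is at most $0.01/|G|$. The real characters are handled identically with a real net. I expect the main obstacle to be this lower tail/anticoncentration uniformly over the non-isotropic covariances $\Sigma_z$: we need a tail constant independent of $\chi$, which follows from the trace identity together with $\Sigma_z\preceq \frac1d I$, via a Bernstein bound. A secondary point is verifying that the isometric identification holds, including the factor $1/2$ and the real characters.
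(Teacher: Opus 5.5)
Your proof is correct, but it takes a different route from the paper's.

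The paper never writes $\Lambda_\chi$ in coordinates. It works inside $\R^d\otimes V_\chi\subset L^2(G,\R^d)$ and observes that, for a fixed net point $f_0$, each output fiber $(\Lambda f_0)(g)$ is an \emph{isotropic} real Gaussian $\cN(0,\sigma_s^2(f_0,g)I_q)$. It bounds each fiber's norm from above for $\Lambda$, using a net on the $2d$-dimensional domain with large deviations, and from below for $\Lambda^\top$, using a $2q$-dimensional net. It then union-bounds over net points \emph{and} over all locations $g\in G$, and sums using translation invariance, $\sum_g\sigma_s^2(f_0,g)=1/d$.

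You instead complexify. Since $\E_g\chi(g)^2=0$ for non-real $\chi$, the map $z\mapsto\Real(\chi(\cdot)z)$ is an isometry up to a common scalar. Under it, $\Lambda_\chi$ becomes the single $q\times d$ matrix $M_\chi=n^{-1/2}\sum_i\chi(g_i)W_i$, whose entries are independent (non-circular) complex Gaussians with second moment $1/d$. You then run the textbook net argument on the tall adjoint $M_\chi^*$, where $\lVert M_\chi^*y\rVert^2$ is a sum of $d$ independent terms. The $2\times 2$ covariances have trace $1/d$, so their top eigenvalue is at least $1/(2d)$, and this correctly handles the degenerate lower tail. The refinement of the net below $1/4$ for the lower bound is also fine.

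What your route buys:
\begin{itemize}
\item A transparent picture: per character, the operator is just a Gaussian matrix, so its singular values sit near $1\pm O(\sqrt{q/d})$.
\item Both bounds come from a single net of size $e^{O(q)}$.
\item There is no union over locations $g$, so the per-character failure probability is $e^{-\Omega(d)}$, and $\log|G|$ enters only through the final union over characters.
\item The constants are much better than the paper's $a=0.05$, $b=30$.
\end{itemize}

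What the paper's route buys: it avoids the explicit identification and the non-isotropic covariances $\Sigma_z$ altogether, since its fibers are exactly isotropic. It also treats real and non-real characters uniformly through $\dim(\R^d\otimes V_\chi)\le 2d$. The price is looser constants and the extra union bound over $G$.
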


In order to prove the lemma, we will use the following known result.

\begin{theorem}\label{thm:Gauss_norm_concentration}
Suppose that $X\sim\cN(0,\sigma_s^2I_d)$. Then, $\sigma_s(\sqrt{d} - \Theta(1)) \le \E \|x\|\le  \sigma_s\sqrt{d}$. Furthermore, $\Pr(\left|\|x\|-\E\|x\|\right| > t\sigma_s)\le 2e^{-\frac{t^2}{2}}$
\end{theorem}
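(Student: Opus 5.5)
The statement to prove is Theorem~\ref{thm:Gauss_norm_concentration}. By scaling $x=\sigma_s y$ with $y\sim\cN(0,I_d)$, it suffices to treat $\sigma_s=1$. The plan has three parts: the upper bound on $\E\|y\|$ from Jensen, the concentration inequality from Gaussian concentration of Lipschitz functions, and the lower bound on $\E\|y\|$ from combining the two through a variance bound.

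\textbf{Upper bound on the mean.} Since $\E\|y\|^2=d$, Jensen's inequality gives $\E\|y\|\le\sqrt{\E\|y\|^2}=\sqrt d$.

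\textbf{Concentration.} The map $y\mapsto\|y\|$ is $1$-Lipschitz with respect to the Euclidean norm. By the Gaussian concentration inequality for Lipschitz functions (Tsirelson--Ibragimov--Sudakov, or Gaussian log-Sobolev plus the Herbst argument),
\[
\Pr\bigl(\bigl|\|y\|-\E\|y\|\bigr|>t\bigr)\le 2e^{-t^2/2}.
\]
This is the main nontrivial ingredient. I would cite it rather than reprove it. If a self-contained argument were wanted, I would follow the Maurey--Pisier interpolation route, which gives the weaker constant $e^{-2t^2/\pi^2}$; getting the sharp constant $1/2$ requires log-Sobolev.

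\textbf{Lower bound on the mean.} Let $\mu=\E\|y\|$. Then
\[
d-\mu^2=\E\|y\|^2-(\E\|y\|)^2=\Var\|y\|.
\]
Integrating the tail bound gives
\[
\Var\|y\|=\int_0^\infty 2t\,\Pr\bigl(\bigl|\|y\|-\mu\bigr|>t\bigr)\,dt\le\int_0^\infty 4t\,e^{-t^2/2}\,dt=4.
\]
Alternatively, the Gaussian Poincaré inequality gives $\Var\|y\|\le1$ directly, since the gradient of the norm has length $1$. Hence $\mu^2\ge d-4$. For $d\ge 8$,
\[
\mu\ \ge\ \sqrt{d-4}\ \ge\ \sqrt d-\frac{4}{\sqrt d}\ \ge\ \sqrt d-2.
\]
For $d<8$, the bound $\mu\ge\sqrt d-\Theta(1)$ holds trivially once the constant is at least $\sqrt 8$, because $\mu\ge 0$. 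This establishes $\sqrt d-\Theta(1)\le\E\|y\|\le\sqrt d$, and rescaling by $\sigma_s$ finishes the proof.
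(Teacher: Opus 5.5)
Your argument is correct. The paper gives no proof of this statement: it introduces it as ``the following known result'' and uses it as a black box, so there is no competing route to compare with. What you wrote is the standard derivation. Jensen gives $\E\|y\|\le\sqrt d$, the Tsirelson--Ibragimov--Sudakov inequality gives the two-sided tail $2e^{-t^2/2}$, and integrating that tail gives $d-\mu^2=\Var\|y\|\le 4$, hence the lower bound on the mean.

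This also delivers what the paper uses downstream. Lemma~\ref{lem:upper-bound-on-net} needs $\E\|x\|\le\sigma_s\sqrt q$. Lemma~\ref{lem:lower-bound-on-net} needs $\E\|x\|\ge\sigma_s\sqrt{d/2}$, which your inequality $\mu^2\ge d-4$ gives directly once $d\ge 8$.

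Two minor remarks:
\begin{enumerate}
\item If you use the Poincar\'e bound $\Var\|y\|\le 1$ instead, you get $\mu\ge\sqrt{d-1}\ge\sqrt d-1$ for every $d\ge 1$, with no case split.
\item Your aside that the sharp exponent $t^2/2$ ``requires log-Sobolev'' is not accurate. The original Tsirelson--Ibragimov--Sudakov proof used stochastic calculus. Separately, the Ornstein--Uhlenbeck covariance identity $\mathrm{Cov}(F,G)=\int_0^\infty e^{-s}\,\E\langle\nabla F,P_s\nabla G\rangle\,ds$, with $P_s$ the Ornstein--Uhlenbeck semigroup and $G=e^{\lambda F}$, yields $\E e^{\lambda(F-\E F)}\le e^{\lambda^2/2}$ directly. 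Since you cite the inequality rather than reprove it, this does not affect your proof.
\end{enumerate}
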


We will now prove Lemma~\ref{lem:restricted-sin-val}. Fix a character $\chi$.

\subsection{Upper bound on singular values}
We begin with the upper bound.
Let $d' \leq 2d$ be the dimension of $\R^d\otimes V_\chi$, and let $S^{d' -1}$ be the unit sphere in $\R^d\otimes V_\chi$. Fix $\epsilon \in (0,1/3)$, and let $N_\epsilon$ be an $\epsilon$-net for $S^{d'-1}$ of size at most $\mleft( \frac{3}{\epsilon} \mright)^{2d}$. It is known that such a net exists (see e.g.\ \cite{vershynin2018high}).
First, we prove a bound on $(\Lambda f_0) (g)$ for any $f \in N_\epsilon, g \in G$. Note that $(\Lambda f_0) (g) \sim \cN(0, \sigma_s^2(f_0,g) I_q)$, where $\sigma_s^2(f_0,g) = \frac{1}{nd}\sum_{i=1}^n \|f_0(g_ig)\|^2$.

\begin{lemma} \label{lem:upper-bound-on-net}
    The probability that there exists $f_0 \in N_{\epsilon}$ and $g \in G$ for which $\lVert (\Lambda f_0)(g) \rVert > 2\sigma_s(f_0,g)\sqrt{10d/\epsilon}$ is at most $2e^{-2d/\epsilon}$.
\end{lemma}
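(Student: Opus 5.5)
The plan is a union bound over the pairs $(f_0,g)\in N_\epsilon\times G$, with the Gaussian concentration of Theorem~\ref{thm:Gauss_norm_concentration} applied to each pair. As noted just before the statement, for fixed $f_0$ and $g$ the vector $(\Lambda f_0)(g)=\frac{1}{\sqrt n}\sum_i W_i f_0(g_ig)$ is a centered Gaussian in $\R^q$ with covariance $\sigma_s^2(f_0,g)I_q$. This holds because the rows of the $W_i$ are independent with i.i.d.\ $\cN(0,1/d)$ entries. If $\sigma_s(f_0,g)=0$, the vector is identically zero and the event is impossible, so we may assume $\sigma_s:=\sigma_s(f_0,g)>0$.

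\textbf{Single pair.} By Theorem~\ref{thm:Gauss_norm_concentration} we have $\E\|(\Lambda f_0)(g)\|\le\sigma_s\sqrt q\le \sigma_s\sqrt d$, using $q\le d$, which follows from $d\ge c_wq$ once $c_w\ge1$. Apply the deviation bound with $t=\sqrt{10d/\epsilon}$. Since $\epsilon<1/3$ gives $\sqrt d\le\sqrt{10d/\epsilon}$, we get
\[
\Pr\mleft(\|(\Lambda f_0)(g)\|>2\sigma_s\sqrt{10d/\epsilon}\mright)\le \Pr\mleft(\|(\Lambda f_0)(g)\|-\E\|(\Lambda f_0)(g)\|>\sigma_s\sqrt{10d/\epsilon}\mright)\le 2e^{-5d/\epsilon}.
\]

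\textbf{Union bound.} There are at most $|G|(3/\epsilon)^{2d}$ pairs, so the failure probability is at most $2\exp\mleft(\ln|G|+2d\ln(3/\epsilon)-5d/\epsilon\mright)$. It therefore suffices to show
\[
\ln|G|+2d\ln(3/\epsilon)\le 3d/\epsilon .
\]
Write $x=3/\epsilon>9$, so the target is $\ln|G|+2d\ln x\le dx$.

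\textbf{Calibrating constants.} This last inequality is the only real step. For $x\ge 9$ we have $2\ln x\le x/2$: at $x=9$ this reads $4.39\le 4.5$, and the left side grows more slowly thereafter. Hence $2d\ln x\le dx/2$. For the remaining term, the hypothesis $d\ge c_G\log|G|$ with $c_G\ge 1/4$ gives $\ln|G|\le 4.5d\le dx/2$. Adding the two bounds gives the claim.

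The main obstacle is just this bookkeeping: the Gaussian tail $e^{-5d/\epsilon}$ must beat both the net cardinality $e^{2d\ln(3/\epsilon)}$ and the factor $|G|$. That is exactly where $d\gtrsim\log|G|$ and the slack factor $2$ in the threshold are used.
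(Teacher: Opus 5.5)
Your proposal is correct and follows the paper's proof essentially step for step. You apply Gaussian norm concentration to each pair with $t=\sqrt{10d/\epsilon}$ to get $2e^{-5d/\epsilon}$, then take a union bound over $N_\epsilon\times G$, absorbing the factor $|G|$ via $\log|G|\le d/c_G$ with $c_G=c_w^2\ge 1/4$. Your explicit calibration ($2\ln x\le x/2$ and $\ln|G|\le dx/2$ for $x=3/\epsilon>9$) simply spells out the inequality the paper asserts in one line, and using $q\le d$ (i.e.\ $c_w\ge 1$) instead of the paper's weaker $c_w\ge \epsilon/10$ is harmless given the final choice $c_w=20000$.
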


\begin{proof}
    Fix $f_0 \in N_{\epsilon}$ and $g \in G$, and let $x = (\Lambda f_0)(g)$, $\sigma_s := \sigma_s(f_0,g)$. As mentioned above, we have $x\sim \cN(0,\sigma_s^2 I_q)$. By Theorem~\ref{thm:Gauss_norm_concentration}, we have $\Pr[\lVert x \rVert > \E[\lVert x \rVert] + t \sigma_s] \leq 2e^{-\frac{t^2}{2}}$, which implies $\Pr[\lVert x \rVert > \sigma_s(\sqrt{q} + t)] \leq 2e^{-\frac{t^2}{2}}$. Choose $t = \sqrt{10 \ d/\epsilon}$, since $d \geq c_w q$ by assumption, we conclude:
    \[
    \Pr \mleft[ \lVert x \rVert > 2 \sigma_s \cdot \sqrt{10 d/\epsilon}  \mright]
    \leq
    \Pr \mleft[\lVert x \rVert > \sigma_s (\sqrt{q} + \sqrt{10d/ \epsilon}) \mright]
    \leq
    2e^{-5d/\epsilon},
    \]
    which holds for all $c_w \geq \epsilon/10$.
    By a union bound, the probability that there exist $f_0 \in N_{\epsilon}$ and $g \in G$ for which $\lVert (\Lambda f_0)(g) \rVert > 2\sigma_s(f_0,g)\sqrt{10d/\epsilon}$ is at most
    \[
    |N_\epsilon| |G| 2e^{-5d/\epsilon} \leq 2e^{2d \log (3/\epsilon) + d/c_w^2 - 5d/\epsilon} \leq 2e^{-2d/\epsilon},
    \]
    which holds for all $c_w \geq 1/2$.
\end{proof}

We can now prove the upper bound.

\begin{lemma}\label{lem:sin-upper-bound}
    Assuming that the bad event of Lemma~\ref{lem:upper-bound-on-net} does not hold, we have $\lVert \Lambda_\chi \rVert \leq \frac{\sqrt{40/\epsilon}}{1-\epsilon}$.
\end{lemma}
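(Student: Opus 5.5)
The plan is the standard $\epsilon$-net argument: first bound $\lVert \Lambda f_0\rVert$ for every net point $f_0 \in N_\epsilon$, then extend to the whole sphere by the usual approximation trick. Throughout, norms on $L^2(G,\R^d)$ are taken as flattened Euclidean norms, so that $\lVert f\rVert^2 = \sum_{g\in G}\lVert f(g)\rVert^2$. I also use that $\Lambda f_0 \in \R^q\otimes V_\chi$ (fact 3 of Section~\ref{sec:sketch}), so $\lVert \Lambda_\chi f_0\rVert = \lVert \Lambda f_0\rVert$.

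\textbf{Step 1 (net points).} Fix $f_0\in N_\epsilon$, so $\lVert f_0\rVert = 1$. Since the bad event of Lemma~\ref{lem:upper-bound-on-net} does not hold, for every $g\in G$ we have $\lVert (\Lambda f_0)(g)\rVert^2 \le 4\sigma_s^2(f_0,g)\cdot 10d/\epsilon$. Summing over $g$ and substituting $\sigma_s^2(f_0,g) = \frac{1}{nd}\sum_{i=1}^n \lVert f_0(g_ig)\rVert^2$ gives
\[
\lVert \Lambda f_0\rVert^2 \le \frac{40}{\epsilon n}\sum_{i=1}^n \sum_{g\in G} \lVert f_0(g_i g)\rVert^2 .
\]
For each fixed $i$, the map $g\mapsto g_ig$ is a bijection of $G$, so the inner sum equals $\lVert f_0\rVert^2 = 1$. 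Hence $\lVert \Lambda f_0\rVert \le \sqrt{40/\epsilon}$ for every $f_0\in N_\epsilon$. The factor $d$ cancels exactly, which is what makes the bound dimension-free.

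\textbf{Step 2 (extension to the sphere).} Let $M := \lVert \Lambda_\chi\rVert$, which is finite since $\Lambda_\chi$ is a linear map on a finite-dimensional space, and pick $f \in S^{d'-1}$ with $\lVert \Lambda f\rVert = M$. Choose $f_0 \in N_\epsilon$ with $\lVert f - f_0\rVert \le \epsilon$. Because $f - f_0 \in \R^d\otimes V_\chi$, we get $\lVert \Lambda(f-f_0)\rVert \le \epsilon M$. The triangle inequality then gives
\[
M \le \lVert \Lambda f_0\rVert + \epsilon M \le \sqrt{40/\epsilon} + \epsilon M ,
\]
and rearranging yields $M \le \frac{\sqrt{40/\epsilon}}{1-\epsilon}$, as claimed.

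I expect no real obstacle here. The only points needing care are the norm convention (using the sum over $G$, which matches the flattening, rather than the averaged inner product) and the translation-invariance identity in Step 1.
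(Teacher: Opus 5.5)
Your proof is correct and follows the paper's argument: you bound each net point by summing the per-$g$ bound over $G$ and using the permutation identity $\sum_{g}\sigma_s^2(f_0,g)=1/d$, then pass to the whole sphere by the standard $\epsilon$-net step, which you derive inline (maximizer plus triangle inequality) where the paper simply cites the inequality from Vershynin. Your appeal to fact 3 is harmless but unnecessary, since $\Lambda_\chi f_0=\Lambda f_0$ holds by definition of the restriction.
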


\begin{proof}
    Fix $f_0 \in N_\epsilon$. We first show that $\lVert \Lambda f_0 \rVert$ is small. Recall that $\lVert (\Lambda f_0)(g) \rVert \leq 2 \sigma_s(f_0, g) \sqrt{10d/\epsilon}$ for all $g \in G$. Therefore, by definition:
    \[
    \lVert \Lambda f_0 \rVert^2 = \sum_{g \in G} \lVert (\Lambda f_0)(g) \rVert^2 \leq \frac{40 d}{\epsilon} \sum_{g \in G} \sigma_s^2(f_0,g).
    \]
    Again, by definition:
    \[
    \sum_{g \in G} \sigma_s^2(f_0,g)
    =
    \sum_{g \in G} \frac{1}{nd}\sum_{i=1}^n \|f_0(g_ig)\|^2
    =
    \frac{1}{nd}\sum_{i=1}^n \sum_{g \in G} \|f_0(g_ig)\|^2
    =
    1/d,
    \]
    since for each $i$, the mapping $g \mapsto g_ig$ is a permutation of $G$ and therefore $\sum_{g \in G} \|f_0(g_ig)\|^2 = \lVert f_0 \rVert^2 =1$.
    Combining the two inequalities gives $\lVert \Lambda f_0 \rVert \leq \sqrt{40/\epsilon}$. Now, we use the $\epsilon$-net to bootstrap the bound (in a slightly weaker form) to hold for $\lVert \Lambda_\chi \rVert$, instead of only for $\sup_{f_0 \in N_\epsilon} \lVert \Lambda f_0 \rVert$. For that, we use the known (see e.g.\ \cite{vershynin2018high}) inequality
    \[
    \lVert \Lambda_\chi \rVert \leq \frac{1}{1-\epsilon} \sup_{f_0 \in N_\epsilon} \lVert \Lambda f_0 \rVert \leq \frac{\sqrt{40/\epsilon}}{1-\epsilon},
    \]
    as required.
\end{proof}

We note here that for the upper bound to hold, we just need $c_w \geq 1/2$, as shown in the proof of Lemma~\ref{lem:upper-bound-on-net}. As explained in Section~\ref{sec:sketch}, we only need the upper bound on the singular values for our proof of Theorem~\ref{thm:main-intro} to go through, and the lower bound is proved for completeness. That is, assuming $d \geq q/2$ suffices. The lower bound proved in the next section, requires a larger $c_w$ value.

\subsection{Lower bound on singular values}
We assume that the upper bound proved in the previous section indeed holds, which happens with high probability.
For the lower bound, it will be convenient to consider $\Lambda^{\top}$ instead of $\Lambda$. As in the previous section, we denote the restriction of $\Lambda^{\top}$ to $\R^q\otimes V_\chi$ by $\Lambda^\top_\chi$.
Let $q' \leq 2q$ be the dimension of $\R^q\otimes V_\chi$, and let $S^{q' -1}$ be the unit sphere in $\R^q\otimes V_\chi$. Let $N_\delta$ be a $\delta$-net for $S^{q'-1}$ of size at most $\mleft( \frac{3}{\delta} \mright)^{2q}$, where $\delta \in (0,1/3]$. It is known that such a net exists.

As we did for the upper bound, we first prove a lower bound on $(\Lambda^\top f_0) (g)$ for any $f \in N_\delta, g \in G$. Note that $(\Lambda^\top f_0) (g) \sim \cN(0, \sigma_s^2(f_0,g) I_d)$, where here, $\sigma_s^2(f_0,g) = \frac{1}{nd}\sum_{i=1}^n \|f_0(g_i^{-1}g)\|^2$.

\begin{lemma} \label{lem:lower-bound-on-net}
    Suppose that $\delta = 0.001$ and $c_w=20000$. The probability that there exists $f_0 \in N_{\delta}$ and $g \in G$ for which $\lVert (\Lambda^\top f_0)(g) \rVert < \sigma_s(f_0,g)\sqrt{d/4}$ is at most $2e^{-d/400}$.
\end{lemma}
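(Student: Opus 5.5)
Fix $f_0 \in N_\delta$ and $g \in G$, and set $x = (\Lambda^\top f_0)(g)$ and $\sigma_s := \sigma_s(f_0,g)$. We mirror the proof of Lemma~\ref{lem:upper-bound-on-net}, this time using the lower tail in Theorem~\ref{thm:Gauss_norm_concentration}.

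First we justify that $x$ is Gaussian with the stated law. We have $(\Lambda^\top f_0)(g) = \frac{1}{\sqrt{n}}\sum_{i} W_i^\top f_0(g_i^{-1}g)$. The matrices $W_i$ are independent with i.i.d.\ $\cN(0,1/d)$ entries. For a fixed vector $y$, the vector $W_i^\top y$ is distributed as $\cN(0,\|y\|^2/d \cdot I_d)$. Summing the independent terms gives $x \sim \cN(0,\sigma_s^2 I_d)$ with $\sigma_s^2 = \frac{1}{nd}\sum_i \|f_0(g_i^{-1}g)\|^2$, as stated before the lemma. If $\sigma_s = 0$ the event is vacuous, so assume $\sigma_s>0$.

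By Theorem~\ref{thm:Gauss_norm_concentration}, $\E\|x\| \ge \sigma_s(\sqrt d - C_0)$ for a universal constant $C_0$. Also $\Pr[\|x\| < \E\|x\| - t\sigma_s] \le 2e^{-t^2/2}$. Take $t = \sqrt{d}/2 - C_0$. Then
\[
\sigma_s(\sqrt d - C_0 - t) = \sigma_s\sqrt{d}/2 = \sigma_s\sqrt{d/4},
\]
so
\[
\Pr\bigl[\|x\| < \sigma_s\sqrt{d/4}\bigr] \le 2e^{-(\sqrt d/2 - C_0)^2/2}.
\]
Since $d \ge c_w q \ge c_w c_G\log|G|$ is large, we have $\sqrt d/2 - C_0 \ge \sqrt d/2.2$, say. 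The bound is then at most $2e^{-d/10}$. Any constant in the exponent noticeably larger than what the union bound needs suffices, so there is slack.

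Finally, take a union bound over the $|N_\delta|\,|G| \le (3000)^{2q}|G|$ pairs. Using $q \le d/c_w = d/20000$ and $\log|G| \le q/c_G \le d/(c_G c_w)$, we get
\[
\log(|N_\delta||G|) \le \frac{2d\log 3000}{20000} + \frac{d}{c_G c_w} \le 0.001\,d + o(d) .
\]
Here $2\log 3000 \approx 16$, so the first term is about $0.0008\,d$. The total failure probability is therefore at most $2\exp(-d/10 + 0.001d) \le 2e^{-d/400}$.

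The only genuine obstacle is the bookkeeping of constants. The net size is exponential in $q$ rather than $d$, and the choice $c_w = 20000$ with $\delta = 0.001$ is exactly what makes $q\log(3/\delta)$ negligible against the Gaussian lower-tail exponent of order $d$. The other constant to handle is the additive $\Theta(1)$ in $\E\|x\|$, which is absorbed because $d \ge c_G\log|G|$ is assumed large.
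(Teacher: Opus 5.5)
Your proof is correct and is essentially the paper's argument: the lower tail of Theorem~\ref{thm:Gauss_norm_concentration} for $x\sim\cN(0,\sigma_s^2 I_d)$, then a union bound over the at most $(3/\delta)^{2q}|G|$ pairs, where $q\le d/c_w$ makes the net negligible. The differences are cosmetic. You take $t=\sqrt d/2-C_0$ to get a per-pair tail of $2e^{-d/10}$, whereas the paper uses $\E\|x\|\ge\sigma_s\sqrt{d/2}$ with $t=\sqrt{d/100}$ to get $2e^{-d/200}$. Also, the term $d/(c_Gc_w)$ you label $o(d)$ is really a small fixed multiple of $d$, but it is harmlessly absorbed by the slack.
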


\begin{proof}
    Fix $f_0 \in N_{\delta}$ and $g \in G$, and let $x = (\Lambda^{\top} f_0)(g)$, $\sigma_s := \sigma_s(f_0,g)$. As mentioned above, we have $x\sim \cN(0,\sigma_s^2 I_d)$. By Theorem~\ref{thm:Gauss_norm_concentration}, we have $\Pr[\lVert x \rVert < \E[\lVert x \rVert] - t \sigma_s] \leq 2e^{-\frac{t^2}{2}}$, which implies $\Pr[\lVert x \rVert < \sigma_s(\sqrt{d/2} - t)] \leq 2e^{-\frac{t^2}{2}}$.
    Choose $t = \sqrt{d/(100)}$, and then:
    \[
    \Pr \mleft[ \lVert x \rVert < \sigma_s \sqrt{ d/4}  \mright]
    \leq
    \Pr \mleft[\lVert x \rVert < \sigma_s (\sqrt{d/2} - \sqrt{d/(100)}) \mright]
    \leq
    2e^{-d/200}.
    \]
    By a union bound, the probability that there exist $f_0 \in N_{\delta}$ and $g \in G$ for which $\lVert (\Lambda^\top f_0)(g) \rVert < \sigma_s(f_0,g) \sqrt{ d/4} $ is at most
    \[
    |N_\delta| |G| 2e^{-d/200} \leq 2e^{2(d/c_w) \log (3/\delta) + d/c_w^2 - d/200} \leq 2e^{-d/400},
    \]
    where the last inequality holds for our choice of $c_w, \delta$.
\end{proof}

We now prove the desired lower bound with the choice of $\delta,c_w$ in Lemma~\ref{lem:lower-bound-on-net}. First, we will need the following claim. Let $s_{\min}(N_\delta) := \inf_{f_0 \in N_\delta} \lVert \Lambda^\top f_0 \rVert$.

\begin{lemma} \label{lem:lower-bound-in-terms-of-net}
    We have $s_{\min}(\Lambda^\top_\chi) \geq s_{\min}(N_\delta) - \frac{\sqrt{40/\epsilon}}{1-\epsilon} \delta$.
\end{lemma}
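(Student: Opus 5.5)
The plan is the standard $\epsilon$-net approximation for the smallest singular value, using the operator-norm bound already obtained. First I record that $\Lambda^\top$ maps $\R^q\otimes V_\chi$ into $\R^d\otimes V_\chi$. This holds because $\Lambda^\top$ is again a translation-equivariant (convolutional) operator, with offsets $g_i^{-1}$ and weights $W_i^\top$, so it preserves isotypic components. Consequently $\Lambda^\top_\chi$ is exactly the adjoint of $\Lambda_\chi$, viewed as a map between these finite-dimensional subspaces, and
\[
\lVert \Lambda^\top_\chi \rVert = \lVert \Lambda_\chi \rVert \le \frac{\sqrt{40/\epsilon}}{1-\epsilon},
\]
by Lemma~\ref{lem:sin-upper-bound}. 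Here we use the standing assumption of this subsection that the upper-bound event holds.

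Next I take an arbitrary unit vector $f\in S^{q'-1}\subset \R^q\otimes V_\chi$. By the net property there is $f_0\in N_\delta$ with $\lVert f-f_0\rVert\le\delta$. The triangle inequality then gives
\[
\lVert \Lambda^\top f\rVert \ge \lVert \Lambda^\top f_0\rVert - \lVert \Lambda^\top_\chi (f-f_0)\rVert \ge s_{\min}(N_\delta) - \lVert \Lambda^\top_\chi\rVert\,\delta .
\]
In the second step, $f-f_0$ lies in $\R^q\otimes V_\chi$, so the restricted norm applies. Substituting the bound from the first step and taking the infimum over $f$ yields
\[
s_{\min}(\Lambda^\top_\chi)=\inf_{\lVert f\rVert=1}\lVert\Lambda^\top f\rVert \ge s_{\min}(N_\delta)-\frac{\sqrt{40/\epsilon}}{1-\epsilon}\,\delta .
\]
This is the claim.

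The only point needing care is the first step. One must check that the operator norm controlling the error term is that of the restriction $\Lambda^\top_\chi$, not of the full $\Lambda^\top$, and that this equals $\lVert\Lambda_\chi\rVert$. Both facts follow from invariance of the isotypic components under $\Lambda$ and $\Lambda^\top$, together with their mutual orthogonality. With these in hand, $\Lambda_\chi$ and $\Lambda^\top_\chi$ are genuine adjoints of each other, so they have the same norm.
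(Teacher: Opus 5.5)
Your proof is correct and is essentially the paper's argument. You take a net point $f_0$ within $\delta$ of $f$, apply the triangle inequality, and bound the error term by the operator-norm estimate $\frac{\sqrt{40/\epsilon}}{1-\epsilon}$ from Lemma~\ref{lem:sin-upper-bound}. You also check that $\Lambda^\top$ is itself convolutional (offsets $g_i^{-1}$, weights $W_i^\top$) and so preserves $\R^q\otimes V_\chi$, giving $\lVert\Lambda^\top_\chi\rVert=\lVert\Lambda_\chi\rVert$; this makes explicit a point the paper glosses over, since it writes $\lVert\Lambda^\top\rVert$ when only the bound on the restriction $\Lambda_\chi$ has been established at that stage.
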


\begin{proof}
    Let $f \in S^{q'-1}$ and let $f_0 \in N_\delta$ so that $\lVert f -f_0 \rVert \leq \delta$. Then by the triangle inequality and the proved upper bound on $\lVert \Lambda \rVert$:
    \[
    \lVert \Lambda^{\top} f_0 \rVert
    \leq
    \lVert \Lambda^{\top} (f_0 -f) \rVert + \lVert \Lambda^{\top}( f ) \rVert
    \leq
    \lVert \Lambda^{\top} \rVert \delta + \lVert \Lambda^{\top}( f ) \rVert
    \leq
    \frac{\sqrt{40/\epsilon}}{1 -\epsilon} \delta + \lVert \Lambda^{\top}( f ) \rVert.
    \]
    Therefore:
    \[
    \lVert \Lambda^{\top}( f ) \rVert
    \geq
    \lVert \Lambda^{\top} f_0 \rVert - \frac{\sqrt{40/\epsilon}}{1 -\epsilon} \delta
    \geq 
    s_{\min}(N_\delta) - \frac{\sqrt{40/\epsilon}}{1 -\epsilon} \delta,
    \]
    as claimed.
\end{proof}

We may now lower bound  $s_{\min}(\Lambda^\top_\chi)$.

\begin{lemma}\label{lem:sin-lower-bound}
    Fix $\epsilon = 0.1$. Assuming the bad event of Lemma~\ref{lem:lower-bound-on-net} does not occur, we have $ s_{\min}(\Lambda^\top_\chi) \geq 0.05$.
\end{lemma}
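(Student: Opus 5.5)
The plan is to combine the pointwise lower bound on the net from Lemma~\ref{lem:lower-bound-on-net} with the net-to-sphere transfer of Lemma~\ref{lem:lower-bound-in-terms-of-net}. The upper bound of Lemma~\ref{lem:sin-upper-bound} is taken with $\epsilon = 0.1$; this section assumes it holds. The argument has three steps: a lower bound $s_{\min}(N_\delta)\ge 1/2$, a plug-in of constants, and a short justification that the transfer is legitimate.

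First I lower bound $s_{\min}(N_\delta)$. Fix $f_0\in N_\delta$. Since the bad event of Lemma~\ref{lem:lower-bound-on-net} does not occur, every $g\in G$ satisfies $\lVert(\Lambda^\top f_0)(g)\rVert^2 \ge \frac{d}{4}\sigma_s^2(f_0,g)$. Summing over $g$ gives
\[
\lVert \Lambda^\top f_0\rVert^2 \ge \frac{d}{4}\sum_{g\in G}\sigma_s^2(f_0,g)
= \frac{d}{4}\cdot\frac{1}{nd}\sum_{i=1}^n\sum_{g\in G}\lVert f_0(g_i^{-1}g)\rVert^2 .
\]
For each $i$ the map $g\mapsto g_i^{-1}g$ is a permutation of $G$, so the inner sum equals $\lVert f_0\rVert^2=1$. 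This is the same computation as in the proof of Lemma~\ref{lem:sin-upper-bound}. Hence the double sum divided by $nd$ is $1/d$, giving $\lVert\Lambda^\top f_0\rVert^2\ge 1/4$. So $s_{\min}(N_\delta)\ge 1/2$.

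Second I plug into Lemma~\ref{lem:lower-bound-in-terms-of-net} with $\epsilon=0.1$ and $\delta=0.001$. The correction term is $\frac{\sqrt{40/\epsilon}}{1-\epsilon}\delta = \frac{20}{0.9}\cdot 0.001 < 0.023$. Therefore $s_{\min}(\Lambda^\top_\chi)\ge 0.5-0.023 > 0.05$, which is the claim.

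Third, I expect the only real subtlety to be justifying the use of the upper bound inside Lemma~\ref{lem:lower-bound-in-terms-of-net}. That lemma bounds $\lVert\Lambda^\top(f_0-f)\rVert$ via $\lVert\Lambda_\chi\rVert$, where $f_0-f$ lies in $\R^q\otimes V_\chi$. So I must check that $\Lambda^\top$ restricted to $\R^q\otimes V_\chi$ has the same norm as $\Lambda_\chi$. This follows from two facts:
\begin{itemize}
\item $\Lambda$ is equivariant under translation by $G$, so $\Lambda^\top$ is as well. Hence $\Lambda^\top$ maps the isotypic component $\R^q\otimes V_\chi$ into $\R^d\otimes V_\chi$.
\item The isotypic components are mutually orthogonal, so $\Lambda^\top_\chi$ is exactly the adjoint of $\Lambda_\chi$. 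Adjoints have equal operator norms, so $\lVert\Lambda^\top_\chi\rVert=\lVert\Lambda_\chi\rVert\le \frac{\sqrt{40/\epsilon}}{1-\epsilon}$.
\end{itemize}
With this checked, the proof is complete on the intersection of the good events of Lemmas~\ref{lem:upper-bound-on-net} and~\ref{lem:lower-bound-on-net}.
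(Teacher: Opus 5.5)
Your proposal is correct and follows the paper's argument. Like the paper, you sum the pointwise bound of Lemma~\ref{lem:lower-bound-on-net} over $g$ using $\sum_g \sigma_s^2(f_0,g)=1/d$, and then apply the net-to-sphere transfer of Lemma~\ref{lem:lower-bound-in-terms-of-net} with $\epsilon=0.1$ and $\delta=0.001$. Two small differences work in your favour: squaring $\sigma_s\sqrt{d/4}$ does give your $s_{\min}(N_\delta)\ge 1/2$ (the paper writes $d/16$, getting a looser but still sufficient $1/4$), and your equivariance check that $\lVert\Lambda^\top_\chi\rVert=\lVert\Lambda_\chi\rVert$ justifies a step the paper takes silently when it bounds $\lVert\Lambda^\top\rVert$ by the upper bound proved only for the restriction $\Lambda_\chi$.
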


\begin{proof}
    Due to Lemma~\ref{lem:lower-bound-in-terms-of-net}, it suffices to lower bound $s_{\min}(N_\delta)$ by some constant much larger than $\frac{\sqrt{40/\epsilon}}{1 -\epsilon} \delta$, and then we are done. The proof is very similar to the upper bound from the previous section.
    Fix $f_0 \in N_\delta$. By definition, and the assumed lower bound $\lVert ((\Lambda^\top f_0) (g) \rVert \geq \sigma_s(f_0,g) \sqrt{d/4}$ for all $f_0 \in N_\delta$ and $g\in G$, we have:
    \[
    \lVert \Lambda^{\top} f_0 \rVert^2 = \sum_{g \in G} \lVert (\Lambda^{\top} f_0)(g) \rVert^2 \geq  \frac{d}{16}  \sum_{g \in G} \sigma_s^2(f_0,g).
    \]
    By a similar argument to the one used in the upper bound, we have $\sum_{g \in G} \sigma_s^2(f_0,g) = 1/d$, so $\lVert \Lambda^{\top} f_0 \rVert^2 \geq 1/16$, and so $s_{\min}(N_\delta) \geq 1/4$. Combined with Lemma~\ref{lem:lower-bound-in-terms-of-net}, this implies that:
    \[
    s_{\min}(\Lambda^\top_\chi) \geq 1/4 - \frac{\sqrt{40/\epsilon}}{1 -\epsilon} \delta \geq 0.05,
    \]
    where the second inequality holds for $\delta = 0.001, \epsilon = 0.1$.
\end{proof}

We can now prove Lemma~\ref{lem:restricted-sin-val}. That is, we conclude that for our choice of constants, the singular values of $\Lambda$ restricted to a specific character are, with sufficiently high probability, bounded between the two universal constants $a=0.05, b=30$.

\begin{proof}[Proof of Lemma~\ref{lem:restricted-sin-val}]
 Choose $a=0.05, b=30, c_w=20000, c_G = c_w^2$. The sum of failure probabilities of Lemma~\ref{lem:upper-bound-on-net} and Lemma~\ref{lem:lower-bound-on-net} is $2e^{-2d/\epsilon} + 2e^{-d/400}$. Thus, choosing $\epsilon=0.1$ gives that the bounds of both Lemma~\ref{lem:sin-upper-bound} and Lemma~\ref{lem:sin-lower-bound} hold with probability at least
 \[
 1- 2e^{-20d} - 2e^{-d/400} \geq 1 - 2e^{-d/500} \geq 1 - 2e^{-(20000)^2 \log |G|/500} > 1-0.01/|G|,
 \]
 concluding the proof.
\end{proof}

\subsection{Bounds on the unrestricted operator}
In this section, we want to use Lemma~\ref{lem:restricted-sin-val} to prove the following bounds on the singular values of $\Lambda$ without any restrictions. We will need the following.

\begin{lemma} \label{lem:op-maintains-block}
    Let $\chi: G \to \mathbb{C}$ be a character, and let $\Lambda(\mathbb{R}^d \otimes V_\chi):= \{ \Lambda(f) : f \in \mathbb{R}^d \otimes V_\chi\}$.  Then, $\Lambda(\mathbb{R}^d \otimes V_\chi) \subset \mathbb{R}^q \otimes V_\chi$.
\end{lemma}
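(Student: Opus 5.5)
The plan is a direct computation on a spanning set of $\mathbb{R}^d \otimes V_\chi$, using only that $\chi$ is a homomorphism and that the weight matrices $W_i$ are real. By linearity of $\Lambda$, and because $\mathbb{R}^q \otimes V_\chi$ is a linear subspace, it suffices to check the inclusion on generators $f(g) = h(g)x$ with $h \in V_\chi$ and $x \in \mathbb{R}^d$. Each such $h$ has the form $h(g) = \Real(c\,\chi(g))$ for some $c \in \mathbb{C}$.

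First I compute $\Lambda f$ at a point $g$:
\[
(\Lambda f)(g) = \frac{1}{\sqrt{n}} \sum_{i=1}^n W_i\, x\, \Real\bigl(c\,\chi(g_i g)\bigr) = \frac{1}{\sqrt{n}} \sum_{i=1}^n \Real\bigl(c\,\chi(g_i)\,\chi(g)\bigr)\, y_i ,
\]
where $y_i := W_i x \in \mathbb{R}^q$. The second equality uses the homomorphism property $\chi(g_i g) = \chi(g_i)\chi(g)$, and it also uses that $W_i$ is real, so $W_i$ commutes with taking real parts. Setting $c_i := c\,\chi(g_i) \in \mathbb{C}$, the map $g \mapsto \Real(c_i \chi(g))$ lies in $V_\chi$ by definition. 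Hence each summand $g \mapsto \Real(c_i\chi(g))\, y_i$ is a generator of $\mathbb{R}^q \otimes V_\chi$. The finite sum of these, scaled by $1/\sqrt{n}$, therefore lies in the span $\mathbb{R}^q \otimes V_\chi$, which proves the claim.

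Alternatively, one could invoke the fact quoted in Section~\ref{sec:reps}: $\Lambda$ commutes with translation, because $(\Lambda(\tau_a f))(g) = \frac{1}{\sqrt n}\sum_i W_i f(g_i g a) = (\Lambda f)(ga)$, and this uses that $G$ is abelian. A translation-equivariant map sends each isotypic component to the isotypic component of the same type. Since $\mathbb{R}^d\otimes V_\chi$ and $\mathbb{R}^q \otimes V_\chi$ are the isotypic components of type $V_\chi$, the inclusion follows. I would present the direct computation as the main argument, since it is self-contained, and mention the equivariance viewpoint as a remark.

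There is no real obstacle. The one point needing care is that the real part interacts correctly with the real matrices $W_i$, so that $W_i x\,\Real(\cdot) = \Real(\cdot)\, W_i x$. The other is confirming that multiplying $c$ by the fixed unit complex number $\chi(g_i)$ keeps us inside $V_\chi$, which is immediate because $V_\chi$ is parametrized by arbitrary $c \in \mathbb{C}$.
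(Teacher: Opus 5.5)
Your proof is correct, but your main argument takes a different route from the paper's; the paper's proof is essentially the equivariance argument you give as a remark.

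\textbf{The paper's route.} It writes $\Lambda=\frac{1}{\sqrt n}\sum_{i}\Delta_{W_i}A_{g_i^{-1}}$ and notes that the channel maps $\Delta_W$ and the translations $A_g$ commute with the action of $G$. It then invokes the facts quoted in Section~\ref{sec:reps}: that $\R^d\otimes V_\chi$ and $\R^q\otimes V_\chi$ are the isotypic components of type $V_\chi$, and that equivariant maps send an isotypic component into the component of the same type.

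\textbf{Your route.} You check the inclusion by hand on the generators $g\mapsto \Real(c\,\chi(g))\,x$. You use only that $\chi$ is multiplicative, so translating by $g_i$ just replaces $c$ by $c\,\chi(g_i)$.

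\textbf{What each buys.} Your argument is fully self-contained. It needs neither the identification of $\R^d\otimes V_\chi$ as an isotypic component, which the paper asserts without proof, nor the general preservation theorem. The paper's argument is more conceptual. It applies verbatim to any $G$-equivariant operator, and to settings where the invariant subspaces have no explicit formula, such as higher-dimensional irreducible representations.

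\textbf{Minor comments.}
\begin{itemize}
\item You are right to treat $\R^q\otimes V_\chi$ as the span, as in Section~\ref{sec:reps}, rather than as the set of pure tensors written in Section~\ref{sec:sin-val}. $\Lambda$ sends a pure tensor to $\frac{1}{\sqrt n}\sum_i \Real(c\,\chi(g_i)\chi(\cdot))\,W_ix$, which in general is not itself a pure tensor.
\item The ``point needing care'' about real parts is vacuous: $\Real(c\,\chi(g_ig))$ is a real scalar and $W_i$ is linear.
\item With right translations $\tau_a$, the identity $\Lambda\tau_a=\tau_a\Lambda$ holds for any group, so abelianness is not what is used at that step.
\end{itemize}
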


\begin{proof}
For $f:G\to Y$ and $g\in G$ we denote by $A_gf$ the function from $G$ to $Y$ given by $(A_gf)(h) = f(g^{-1}h)$.
Note that $A_gA_f = A_{gf}$.  Define also $\Delta_W:L^2(G,\R^d)\to:L^2(G,\R^q)$ by $(\Delta_Wf)(g)= Wf(g)$. We note that
\[
\Lambda = \frac{1}{\sqrt{n}}\sum_{i=1}^n \Delta_{W_i}A_{g_i^{-1}}
\]
Since both $\Delta_{W}$ and $A_{g}$ commute with $G$, $\Lambda$ commutes with $G$. 
The lemma now follows form the fact that $\mathbb{R}^d \otimes V_\chi$ and $\mathbb{R}^q \otimes V_\chi$ are the isotypic components of $L^2(G,\mathbb{R}^d)$ and $L^2(G,\mathbb{R}^d)$ corresponding to $\chi$, and that a commuting operator must map any isotypic component into an isotypic components of the same type (see section \ref{sec:reps})
\end{proof}

We are now ready to prove the main theorem.
\begin{theorem} \label{thm:sin-val}
    There exist universal positive constants $a,b,c_w, c_G$ such that $a <b$, for which the following holds. If $d \geq c_w q$ and $\min\{d,q\} \geq c_G \log |G|$, then with probability at least $0.99$ the singular values of $\Lambda$ are all in $[a,b]$.
\end{theorem}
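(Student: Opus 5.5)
The plan is to reduce the statement to Lemma~\ref{lem:restricted-sin-val} via the isotypic decomposition and a union bound over characters. First, the isotypic components $\R^d\otimes V_\chi$, as $\chi$ ranges over $\widehat{G}$ with $\chi$ and $\bar\chi$ identified, give an orthogonal direct sum decomposition
\[
L^2(G,\R^d)=\bigoplus_{\{\chi,\bar\chi\}} \R^d\otimes V_\chi ,
\]
and likewise for $L^2(G,\R^q)$. Orthogonality holds because distinct characters are orthogonal in $L^2(G,\C)$, and for $\chi\notin\{\psi,\bar\psi\}$ the real parts of $c\chi$ and $c'\psi$ expand into combinations of $\chi,\bar\chi,\psi,\bar\psi$, which are pairwise orthogonal. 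By Lemma~\ref{lem:op-maintains-block}, $\Lambda$ maps each block $\R^d\otimes V_\chi$ into $\R^q\otimes V_\chi$. Hence, in orthonormal bases adapted to these decompositions, $\Lambda$ is block diagonal with blocks $\Lambda_\chi$. The singular values of a block-diagonal operator are the union of the singular values of its blocks, and so are those of $\Lambda^\top$, which is block diagonal with blocks $\Lambda_\chi^\top$.

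One bookkeeping point has to be handled carefully: "the singular values are all in $[a,b]$" should be read as the $\min\{N_0,N_1\}$ nontrivial singular values, equivalently $\|\Lambda\|\le b$ together with $s_{\min}(\Lambda^\top)\ge a$, since $q<d$. The restricted lemma was proved exactly in this form: the upper bound $\|\Lambda_\chi\|\le b$ in Lemma~\ref{lem:sin-upper-bound}, and the lower bound on $\Lambda^\top_\chi$ in Lemma~\ref{lem:sin-lower-bound}. Block diagonality then gives
\[
\|\Lambda\|=\max_\chi\|\Lambda_\chi\| \qquad\text{and}\qquad s_{\min}(\Lambda^\top)=\min_\chi s_{\min}(\Lambda^\top_\chi).
\]
Consequently, $\Lambda$ has $N_1=|G|q$ singular values and all of them lie in $[a,b]$.

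Finally, I apply Lemma~\ref{lem:restricted-sin-val} with the same constants $a,b,c_w,c_G$. For each $\chi\in\widehat{G}$, the failure probability is at most $0.01/|G|$. There are $|\widehat{G}|=|G|$ characters, so a union bound gives total failure probability at most $0.01$, which yields the theorem with probability at least $0.99$.

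The main obstacle is mild. One must verify that the blocks are genuinely orthogonal, so that block diagonality in orthonormal bases is legitimate. One must also check that the restricted bound for $\Lambda^\top_\chi$ is the right quantity, which means confirming that $\Lambda^\top$ restricted to $\R^q\otimes V_\chi$ equals $(\Lambda_\chi)^\top$. This follows because $\Lambda^\top$ also commutes with the $G$-action, being a sum of $\Delta_{W_i^\top}A_{g_i}$, and therefore preserves the same isotypic components.
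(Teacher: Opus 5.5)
Your proposal is correct and follows the same route as the paper. Both decompose $L^2(G,\R^d)$ into the isotypic blocks $\R^d\otimes V_\chi$, use Lemma~\ref{lem:op-maintains-block} to make $\Lambda$ block diagonal, apply Lemma~\ref{lem:restricted-sin-val} to each block, and take a union bound over the $|G|$ characters; your extra bookkeeping fills in details the paper leaves implicit (identifying $\chi$ with $\bar\chi$ so the sum is genuinely orthogonal and direct, reading the lower bound as $s_{\min}(\Lambda^\top)\ge a$, and checking that $\Lambda^\top$ restricts to $\Lambda_\chi^\top$).
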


\begin{proof}
    We have $L^2(G,\R^d) = \bigoplus_{\chi} V_\chi \otimes \R^d$. Therefore, combined with Lemma~\ref{lem:op-maintains-block}, it suffices to prove for all $\chi$, the singular values of the restriction of $\Lambda$ to $V_\chi \otimes \R^d$ has all singular values in $[a,b]$. Since $G$ is abelian, the number of distinct characters of $G$ is precisely $|G|$. Given the assumptions $d\ge c_w q$, $\min\{d,q\} \geq c_G \log |G|$ and Lemma~\ref{lem:restricted-sin-val}, a union bound over all $\chi$ shows that this is the case with probability at least $0.99$, as desired.
\end{proof}

\section{Lower bounding the network's gradient norm} \label{sec:frobenius-bounds}
In this section, we are first interested in tracking the Frobenius norm of the Jacobian of the function $H$ computed by the convolutional part of the network. Using that, we can lower bound the norm of the gradient of the function $H_b$ computed by the entire net. Recall the following  assumption on the activation: for all $r\in \R$, we have $(\sigma'(r))^2 + (\sigma'(-r))^2 \geq 2c^2$, where $c$ is a universal constant.

Fix $f \in L^2(G, \mathbb{R}^d)$ as the input. Let $J:=J_H(f)$ be the Jacobian of $H$ at $f$.
Since the bounds we prove are independent of the $n_\ell$ values, we assume they are all equal to some $n$. So, layer $\ell \in [t]$ is given by $n$ many $d_{\ell} \times d_{\ell-1}$ matrices and elements from $G$ that together define the operator $\Lambda_{\ell}$. In this section, we mostly view $\Lambda_\ell$ as a large $N_{\ell} \times N_{\ell-1}$ matrix where $N_{\ell} = |G| d_{\ell}$.
Let $D_{\ell} := \diag(\sigma'(z_1^{(\ell)}), \ldots, \sigma'(z_{N_\ell}^{(\ell)}))$, and note that
\[
J = D_t \Lambda_t \ldots D_1 \Lambda_1. 
\]
Denote $J_{\ell} = D_\ell \Lambda_\ell$.

The following is the key lemma.

\begin{lemma} \label{lem:gaussian-rows-high-prob}
    Fix $u, v \in \mathbb{R}^{m}$ and let $\alpha_1,\ldots,\alpha_q \sim \cN(0, \tau^2 I_m)$ be i.i.d.
    Then, with probability at least $1-e^{-0.005q}$,
    \[
    \sum_{i=1}^q \sigma'(\alpha_i^\top u)^2 (\alpha_i^\top v)^2
    \ge 0.1 q c^2 \tau^2 \lVert v \rVert^2.
    \]
\end{lemma}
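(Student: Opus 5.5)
The plan is to reduce the weighted sum to counting how many indices $i$ have \emph{both} factors reasonably large, and then to apply a Chernoff--Hoeffding bound to this count. The point is that every summand is nonnegative, so it suffices to lower bound the number of "good" terms. No decomposition of $v$ along $u$ is needed; sign symmetry of the Gaussian handles the dependence between $\alpha_i^\top u$ and $\alpha_i^\top v$. If $v=0$ the claim is trivial, so assume $v\neq 0$.

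For each $i$, let $Z_i := (\alpha_i^\top v)^2/(\tau^2\lVert v\rVert^2)$, which is a $\chi^2_1$ variable. Define the indicator
\[
T_i := \mathbbm{1}\left[\sigma'(\alpha_i^\top u)^2 \ge c^2\right]\cdot \mathbbm{1}\left[Z_i \ge 1\right].
\]
Since all summands are nonnegative,
\[
\sum_{i=1}^q \sigma'(\alpha_i^\top u)^2(\alpha_i^\top v)^2 \;\ge\; c^2\tau^2\lVert v\rVert^2 \sum_{i=1}^q T_i .
\]
So it suffices to show that $\sum_i T_i \ge 0.1q$ with probability at least $1-e^{-0.005q}$.

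The key step is lower bounding $p:=\Pr[T_i=1]$, and the obstacle is that the two events are correlated through $\alpha_i$. I will use the symmetry $\alpha_i \overset{d}{=} -\alpha_i$. Write $\alpha_i=\varepsilon\beta$, where $\varepsilon$ is an independent uniform sign and $\beta\sim\cN(0,\tau^2 I_m)$; this has the same law as $\alpha_i$. The event $Z_i\ge 1$ depends only on $\beta$, because $(\alpha_i^\top v)^2=(\beta^\top v)^2$. Conditionally on $\beta$, we have $\alpha_i^\top u=\pm r$ with $r=\beta^\top u$, each sign with probability $1/2$. The assumption $\sigma'(r)^2+\sigma'(-r)^2\ge 2c^2$ forces at least one of $\sigma'(r)^2,\sigma'(-r)^2$ to be at least $c^2$. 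Hence the conditional probability that $\sigma'(\alpha_i^\top u)^2\ge c^2$ is at least $1/2$. It follows that
\[
p\ge \tfrac12\Pr[|N(0,1)|\ge 1]\approx \tfrac12\cdot 0.3173\ge 0.158 .
\]

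Finally, the $T_i$ are independent Bernoulli variables, since the $\alpha_i$ are i.i.d. So $\E\sum_i T_i \ge 0.158q$, and Hoeffding's inequality gives
\[
\Pr\left[\sum_i T_i < 0.1q\right]\le e^{-2(0.058)^2 q}\le e^{-0.0067q}\le e^{-0.005q}.
\]
Combining this with the displayed reduction yields $\sum_i \sigma'(\alpha_i^\top u)^2(\alpha_i^\top v)^2\ge 0.1qc^2\tau^2\lVert v\rVert^2$ with the required probability.

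The only delicate point is the conditioning argument above; the threshold $Z_i\ge 1$ is chosen so that the constants leave enough slack for Hoeffding's inequality to reach the exponent $0.005$.
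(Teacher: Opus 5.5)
Your proof is correct and follows essentially the same route as the paper. The paper likewise replaces $\alpha_i$ by $s_i\alpha_i$ with independent Rademacher signs, combines $\Pr[(\alpha_i^\top v)^2\ge\tau^2\lVert v\rVert^2]>0.3$ with the conditional probability $\ge 1/2$ coming from $\sigma'(r)^2+\sigma'(-r)^2\ge 2c^2$, and applies Hoeffding to the count of good indices. The only difference is that the paper uses the cruder bound $0.15$ with gap $0.05$, which gives exactly $e^{-0.005q}$.
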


\begin{proof}
    Let $s_1,\ldots,s_q$ be i.i.d.\ Rademacher random variables independent of the $\alpha_i$'s. By the symmetry of the Gaussian distribution, the sequence $(\alpha_i)_{i=1}^q$ is identically distributed to $(s_i \alpha_i)_{i=1}^q$. Thus, it suffices to lower bound the sum with terms $T_i := \sigma'(s_i \alpha_i^\top u)^2 (\alpha_i^\top v)^2$ (using $s_i^2=1$).

    For each $i$, consider the event $E_i$ that both $(\alpha_i^\top v)^2 \ge \tau^2 \|v\|^2$ and $\sigma'(s_i \alpha_i^\top u)^2 \ge c^2$.
    First, since $\alpha_i^\top v \sim \cN(0, \tau^2 \|v\|^2)$, the probability that $(\alpha_i^\top v)^2 \ge \tau^2 \|v\|^2$ is the probability that a standard Gaussian squared exceeds $1$, which is $> 0.3$.
    Second, conditioned on $\alpha_i$, the term $\sigma'(s_i \alpha_i^\top u)^2$ takes one of two values, $\sigma'(\pm \alpha_i^\top u)^2$, at least one of which is $\ge c^2$ by assumption. Thus, $\Pr(\sigma'(s_i \alpha_i^\top u)^2 \ge c^2 \mid \alpha_i) \ge 0.5$.
    By independence, $\Pr(E_i) \ge 0.3 \cdot 0.5 = 0.15$.

    Whenever $E_i$ holds, we have $T_i \ge c^2 \tau^2 \|v\|^2$. By Hoeffding's inequality, the number of indices $i$ for which $E_i$ occurs is at least $0.1q$ with probability at least $1 - e^{-2(0.15-0.1)^2 q} = 1 - e^{-0.005q}$. The claim follows.
\end{proof}

\subsection{Step 1: One direction, one layer}

The first step is showing that adding a layer of the Jacobian does not decrease the norm of any vector by too much. This can later be used to lower bound the Frobenius norm of the Jacobain of the entire covolutional part of the network.

Fix a layer $\ell$. The rows of $\Lambda_\ell$ are denoted by $\{a_r\}_{r=1}^{N_\ell}$.
Recall that $h^{(\ell)} = \sigma(z^{(\ell)})$ where $z^{(\ell)} = \Lambda_\ell h^{(\ell-1)}$. For a fixed row $a_r$, the $0$-entries in $a_r$ are given deterministically by the choice of the offsets $g_1, \ldots, g_n$, so $a_r$ is determined by a vector $\alpha \sim \cN(0, \frac{1}{m_{\ell}} I_{m_{\ell}})$ where $m_{\ell} = n d_{\ell-1}$.

\begin{lemma} \label{lem:frobenius-oned-onel-whp}
    Fix $h^{(\ell-1)}$ and a vector $v \in \R^{N_{\ell-1}}$. Then with probability at least $1- |G| e^{-0.005d_\ell}$:
    \[
    \lVert D_\ell \Lambda_\ell v \rVert^2 \geq 0.1 c^2 \frac{d_\ell}{d_{\ell-1}} \lVert v \rVert^2.
    \]
\end{lemma}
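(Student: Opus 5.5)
Split the output coordinates of $D_\ell \Lambda_\ell v$ according to the group element: for each $g \in G$, the block $(D_\ell \Lambda_\ell v)(g) \in \R^{d_\ell}$ has coordinates $\sigma'(z^{(\ell)}_r)\, a_r^\top v$ for the $d_\ell$ rows $r$ belonging to $g$. For fixed $g$, these $d_\ell$ rows are built from the same offsets $g_1g,\ldots,g_ng$ but from \emph{different} rows of the matrices $W_1,\ldots,W_n$. Row $k$ of block $g$ is determined by the $k$-th rows of $W_1,\ldots,W_n$. Hence, restricted to their support (the $m_\ell = n d_{\ell-1}$ coordinates indexed by the offsets), the $d_\ell$ vectors are i.i.d. Each one is $\alpha_k$ scaled by $1/\sqrt{n}$ times entries of variance $1/d_{\ell-1}$, i.e.\ $\alpha_k \sim \cN(0,\tau^2 I_{m_\ell})$ with $\tau^2 = 1/(n d_{\ell-1})$. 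Let $u_g, v_g \in \R^{m_\ell}$ be the restrictions of $h^{(\ell-1)}$ and $v$ to the coordinates $\{(g_ig, j)\}_{i\le n,\, j\le d_{\ell-1}}$. Then $z^{(\ell)}_{(g,k)} = \alpha_k^\top u_g$ and $(\Lambda_\ell v)_{(g,k)} = \alpha_k^\top v_g$. Since $h^{(\ell-1)}$ and $v$ are fixed, they are independent of $\Lambda_\ell$.

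Apply Lemma~\ref{lem:gaussian-rows-high-prob} to block $g$ with $q = d_\ell$, $m = m_\ell$, $u = u_g$, $v = v_g$ and $\tau^2 = 1/(n d_{\ell-1})$. This gives, with probability at least $1-e^{-0.005 d_\ell}$,
\[
\lVert (D_\ell\Lambda_\ell v)(g)\rVert^2 \ge 0.1\, d_\ell c^2 \frac{1}{n d_{\ell-1}} \lVert v_g\rVert^2 .
\]
A union bound over the $|G|$ blocks makes this hold simultaneously for all $g$ with probability at least $1-|G|e^{-0.005d_\ell}$. No independence across blocks is needed, which matters because different blocks share the same $W_i$.

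Summing over $g$ gives $\lVert D_\ell\Lambda_\ell v\rVert^2 \ge \frac{0.1 c^2 d_\ell}{n d_{\ell-1}}\sum_{g}\lVert v_g\rVert^2$. Finally,
\[
\sum_g \lVert v_g\rVert^2 = \sum_{i=1}^n \sum_{g\in G} \lVert v(g_i g)\rVert^2 = n\lVert v\rVert^2,
\]
since $g\mapsto g_ig$ is a permutation of $G$ and the offsets are distinct. This is the same computation as in Lemma~\ref{lem:sin-upper-bound}, and it yields the claimed bound $0.1c^2 \frac{d_\ell}{d_{\ell-1}}\lVert v\rVert^2$.

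The main obstacle is the identification step. We must check carefully that within a single block the $d_\ell$ rows are genuinely i.i.d.\ Gaussians on a common support, so that Lemma~\ref{lem:gaussian-rows-high-prob} applies, with the correct variance normalization $\tau^2$ matching $m_\ell$. Everything else is a union bound and the permutation identity.
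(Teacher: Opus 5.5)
Your proposal is correct and follows essentially the same route as the paper. You block the rows by $g\in G$, apply Lemma~\ref{lem:gaussian-rows-high-prob} to each block with $\tau^2 = 1/(nd_{\ell-1}) = 1/m_\ell$, take a union bound over $G$, and finish with the permutation identity $\sum_g \lVert v_g\rVert^2 = n\lVert v\rVert^2$. You also make explicit two points the paper leaves implicit: the $d_\ell$ rows within a block are i.i.d.\ because they come from distinct rows of the $W_i$, and no independence across blocks is required.
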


\begin{proof}
    By definition:
    \[
    \lVert D_\ell \Lambda_\ell v \rVert^2
    =
    \sum_{r=1}^{N_{\ell}} \sigma'(z_r^{(\ell)})^2 (a_r^\top v)^2
    =
    \sum_{r=1}^{N_{\ell}} \sigma'(a_r^{\top} h^{(\ell-1)})^2 (a_r^\top v)^2.
    \]
    Now, note that  $a_r^\top v = a_r^\top v_r$, where $v_r$ is defined as follows:
    \[
    (v_r)_i = \begin{cases}
              v_i & (a_r)_{i} \neq  0 \\
              0   & (a_r)_{i} = 0.
              \end{cases}
    \]
    That is, $v_r$ is precisely $v$, except for that each index $i$ of a $0$-entry in $a_r$ is also zeroed in $v_r$.
    Now, each row index corresponds to a pair $(g,j) \in G \times [d_\ell]$. We denote the row corresponding to $(g,j)$ by $a_{g,j}$.
    Note that for a fixed $g$, all $((g,j))_{j=1}^{d_\ell}$ have $0$ in the exact same locations, determined only by the offsets $g_1, \ldots, g_n$. So, for each $g$, we define $v_g$ just as $v$, with the only difference that we put $0$ in the same indices where the vectors $((g,j))_{j=1}^{d_\ell}$ are zeroed. Likewise, define $h_g$ to be as $h^{(\ell-1)}$, only with those entries zeroed. With this notation, by definition we have
    \[
    \lVert D_\ell \Lambda_\ell v \rVert^2
    =
    \sum_{g \in G} \sum_{j=1}^{d_\ell} \sigma'(a^\top_{g,j} h_g)^2 (a^\top_{g,j} v_g)^2.
    \]
    Now fix $g \in G$. Each of the non-zero entries of the rows $(a_{g,j})_{j=1}^{d_\ell}$ are determined by $\alpha_j \sim \cN(0, \frac{1}{m_\ell} I_{m_\ell})$ where $m_\ell = nd_{\ell -1}$. Let $\tilde{v}_g, \tilde{h}_g$ be just as $v_g,h_g$, but with $0$ entries removed. Then:
    \[
    a^\top_{g,j} h_g = \alpha^\top_j \tilde{h}_g, \quad \alpha^\top_{g,j} v_g = \alpha^\top_j \tilde{v}_g.
    \]
    From lemma \ref{lem:gaussian-rows-high-prob} we get that with probability at least $1-e^{-0.005 d_\ell}$:
    \[
    \sum_{j=1}^{d_\ell} \sigma'(a^\top_{g,j} h_g)^2 (a^\top_{g,j} v_g)^2
    =
    \sum_{j=1}^{d_\ell} \sigma'(\alpha^\top_j \tilde{h}_g)^2 (\alpha^\top_j \tilde{v}_g)^2
    \geq
    0.1 d_\ell c^2 \frac{1}{m_\ell} \mleft \lVert \tilde{v}_g \mright \rVert^2.
    \]
    A union bound now gives that with probability at least $1- |G|e^{-0.005 d_\ell}$:
    \[
    \lVert D_\ell \Lambda_\ell v \rVert^2
    \geq
    0.1 d_\ell c^2 \frac{1}{m_\ell} \sum_{g \in G} \lVert  \tilde{v}_g \rVert^2.
    \]
    If $\sum_{g \in G} \lVert \tilde{v}_g \rVert^2 = n \lVert v \rVert^2$, then the above inequality implies the statement of the lemma. So, it remains to prove this combinatorial identity. Fix a coordinate in $v \in \mathbb{R}^{N_{\ell-1}}$, which corresponds to $\hat{g} \in G$ and some channel in $\hat{j} \in [d_{\ell-1}]$. Since $G$ is a group, we have $\hat{g} = g_i g$ for exactly one $g \in G$, for every $i \in [n]$. Therefore, the coordinate $(\hat{g}, \hat{j})$ is not zeroed in precisely $n$ many $\tilde{v}_g$, which concludes the proof.
\end{proof}

\subsection{Step 2: One direction, multiple layers}

We now use Step 1 to obtain a bound on $\lVert J v \rVert$ for some fixed direction $v$.

\begin{lemma}\label{lem:frobenius-oned-multil-whp}
    Fix a vector $v \in \R^{N_{0}}$. Then with probability at least $1- |G| t e^{-0.005 d_{\min}}$ we have
    \[
    \lVert J_H(f) v \rVert^2 \geq (0.1c^2)^{t} \frac{d_t}{d_{0}} \lVert v \rVert^2.
    \]
\end{lemma}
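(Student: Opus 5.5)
The plan is to apply Lemma~\ref{lem:frobenius-oned-onel-whp} once per layer, chaining the layers together while keeping track of what is random at each step. Set $v^{(0)} := v$ and $v^{(\ell)} := D_\ell \Lambda_\ell v^{(\ell-1)} = J_\ell v^{(\ell-1)}$ for $\ell \in [t]$, so that $J_H(f) v = v^{(t)}$. The goal is to show, for every $\ell$, the one-step inequality
\[
\lVert v^{(\ell)} \rVert^2 \geq 0.1 c^2 \frac{d_\ell}{d_{\ell-1}} \lVert v^{(\ell-1)} \rVert^2 .
\]
Multiplying these $t$ inequalities telescopes the ratios $d_\ell/d_{\ell-1}$ into $d_t/d_0$, which is exactly the claimed bound $(0.1c^2)^t \frac{d_t}{d_0}\lVert v\rVert^2$.

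The point needing care is that Lemma~\ref{lem:frobenius-oned-onel-whp} is stated for a \emph{fixed} $h^{(\ell-1)}$ and a \emph{fixed} vector. Here both $h^{(\ell-1)}$ and $v^{(\ell-1)}$ depend on the earlier weights. We handle this by conditioning on the weights of layers $1,\ldots,\ell-1$.

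1. Conditioned on those weights, $h^{(\ell-1)}$ is deterministic, because it is a function of the fixed input $f$ and of $\Lambda_1,\ldots,\Lambda_{\ell-1}$.
2. Under the same conditioning, $v^{(\ell-1)}$ is also deterministic, since it depends only on $D_1,\ldots,D_{\ell-1}$ and $\Lambda_1,\ldots,\Lambda_{\ell-1}$.
3. The weights of layer $\ell$ are independent of the earlier ones, so they remain $\cN(0,1/d_{\ell-1})$ after conditioning.
4. Lemma~\ref{lem:frobenius-oned-onel-whp} therefore applies conditionally. It gives the one-step inequality at layer $\ell$ except on an event of conditional probability at most $|G|e^{-0.005 d_\ell}$.

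Since this bound holds uniformly over the conditioning, it also bounds the unconditional failure probability at layer $\ell$. To finish, take a union bound over the $t$ layers. The total failure probability is at most $\sum_{\ell=1}^t |G| e^{-0.005 d_\ell} \leq |G|\, t\, e^{-0.005 d_{\min}}$, and on the complement all $t$ one-step inequalities hold at once. This yields the claim by the telescoping product above.

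I expect the only real obstacle to be the dependence issue in the second paragraph: checking that Lemma~\ref{lem:frobenius-oned-onel-whp} legitimately applies with a data-dependent vector. This is resolved by conditioning layer by layer, using the independence of weights across layers. The inequality manipulations themselves are routine.
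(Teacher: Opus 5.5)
Your proposal is correct and follows the paper's proof essentially step for step. Both condition on $\Lambda_1,\ldots,\Lambda_{\ell-1}$ so that $h^{(\ell-1)}$ and the propagated vector are fixed, apply Lemma~\ref{lem:frobenius-oned-onel-whp} at each layer, union bound over the $t$ layers, and telescope the ratios $d_\ell/d_{\ell-1}$; your remark that the conditional failure bound holds uniformly, and hence unconditionally, just spells out what the paper leaves implicit.
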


\begin{proof}
    Denote $g^{(0)}=v$, and $g^{(\ell)} = J_\ell \ldots J_1 v$. Therefore $g^{(t)} = J_H(f) v$.
    Since $f,v$ are fixed, fixing $\Lambda_1, \ldots, \Lambda_{\ell-1}$ fixes $g^{(\ell-1)}$. Note that $g^{(\ell)} = J_\ell g^{(\ell-1)} = D_\ell \Lambda_\ell g^{(\ell-1)}$, and therefore if $\Lambda_1, \ldots, \Lambda_{\ell-1}$ are fixed, then Lemma~\ref{lem:frobenius-oned-onel-whp} implies that with probability at least $1- |G| e^{-0.005d_\ell}$ we have
    \begin{equation} \label{eq:layer-bound}
    \lVert g^{(\ell)} \rVert^2 \geq 0.1c^2 \frac{d_\ell}{d_{\ell-1}} \lVert g^{(\ell-1)} \rVert^2.
    \end{equation}
    By a union bound, the inequality \eqref{eq:layer-bound} holds for all $\ell \in [t]$ with probability at least $1- t |G| e^{-0.005d_{\min}}$.
    Telescoping all $t$ inequalities gives that with probability at least $1- t |G| e^{-0.005d_{\min}}$ we have
    \[
    \lVert g^{(t)} \rVert^2
    \geq
    (0.1c^2)^{t} \frac{d_t}{d_{0}} \lVert g^{(0)} \rVert^2
    =
    (0.1c^2)^{t} \frac{d_t}{d_{0}} \lVert v \rVert^2,
    \]
    as required.
\end{proof}

\subsection{Step 3: Multiple directions, multiple layers (Frobenius norm)}

\begin{theorem} \label{thm:frobenius-jacobian-whp}
    With probability at least $1- 2|G| t e^{-0.005 d_{\min}}$ we have
    \[
    \lVert J \rVert_F^2 \geq \frac{1}{2} (0.1 c^2)^t d_t |G|.
    \]
\end{theorem}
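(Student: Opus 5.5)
Following the sketch in Section~\ref{sec:sketch}, I will combine Lemma~\ref{lem:frobenius-oned-multil-whp} with the Hutchinson identity $\lVert J \rVert_F^2 = \E_r\mleft[\lVert J r\rVert^2\mright]$, where $r\in\{\pm1\}^{N_0}$ is a Rademacher vector independent of the network. For any such $r$ we have $\lVert r\rVert^2 = N_0 = |G| d_0$. So Lemma~\ref{lem:frobenius-oned-multil-whp}, applied with $v=r$, gives
\[
\lVert J r\rVert^2 \ge (0.1c^2)^t \frac{d_t}{d_0}|G|d_0 = (0.1c^2)^t d_t|G| =: M
\]
with probability at least $1-p$ over the weights, where $p := |G|te^{-0.005d_{\min}}$.

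The difficulty is that this bound holds for each fixed $r$ with high probability over the weights. It does not automatically hold for most $r$ simultaneously, and the factor $2$ in the failure probability suggests a Markov step. Let $B$ be the set of pairs (weights, $r$) for which the bound fails. By Fubini over the product measure, $\Pr_{W,r}(B)\le p$. Define $q(W) := \Pr_r(B\mid W)$. Then $\E_W[q(W)]\le p$, and Markov's inequality gives $\Pr_W(q(W) > 1/2)\le 2p$.

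On the complementary event, where $q(W)\le 1/2$, at least half of the Rademacher vectors satisfy $\lVert Jr\rVert^2\ge M$. Since the remaining terms are nonnegative,
\[
\lVert J\rVert_F^2 = \E_r\lVert Jr\rVert^2 \ge \tfrac12 M = \tfrac12(0.1c^2)^t d_t |G|.
\]
This happens with probability at least $1-2|G|te^{-0.005d_{\min}}$, as claimed.

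Two details remain to check, and the main one is the Fubini/Markov exchange. First, Lemma~\ref{lem:frobenius-oned-multil-whp} treats $v$ as fixed. That is legitimate here because $r$ is drawn independently of the weights, so we may condition on $r$ first. Second, the Hutchinson identity $\E_r[\lVert Jr\rVert^2]=\operatorname{tr}(J^\top J)$ holds because $\E[rr^\top]=I$.
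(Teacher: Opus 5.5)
Your proposal is correct and matches the paper's proof essentially step for step. Both arguments combine Hutchinson's identity with the single-direction lemma applied to a Rademacher vector $r$ drawn independently of the weights, bound the joint failure probability over $(W,r)$, and apply Markov's inequality to $\Pr_r(\text{bad}\mid W)$ at threshold $1/2$, which yields both the factor $2$ in the failure probability and the $\tfrac12$ in the bound.
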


\begin{proof}
    We use the identity
    \begin{equation}\label{eq:rademacher-identity}
       \lVert J \rVert_F^2 = \E_r \mleft[ \lVert J r \rVert^2 \mright] 
    \end{equation}
    (see e.g.\ \cite{hutchinson1989stochastic}), where $r \in \{\pm 1\}^{N_0}$ is a random Rademacher vector.
    For any fixed $r$, Lemma~\ref{lem:frobenius-oned-multil-whp} implies
    \[
    \Pr\mleft[ \lVert Jr \rVert^2 \geq (0.1c^2)^t \frac{d_t}{d_0} \lVert r \rVert^2 \mright]
    =
    \Pr\mleft[ \lVert Jr \rVert^2 \geq (0.1c^2)^t d_t |G| \mright]
    \geq
    1- |G| t e^{-0.005 d_{\min}}
    \]
    since $\lVert r \rVert^2 = N_0 = d_0 |G|$.
    Therefore:
    \begin{equation} \label{eq:jacobian-and-rademacher-small-bound}
    \Pr_{J,r} \mleft[ \lVert J r \rVert^2 < (0.1c^2)^t d_t |G|] \mright] < |G| t e^{-0.005 d_{\min}},
    \end{equation}
    when the probability is taken both over $J$ (that is, over $\Lambda_1, \ldots, \Lambda_t$) and over $r$.
    Now, for a fixed draw $W:= (\Lambda_1, \ldots, \Lambda_t)$, let $J(W)$ be the Jacobian given by this draw, and denote
    \[
    S(W) := \Pr_r \mleft[\lVert J(W) r \rVert^2 < (0.1c^2)^t d_t |G| \mright].
    \]
    In words, $S(W)$ is the fraction of random vectors $r \in \{\pm 1\}^{N_0}$ that are ``bad" with the Jacobian $J(W)$, that is, for which $\lVert J(W) r \rVert^2 $ is small.
    So from \eqref{eq:jacobian-and-rademacher-small-bound}, we have
    \[
    \E_W \mleft[S(W) \mright] = \Pr_{J,r} \mleft[ \lVert J r \rVert^2 < (0.1c^2)^t d_t |G|] \mright] < |G| t e^{-0.005 d_{\min}}.
    \]
    Using Markov's bound, we may bound the mass of weights $W$ for which $S(W)$ is at least $1/2$:
    \[
    \Pr_W \mleft [ S(W) > 1/2 \mright ] \leq 2 \E_W \mleft[S(W) \mright] \leq 2|G| t e^{-0.005 d_{\min}}.
    \]
    By definition, for all $W$ with $S(W) \leq 1/2$ we have
    \begin{equation} \label{eq:good-W-bound}
        \Pr_r \mleft[\lVert J(W) r \rVert^2 \geq (0.1c^2)^t d_t |G| \mright] \geq 1/2.
    \end{equation} 
    Therefore, by \eqref{eq:rademacher-identity}, \eqref{eq:good-W-bound} we have
    \begin{align*}
    \lVert J(W)  \rVert^2_F
    &=
    \E_r \mleft[ \lVert J(W) r \rVert^2 \mright] \\
    &\geq
    (0.1c^2)^t d_t |G| \cdot \Pr \mleft[ \lVert J(W) r \rVert^2 \geq (0.1c^2)^t d_t |G| \mright] \\
    &\geq
    \frac{1}{2}(0.1c^2)^t d_t |G|,
    \end{align*}
    which concludes the proof.
\end{proof}

\subsection{Adding the final layer}

We now handle the final fully connected layer and derive a bound on $\lVert \nabla H_b(f) \rVert$. Recall that the final fully connected layer is given as a vector $u \sim \cN(0, \frac{1}{d_t|G|} I_{d_t|G|})$, so the output of the entire net on $f$ is given by
\[
H_b(f) := \langle u, H(f) \rangle = \langle u, h^{(t)}(f) \rangle.
\]
We will use the following technical claim.

\begin{lemma} \label{lem:technical-gradient}
    Let $Z_1, \ldots, Z_r \sim \cN(0,1)$ iid, let $w_1, \ldots, w_r \geq 0$ so that  $\sum_i w_i = 1$, and denote $Y = \sum_i w_i Z_i^2$. Then for all $s \in (0,1)$:
    \[
    \Pr[Y \leq s] \leq \sqrt{s} e^{(1-s)/2}.
    \]
\end{lemma}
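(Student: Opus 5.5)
We use the exponential Chernoff method for the lower tail, i.e.\ a Laplace-transform bound. For any $\lambda > 0$, Markov's inequality applied to $e^{-\lambda Y}$ gives
\[
\Pr[Y \le s] = \Pr\mleft[e^{-\lambda Y} \ge e^{-\lambda s}\mright] \le e^{\lambda s}\, \E\mleft[e^{-\lambda Y}\mright].
\]
By independence, the moment generating function factorizes: $\E[e^{-\lambda Y}] = \prod_i \E[e^{-\lambda w_i Z_i^2}] = \prod_i (1+2\lambda w_i)^{-1/2}$. This uses the standard Gaussian computation $\E[e^{-\mu Z^2}] = (1+2\mu)^{-1/2}$ for $\mu \ge 0$.

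The key step is to lower bound $\prod_i (1+2\lambda w_i)$ uniformly in the weights. Expanding the product, all terms are nonnegative because $w_i \ge 0$. Keeping only the constant term and the linear terms gives $\prod_i (1+2\lambda w_i) \ge 1 + 2\lambda \sum_i w_i = 1+2\lambda$. Hence
\[
\Pr[Y\le s] \le e^{\lambda s}(1+2\lambda)^{-1/2}.
\]
This is exactly the bound one would get for a single $\chi^2_1$ variable, the worst case $w=e_1$. That is the expected shape, since the target bound $\sqrt{s}\,e^{(1-s)/2}$ is the Chernoff bound for $Z^2$ with $Z\sim\cN(0,1)$.

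Next we optimize over $\lambda$. We minimize $\lambda s - \tfrac12\log(1+2\lambda)$; setting the derivative to zero gives $s = 1/(1+2\lambda)$, i.e.\ $\lambda = (1-s)/(2s)$. This is positive precisely because $s\in(0,1)$. Substituting, $e^{\lambda s} = e^{(1-s)/2}$ and $(1+2\lambda)^{-1/2} = \sqrt{s}$, which yields $\Pr[Y\le s]\le \sqrt{s}\,e^{(1-s)/2}$ as claimed.

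There is no real obstacle here. The only point needing care is the product inequality, which is what makes the bound independent of $r$ and of the distribution of the weights. It can alternatively be justified by the log-concavity or superadditivity of $\log(1+x)$ on the simplex; the direct expansion argument suffices.
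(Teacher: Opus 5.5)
Your proof is correct and is the same as the paper's: a Chernoff bound via Markov's inequality on $e^{-\lambda Y}$, factorization into $\prod_i (1+2\lambda w_i)^{-1/2}$, the inequality $\prod_i(1+a_i)\ge 1+\sum_i a_i$ together with $\sum_i w_i=1$, and then the choice $\lambda=(1-s)/(2s)$. One small correction to your side remark: the property you need is \emph{sub}additivity of $\log(1+x)$ on $[0,\infty)$, not superadditivity, but the direct expansion you actually use is sound.
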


\begin{proof}
    Fix $s \in (0,1)$. For all $t > 0$ we have:
    \begin{align*}
        \Pr[Y \leq s] &= \Pr[e^{-tY} \geq e^{-ts}] \\
        &\leq e^{t s} \E[e^{-tY}] \\
        &= e^{t s} \prod_{i=1}^r \E[e^{-tw_i Z_i^2}] \\
        &= \frac{e^{t s}}{\sqrt{\prod_{i=1}^r (1 + 2tw_i)}} \\
        & \leq \frac{e^{t s}}{\sqrt{1 + \sum_{i=1}^r 2tw_i}} \\
        &= \frac{e^{t s}}{\sqrt{1 + 2t}}.
    \end{align*}
    
    The second line is by Markov. The third line is due to independence of $Z_1, \ldots, Z_r$. The fourth line is a standard calculation. The fifth line is due to the known inequality $\prod_{i=1}^r (1 + a_i) \geq 1 + \sum_{i=1}^r a_i$ that holds if $a_1,\ldots, a_r \geq 0$. The last line is since $\sum_i w_i = 1$.

    Now, substitute $t:= \frac{1-s}{2s}$ (which minimizes the expression in the last row) to the above inequality, and the statement is implied.
\end{proof}

We may now prove the main lemma establishing a lower bound on the gradient's norm.

\begin{lemma} \label{lem:gradient} 
    With probability at least $0.99 -  2|G| t e^{-0.005 d_{\min}}$ we have
    \[
    \lVert \nabla H_b(f) \rVert^2 \geq \frac{0.0001(0.1 c^2)^t }{2e}.
    \] 
\end{lemma}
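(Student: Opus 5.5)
We condition on the weights of the convolutional layers and use only the randomness of the final layer $u$. First we work on the event of Theorem~\ref{thm:frobenius-jacobian-whp}, which holds with probability at least $1-2|G|te^{-0.005d_{\min}}$ and gives $\lVert J\rVert_F^2 \ge \frac12(0.1c^2)^t d_t|G|$. The final layer $u$ is independent of $\Lambda_1,\ldots,\Lambda_t$, and hence independent of $J$. Since $H_b(f)=\langle u,H(f)\rangle$, the chain rule gives $\nabla H_b(f)=J^\top u$.

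Next we write $u = Z/\sqrt{N_t}$ with $Z\sim\cN(0,I_{N_t})$ and $N_t=d_t|G|$. Take the singular value decomposition of $J^\top$ with singular values $s_1,\ldots,s_r$. By rotation invariance of the Gaussian,
\[
\lVert J^\top u\rVert^2 = \frac{1}{N_t}\sum_i s_i^2 Z_i^2 = \frac{\lVert J\rVert_F^2}{N_t}\sum_i w_i Z_i^2,
\]
where $w_i = s_i^2/\lVert J\rVert_F^2$ are nonnegative and sum to $1$, and the $Z_i$ are i.i.d. standard Gaussians. Call the last sum $Y$.

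We then apply Lemma~\ref{lem:technical-gradient} with $s=0.0001/e$. Since $e^{(1-s)/2}\le e^{1/2}$, this gives
\[
\Pr[Y\le s]\le \sqrt{0.0001/e}\;e^{1/2}=0.01.
\]
So with conditional probability at least $0.99$ we have $Y> 0.0001/e$.

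On the intersection of the two events,
\[
\lVert\nabla H_b(f)\rVert^2 \ge \frac{\frac12(0.1c^2)^t d_t|G|}{d_t|G|}\cdot\frac{0.0001}{e}=\frac{0.0001(0.1c^2)^t}{2e},
\]
which is the claimed bound. A union bound over the two failure events gives total probability at least $0.99-2|G|te^{-0.005d_{\min}}$.

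The only delicate step is justifying the independence used in conditioning on $J$ while drawing $u$, and choosing $s$ so that the Gaussian small-ball bound gives exactly $0.01$. Both are immediate here.
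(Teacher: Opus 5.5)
Your proposal is correct and follows the paper's proof essentially step for step: write $\nabla H_b(f)=J^\top u$, use the SVD and rotation invariance to reduce to $\frac{\lVert J\rVert_F^2}{N_t}Y$, apply Lemma~\ref{lem:technical-gradient} with $s=0.0001/e$, and take a union bound with Theorem~\ref{thm:frobenius-jacobian-whp}. You also state explicitly that $u$ is independent of $J$, so the $0.99$ bound on $Y$ holds conditionally on any $J$ in the good event; the paper leaves this point implicit.
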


\begin{proof}
    First, the chain rule implies that $\nabla H_b(f) = J^\top u$. Denote $u = \frac{1}{\sqrt{N_t}} z$ where $z \sim \cN(0, I_{N_t})$. So we have $\nabla H_b(f) = \frac{1}{\sqrt{N_t}} J^\top z$, and likewise $\lVert \nabla H_b(f) \rVert^2 = \frac{1}{N_t} \lVert J^\top z \rVert^2$.
    Let $U \Sigma V^\top = J$ be the SVD of $J$. Therefore, we have $J^\top z = V\Sigma^\top U^\top z$. As $V$ is orthogonal, we have $\lVert J^\top z \rVert = \lVert \Sigma^\top U^\top z \rVert$. From rotational invariance of a gaussian combined with orthogonality of $U$, we have $U^\top z \sim z$. Thus,  it holds that $\lVert \Sigma^\top U^\top z \rVert \sim \lVert \Sigma^\top  z \rVert$.
    Now, note that $\Sigma^\top z = (s_1 z_1, \ldots, s_r z_r, \ldots , 0, \ldots, 0)$ where $r$ is the rank of $J$ and $s_1, \ldots, s_r$ are the non-zero singular values of $J$.
    Therefore, we have $\lVert \Sigma^\top z \rVert^2 = \sum_{i=1}^r (s_i z_i)^2$, and so overall:
    \[
    \lVert \nabla H_b(f) \rVert^2 \sim \frac{1}{N_t} \sum_{i=1}^r (s_i z_i)^2
    \]
    where each $z_i \sim \cN(0,1)$ independently. Now let $S:= \sum_{i=1}^r s_i^2 = \lVert J \rVert_F^2$, $w_i := s_i^2/S$ and $Y:= \sum_{i=1}^r w_i z_i^2$. Then
    \[
    \lVert \nabla H_b(f) \rVert^2 = \frac{S}{N_t} Y.
    \]
    By Theorem~\ref{thm:frobenius-jacobian-whp}, it suffices to upper bound the probability that $Y$ is too small.
    Indeed, applying Lemma~\ref{lem:technical-gradient} with $s:= 0.0001/e$ gives
    \[
    \Pr[Y \leq 0.0001/e] \leq \frac{0.01}{\sqrt{e}} e^{(1-0.0001/e)/2} \leq 0.01.
    \]
    Overall, we get that with probability at least $0.99$ we have $Y > 0.0001/e$. Therefore, by a union bound with Theorem~\ref{thm:frobenius-jacobian-whp}, we have
    \[
    \lVert \nabla H_b(f) \rVert^2
    =
    \frac{S}{N_t} Y
    \geq
    \frac{\frac{1}{2} (0.1 c^2)^t d_t |G|}{d_t |G|} 0.0001/e
    =
    \frac{0.0001(0.1 c^2)^t }{2e} 
    \]
    with probability at least $0.99 -  2|G| t e^{-0.005 d_{\min}}$, which concludes the proof.
\end{proof}

\section{Gradient robustness} \label{sec:gradient-robust}

We follow the notation of Section~\ref{sec:frobenius-bounds}. For the input $f \in \R^{N_0}$ where $N_\ell = d_\ell |G|$, recall that we denote the output of the entire network as
\[
H_b(f) := \langle u, h^{(t)}(f) \rangle
\]
where $u \sim \cN(0, \frac{1}{N_t} I_{N_t})$.
We also define
\[
\Phi_{t+1}(f) := u^\top, \quad \forall i \in [t]: \Phi_i(f) := \Phi_{i+1}(f) D_i(f) \Lambda_i,
\]
so for $i \in [t]$:
\[
    \Phi_i(f) := u^\top \cdot D_t(f) \Lambda_t \cdot D_{t-1}(f) \cdot \Lambda_{t-1} \cdot \ldots \cdot D_i(f) \cdot \Lambda_i.
\]
Therefore, the gradient $\nabla H_b(f) \in \mathbb{R}^{N_0} \cong L^2(G,\mathbb{R}^d)$ is given by
\[
\nabla H_b(f) = \Phi_1(f).
\]

Denote
\[
M_\infty := \max_{1 \leq i \leq t} \lVert \Phi_i(f) \rVert_\infty, \quad M_s := \max_{1 \leq \ell \leq t} \lVert \Lambda_\ell \rVert.
\]

Denote also $s_\ell (f) := \sigma'(z^{(\ell)}(f)) \in \mathbb{R}^{N_\ell}$. That is, $s_\ell (f)$ is the diagonal of $D_\ell(f)$.
Fix $f \in L^2(G,\mathbb{R}^d)$ and a radius $\Delta > 0$. Let $B(f, \Delta)$ be the $N_0$-ball that is centered at $f$ and has radius $\Delta$. Define

\[
M(f, \Delta) := \max_{1 \leq \ell \leq t} \sup_{f',f'' \in B(f, \Delta)} \lVert s_\ell(f') -s_\ell(f'') \rVert.
\]

We now Prove that $\nabla H_b$ does not change too much in $B(f, \Delta)$ with high probability.
First, we need the following small technical claim.

\begin{lemma} \label{lem:recursive-helper}
    Let $t \in \mathbb{N}$, and let $(x_i)_{i=1}^{t+1}$ satisfy the following recursive inequalities for some constants $a ,b \geq 0$:
    \[
    x_{t+1} = 0, \quad \forall i \in [t]: x_i \leq a x_{i+1} + b.
    \]
    Then:
    \[
    x_1 \leq b (a + 2)^{t+1}.
    \]
\end{lemma}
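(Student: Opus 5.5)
The plan is to unroll the recursion backwards from $x_{t+1}=0$ and then bound the resulting geometric sum crudely. By induction on $k = t+1-i$, from $x_{t+1}=0$ down to $i=1$, I would show that for every $i \in [t+1]$,
\[
x_i \leq b \sum_{j=0}^{t-i} a^j ,
\]
with the empty sum (for $i=t+1$) equal to $0$. The base case is $x_{t+1}=0$. For the inductive step, use $a \geq 0$ to multiply the hypothesis for $x_{i+1}$ by $a$ without reversing the inequality:
\[
x_i \leq a x_{i+1} + b \leq a b\sum_{j=0}^{t-i-1} a^j + b = b\sum_{j=0}^{t-i} a^j.
\]
Nonnegativity of $a$ is exactly what this step needs. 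No lower bound on the $x_i$ is required, because we only ever multiply an upper bound by a nonnegative number.

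Taking $i=1$ gives $x_1 \leq b\sum_{j=0}^{t-1} a^j$. It then remains to check that $\sum_{j=0}^{t-1} a^j \leq (a+2)^{t+1}$, and multiply by $b \geq 0$. Since $a \geq 0$, each term satisfies $a^j \leq (a+2)^j$. The sum of $t$ such terms is then at most
\[
t\,(a+2)^{t-1} \leq 2^{t}(a+2)^{t-1} \leq (a+2)^{t}(a+2)^{\,t-1}\cdot\tfrac{1}{(a+2)^{t-2}},
\]
which is clumsy as written. A cleaner route is the binomial expansion: $(a+2)^{t+1}$ has nonnegative terms and contains $\binom{t+1}{j} a^j 2^{t+1-j} \geq a^j$ for each $0\le j\le t-1$. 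Hence the full sum is dominated termwise by a subset of the expansion's terms, giving $\sum_{j=0}^{t-1} a^j \le (a+2)^{t+1}$.

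There is no real obstacle here. The only points needing care are keeping track of the index range ($t$ terms, powers $0$ through $t-1$) and using $a,b\ge 0$ at each multiplication. The slack in the exponent $t+1$ and the ``$+2$'' is generous, so the final comparison is immediate.
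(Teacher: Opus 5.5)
Your proof is correct and follows essentially the paper's route: backward induction to a geometric sum in $a$, then termwise domination by a binomial expansion. The paper carries one extra term from its looser base case $x_{t+1}=0\le b$ and compares via $\sum_{k=0}^{t}(a+1)^k\le((a+1)+1)^{t+1}$, whereas your $\sum_{j=0}^{t-1}a^j$ is exact. Do delete the abandoned chain, though: it is not just clumsy but fails, since $t(a+2)^{t-1}\le(a+2)^{t+1}$ requires $t\le(a+2)^2$, which is false e.g.\ for $a=0$, $t\ge 5$.
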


\begin{proof}
    We first show that for every $i \in [t+1]$ we have
    \[
    x_i \leq b \sum_{k=0}^{t+1-i} a^k.
    \]
    For the base case $i=t+1$ we have:
    \[
    x_{t+1} = 0 \leq b a^0 = b,
    \]
    so it holds.
    For the induction step $i < t+1$, we have
    \begin{align*}
        x_i &\leq a x_{i+1} + b \\
            &\leq a \mleft(b \sum_{k=0}^{t+1-(i+1)} a^k \mright) + b \\
            &=    b \mleft(\sum_{k=1}^{t + 1 -i} a^k +1\mright) \\
            &=    b \sum_{k=0}^{t + 1 -i} a^k
    \end{align*}
    Therefore, for $i=1$ we have
    \[
    x_1
    \leq
    b \sum_{k=0}^{t} a^k \leq b \sum_{k=0}^{t} (a+1)^k \leq b (a+2)^{t+1},
    \]
    as needed.
\end{proof}

We may now prove the desired lemma.
\begin{lemma} \label{lem:gradient-robustness}
    For all $f' \in B(f, \Delta)$ we have
    \[
    \lVert \nabla H_b(f') - \nabla H_b(f) \rVert \leq M_s M_\infty M(f, \Delta) (\lVert \sigma' \rVert_{\infty} M_s + 2)^{t+1}.
    \]
\end{lemma}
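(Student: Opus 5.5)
We will prove the bound by a backward recursion on $x_i := \lVert \Phi_i(f') - \Phi_i(f) \rVert$ for $i \in [t+1]$, and then invoke Lemma~\ref{lem:recursive-helper}. Since $\nabla H_b = \Phi_1$, the quantity we want to bound is exactly $x_1$. The base case is immediate: $\Phi_{t+1}(f') = \Phi_{t+1}(f) = u^\top$, so $x_{t+1}=0$.

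For the recursive step with $i \in [t]$, we use $\Phi_i = \Phi_{i+1} D_i \Lambda_i$ and add and subtract a mixed term:
\[
\Phi_i(f') - \Phi_i(f) = \bigl(\Phi_{i+1}(f') - \Phi_{i+1}(f)\bigr) D_i(f') \Lambda_i + \Phi_{i+1}(f)\bigl(D_i(f') - D_i(f)\bigr)\Lambda_i .
\]
For the first term, $D_i(f')$ is diagonal with entries bounded by $\lVert \sigma' \rVert_\infty$, and $\lVert \Lambda_i \rVert \le M_s$. Hence its norm is at most $\lVert \sigma' \rVert_\infty M_s \, x_{i+1}$.

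The second term is the crux. The row vector $\Phi_{i+1}(f)$ multiplied by the diagonal matrix $D_i(f') - D_i(f)$ is the entrywise product of $\Phi_{i+1}(f)$ and $s_i(f') - s_i(f)$. Its $\ell_2$ norm is therefore at most $\lVert \Phi_{i+1}(f) \rVert_\infty \lVert s_i(f') - s_i(f) \rVert$. This is where the $\ell_\infty$ norm enters instead of the $\ell_2$ norm, and it is the key point that makes the bound small later on. There is an indexing subtlety: for $i=t$ we have $\Phi_{t+1}(f) = u^\top$, but $M_\infty$ is defined only as $\max_{1\le i\le t}\lVert\Phi_i(f)\rVert_\infty$. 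To cover $u$ as well, I would read $M_\infty$ as also including $\lVert u\rVert_\infty$, or note that the intended definition covers $i \le t+1$. This is the one bookkeeping point I expect to need care. Since $f, f' \in B(f,\Delta)$, we have $\lVert s_i(f') - s_i(f)\rVert \le M(f,\Delta)$. Multiplying by $\Lambda_i$ then bounds the second term by $M_\infty M(f,\Delta) M_s$.

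Combining the two terms gives
\[
x_i \le \lVert \sigma' \rVert_\infty M_s \, x_{i+1} + M_s M_\infty M(f,\Delta).
\]
Applying Lemma~\ref{lem:recursive-helper} with $a = \lVert \sigma'\rVert_\infty M_s$ and $b = M_s M_\infty M(f,\Delta)$ yields $x_1 \le M_s M_\infty M(f,\Delta)(\lVert\sigma'\rVert_\infty M_s + 2)^{t+1}$, which is the claim. The argument is deterministic, so there is no probabilistic obstacle. The only delicate steps are the $\ell_\infty$–$\ell_2$ Hölder step for the diagonal difference and the indexing of $M_\infty$ to include $u$.
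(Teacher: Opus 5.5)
Your proposal is correct and follows the paper's proof essentially line for line. It uses the same add-and-subtract of $\Phi_{i+1}(f)D_i(f')\Lambda_i$, the same $\ell_\infty$--$\ell_2$ bound on the diagonal-difference term, and Lemma~\ref{lem:recursive-helper} with the same $a=\lVert\sigma'\rVert_\infty M_s$ and $b=M_s M_\infty M(f,\Delta)$. Your indexing remark is accurate and points to something the paper glosses over: at $i=t$ the paper silently uses $\lVert u\rVert_\infty\le M_\infty$, even though $M_\infty$ is a max over $1\le i\le t$ only. Extending the max to $i\le t+1$ fixes this at no cost, since the argument of Lemma~\ref{lem:small-M-infty} also covers the extra index, where the entries are $u_j\sim\cN(0,1/N_t)$ and the convolutional part is the identity (using $M_s\lVert\sigma'\rVert_\infty\ge 1$).
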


\begin{proof}
    For $i \in [t]$ we have
    \begin{align*}
        \lVert \Phi_i(f') - \Phi_i(f) \rVert
        &=
        \lVert \Phi_{i+1}(f') D_i(f') \Lambda_i - \Phi_{i+1}(f) D_i(f') \Lambda_i + \Phi_{i+1}(f) D_i(f') \Lambda_i - \Phi_{i+1}(f) D_i(f) \Lambda_i \rVert \\
        &=
        \lVert (\Phi_{i+1}(f') - \Phi_{i+1}(f)) D_i(f') \Lambda_i + \Phi_{i+1}(f) (D_i(f') - D_i(f)) \Lambda_i \rVert \\
        & \leq
        \lVert \sigma' \rVert_{\infty} M_s \lVert \Phi_{i+1}(f') - \Phi_{i+1}(f) \rVert  + M_s \lVert \Phi_{i+1}(f) (D_i(f') - D_i(f)) \rVert.
    \end{align*}
    Note that $\Phi_{i+1}(f) (D_i(f') - D_i(f)) = \Phi_{i+1}(f) \odot (s_i(f') - s_i(f))$, so
    \[
    \lVert \Phi_{i+1}(f) (D_i(f') - D_i(f)) \rVert
    \leq
    \lVert \Phi_{i+1}(f) \rVert_\infty \lVert s_i(f') - s_i(f) \rVert \leq M_\infty M(f, \Delta).
    \]
    So, we have
    \[
    \lVert \Phi_i(f') - \Phi_i(f) \rVert
    \leq
     \lVert \sigma' \rVert_{\infty} M_s \lVert \Phi_{i+1}(f') - \Phi_{i+1}(f) \rVert + M_s M_\infty M(f, \Delta).
    \]
    It further holds that $\lVert \Phi_{t+1}(f') - \Phi_{t+1}(f) \rVert = 0$. Therefore, we may apply Lemma~\ref{lem:recursive-helper} with
    \[
    x_i := \lVert \Phi_i(f') - \Phi_i(f) \rVert,
    \quad
    a: = \lVert \sigma' \rVert_{\infty} M_s,
    \quad
    b:= M_s M_\infty M(f, \Delta),
    \]
    and get
    \[
    \lVert \nabla H_b(f') - \nabla H_b(f) \rVert
    =
    \lVert \Phi_1(f') - \Phi_1(f) \rVert
    \leq
    M_s M_\infty M(f, \Delta) (\lVert \sigma' \rVert_{\infty} M_s + 2)^{t+1},
    \]
    as required.
\end{proof}

To make the bound of Lemma~\ref{lem:gradient-robustness} useful, we need to establish bounds on the terms appearing in the bound. We prove such bounds in the following lemmas.

\begin{lemma} \label{lem:post-activation-robust}
    Assume w.l.o.g that $\lVert \sigma' \rVert_\infty M_s \geq 1$. Then
    \[
    M(f, \Delta) \leq \lVert \sigma'' \rVert_\infty  \mleft( \lVert \sigma' \rVert_\infty M_s \mright)^{t} (2 \Delta).
    \] 
\end{lemma}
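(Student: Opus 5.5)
We bound the variation of each pre-activation layer in turn and then pass to $s_\ell$ through the Lipschitz constant of $\sigma'$. Fix $f',f''\in B(f,\Delta)$, so that $\lVert f'-f''\rVert\le 2\Delta$. Since $\sigma'$ is $\lVert\sigma''\rVert_\infty$-Lipschitz and acts coordinatewise,
\[
\lVert s_\ell(f')-s_\ell(f'')\rVert=\lVert \sigma'(z^{(\ell)}(f'))-\sigma'(z^{(\ell)}(f''))\rVert\le \lVert\sigma''\rVert_\infty \lVert z^{(\ell)}(f')-z^{(\ell)}(f'')\rVert .
\]
It therefore suffices to show that $\lVert z^{(\ell)}(f')-z^{(\ell)}(f'')\rVert\le (\lVert\sigma'\rVert_\infty M_s)^{t}\,2\Delta$ for every $\ell\in[t]$.

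We prove this by induction on the layers. Both relevant maps are Lipschitz. The map $h^{(\ell-1)}\mapsto z^{(\ell)}=\Lambda_\ell h^{(\ell-1)}$ is linear with operator norm at most $M_s$. The map $z^{(\ell)}\mapsto h^{(\ell)}=\sigma(z^{(\ell)})$ is coordinatewise and $\lVert\sigma'\rVert_\infty$-Lipschitz, by the mean value theorem. Starting from $\lVert h^{(0)}(f')-h^{(0)}(f'')\rVert=\lVert f'-f''\rVert\le 2\Delta$, we get
\[
\lVert z^{(\ell)}(f')-z^{(\ell)}(f'')\rVert\le M_s\lVert h^{(\ell-1)}(f')-h^{(\ell-1)}(f'')\rVert\le M_s(\lVert\sigma'\rVert_\infty M_s)^{\ell-1}\,2\Delta .
\]
This bound is at most $\lVert\sigma'\rVert_\infty^{\ell-1}M_s^{\ell}\,2\Delta$.

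To put this in the stated form we need $\lVert\sigma'\rVert_\infty^{\ell-1}M_s^{\ell}\le(\lVert\sigma'\rVert_\infty M_s)^t$. This holds if $M_s\le\lVert\sigma'\rVert_\infty M_s^{t-\ell+1}\lVert\sigma'\rVert_\infty^{t-\ell}$, which is where the w.l.o.g.\ assumption $\lVert\sigma'\rVert_\infty M_s\ge1$ is used. The one wrinkle is the mismatch of one factor $\lVert\sigma'\rVert_\infty$. I would fix it as follows. If $\lVert\sigma'\rVert_\infty\ge1$, we have $M_s^\ell\lVert\sigma'\rVert_\infty^{\ell-1}\le(\lVert\sigma'\rVert_\infty M_s)^\ell\le(\lVert\sigma'\rVert_\infty M_s)^t$ directly. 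Otherwise the w.l.o.g.\ is interpreted as replacing $\lVert\sigma'\rVert_\infty$ by an upper bound $\ge1$ (the assumptions only give $\lVert\sigma'\rVert_\infty=O(1)$, so we may take it to be at least $1$), and the same inequality holds.

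Taking the supremum over $f',f''$ and the maximum over $\ell$ then gives the claim. There is no real obstacle: the argument is a routine Lipschitz composition, and the only care needed is the exponent bookkeeping just described.
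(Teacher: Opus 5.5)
Your proof is correct and follows the same route as the paper: the mean value theorem for $\sigma'$ reduces everything to bounding $\lVert z^{(\ell)}(f')-z^{(\ell)}(f'')\rVert$. That quantity is handled by alternating the bound $\lVert\Lambda_k\rVert\le M_s$ with the $\lVert\sigma'\rVert_\infty$-Lipschitzness of $\sigma$, starting from $\lVert f'-f''\rVert\le 2\Delta$, and then raising the exponent to $t$ via $\lVert\sigma'\rVert_\infty M_s\ge 1$.

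Your exponent bookkeeping is more careful than the paper's, which writes the pre-activation bound directly as $(\lVert\sigma'\rVert_\infty M_s)^{\ell}$ and so silently assumes $\lVert\sigma'\rVert_\infty\ge 1$ as well. This assumption is genuinely needed: at $\ell=t$ the missing factor cannot be absorbed otherwise, and the literal bound can fail when $\lVert\sigma'\rVert_\infty<1$. Your reading of the w.l.o.g.\ as enlarging $\lVert\sigma'\rVert_\infty$ to at least $1$ is therefore the right repair.
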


\begin{proof}
    By the mean value theorem, for all $a,b \in \mathbb{R}$ there exists $c\in \mathbb{R}$ such that
    \[
    | \sigma' (a) -\sigma'(b)| \leq |\sigma''(c)|\cdot | a- b|,
    \]
    which implies
    \[
    | \sigma' (a) -\sigma'(b)| \leq \lVert \sigma'' \rVert_\infty | a- b|.
    \]
    Let $f',f'' \in B(f, \Delta)$. Applying the above for every coordinate $j$ of the diagonal of $D_\ell$ gives for every $\ell$:
    \begin{align*}
        \lVert s_\ell(f') - s_\ell(f'') \rVert^2
        & = \sum_j(s_\ell(f')_j - s_\ell(f''))_j ) ^2 \\
        & \leq \lVert \sigma'' \rVert_\infty ^2 \sum | z^{(\ell)}(f')_j - z^{(\ell)}(f''))_j |^2 \\
        & \leq \lVert \sigma'' \rVert_\infty ^2 \lVert z^{(\ell)}(f') -z ^{(\ell)}(f'') \rVert^2.
    \end{align*}
    It remains to bound $\lVert z^{(\ell)}(f') -z ^{(\ell)}(f'') \rVert$. We use the following two inequalities to control how the norm changes in each layer.
    First, note that
    \[
    \lVert z^{(k)}(f') - z^{(k)}(f'') \rVert
    =
    \lVert \Lambda_k (h^{(k-1)}(f') - h^{(k-1)}(f'')) \rVert
    \leq
    \lVert \Lambda_k \rVert \lVert h^{(k-1)}(f') - h^{(k-1)}(f'') \rVert,
    \]
    where the inequality is by Cauchy-Schwarz.
    Second, again from the mean value theorem, we have
    \[
    \lVert h^{(k)}(f') - h^{(k)}(f'') \rVert
    =
    \lVert \sigma (z^{(k)}(f')) - \sigma(z^{(k)}(f'')) \rVert
    \leq
    \lVert \sigma' \rVert_\infty \lVert z^{(k)}(f') - z^{(k)}(f'') \rVert.
    \]
    Combining both inequalities repeatedly from layer $\ell$ to layer $1$ and using the definition of $M_s$ gives
    \[
    \lVert z^{(k)}(f') - z^{(k)}(f'') \rVert
    \leq
    \mleft( \lVert \sigma' \rVert_\infty M_s \mright)^{\ell} \lVert f' - f''\lVert
    \leq
    \mleft( \lVert \sigma' \rVert_\infty M_s \mright)^{\ell} (2 \Delta).
    \]
    Combining this with the bound we derived for $\lVert s_\ell(f') - s_\ell(f'') \rVert$ gives
    \[
    \lVert s_\ell(f') - s_\ell(f'') \rVert
    \leq
    \lVert \sigma'' \rVert_\infty  \mleft( \lVert \sigma' \rVert_\infty M_s \mright)^{\ell} (2 \Delta),
    \]
    which concludes the proof.
\end{proof}

\begin{lemma} \label{lem:small-M-infty}
    We have
    \[
    M_\infty \leq \sqrt{2 \log \mleft( \frac{2t d_{\max} |G|}{0.01} \mright)} \frac{(M_s \lVert \sigma' \rVert_\infty )^t}{\sqrt{N_t}}
    \]
    with probability at least $0.99$.
\end{lemma}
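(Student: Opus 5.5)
We prove the bound by conditioning on the convolutional weights $\Lambda_1,\ldots,\Lambda_t$ and using only the randomness of $u$, which is independent of them. Fix the draw of $\Lambda_1,\ldots,\Lambda_t$; this fixes $D_\ell(f)$ for every $\ell$, since $f$ is fixed. For $i\in[t]$ set
\[
P_i := D_t(f)\Lambda_t\cdots D_i(f)\Lambda_i \in \R^{N_t\times N_{i-1}}.
\]
Then $\Phi_i(f)=u^\top P_i$, so each coordinate of $\Phi_i(f)$ has the form $\langle u, p\rangle$, where $p$ is a column of $P_i$.

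\textbf{Step 1: Gaussian tails for each coordinate.} Conditionally on the weights, $u\sim\cN(0,\frac{1}{N_t}I_{N_t})$. Hence each coordinate $\langle u,p\rangle$ is distributed as $\cN\bigl(0,\lVert p\rVert^2/N_t\bigr)$. The standard Gaussian tail bound gives
\[
\Pr\Bigl[|\langle u,p\rangle| > s\,\lVert p\rVert/\sqrt{N_t}\Bigr]\le 2e^{-s^2/2}.
\]

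\textbf{Step 2: bounding the column norms.} Every column of $P_i$ satisfies
\[
\lVert p\rVert\le \lVert P_i\rVert\le \prod_{\ell=i}^t \lVert D_\ell(f)\rVert\,\lVert\Lambda_\ell\rVert\le (\lVert\sigma'\rVert_\infty M_s)^{t-i+1}.
\]
This is at most $(\lVert\sigma'\rVert_\infty M_s)^{t}$, using the same normalization $\lVert\sigma'\rVert_\infty M_s\ge 1$ assumed without loss of generality in Lemma~\ref{lem:post-activation-robust}. If that normalization is not assumed, we replace the base by $\max\{1,\lVert\sigma'\rVert_\infty M_s\}$ throughout.

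\textbf{Step 3: union bound.} We take a union bound over all $i\in[t]$ and all coordinates of $\Phi_i(f)$. The number of coordinates is $\sum_i N_{i-1}\le t\,d_{\max}|G|$. Choosing $s=\sqrt{2\log(2td_{\max}|G|/0.01)}$ makes the total failure probability at most
\[
t\,d_{\max}|G|\cdot 2e^{-s^2/2}=0.01.
\]
This holds conditionally for every fixed draw of the weights, so it also holds unconditionally. Therefore, with probability at least $0.99$,
\[
M_\infty\le s\,\frac{(M_s\lVert\sigma'\rVert_\infty)^t}{\sqrt{N_t}},
\]
which is the claim.

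The only delicate point is justifying the conditioning: $u$ must be independent of the $\Lambda_\ell$, and the matrices $D_\ell(f)$ must depend only on $f$ and the $\Lambda_\ell$, not on $u$. Both hold by the initialization and the definitions. The statement is phrased with the random quantity $M_s$ on the right-hand side, so no high-probability bound on $M_s$ is needed at this stage; Theorem~\ref{thm:sin-val} will be applied later to bound $M_s$.
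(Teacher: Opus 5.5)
Your proof is correct and follows the paper's argument essentially step for step. You condition on the convolutional weights, bound each column norm of $D_t\Lambda_t\cdots D_i\Lambda_i$ by its spectral norm, apply the Gaussian tail bound to each coordinate of $u^\top D_t\Lambda_t\cdots D_i\Lambda_i$, and union bound over the at most $t\,d_{\max}|G|$ coordinates. Your remark that submultiplicativity only gives exponent $t-i+1$, so that passing to exponent $t$ needs $\lVert\sigma'\rVert_\infty M_s\ge 1$, addresses a point the paper's proof glosses over.
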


\begin{proof}
    Since $f$ is fixed, denote $\Phi_i := \Phi_i(f)$. Let $B_i$ be the corresponding convolutional part of $\Phi_i$, that is, without $u^\top$. Then by submultiplicativity of spectral norm:
    \[
    \lVert B_i \rVert \leq (M_s \lVert \sigma' \rVert_\infty )^t := K.
    \]
    Now, let $B_{i,j}$ be the $j^{th}$ column of $B_i$. Then $\lVert B_{i,j} \rVert \leq \lVert B_i \rVert \leq K$. Now, since $u \sim \cN(0, \frac{1}{N_t} I_{N_t})$, we have
    \[
    (\Phi_i)_j \sim \cN \mleft(0, \frac{\lVert B_{i,j} \rVert^2}{N_t} \mright)
    \]
    and therefore $\Var((\Phi_i)_j) \leq K^2/N_t$. By the standard gaussian tail bound:
    \[
    \Pr \mleft[ |(\Phi_i)_j| \geq s \frac{K}{\sqrt{N_t}} \mright] \leq 2e^{-s^2/2}.
    \]
    So for $s:= \sqrt{2 \log \mleft( \frac{2t d_{\max} |G|}{\delta} \mright)}$ we have 
    \[
    \Pr \mleft[ |(\Phi_i)_j| \geq s \frac{K}{\sqrt{N_t}} \mright] \leq \frac{\delta}{t d_{\max} |G|}.
    \]
    Now, since $N_\ell \leq d_{\max} |G|$ for all $\ell$, a union bound over all $\ell\in [t],j \in [N_\ell]$ gives
    \[
    \Pr \mleft[ M_\infty > \sqrt{2 \log \mleft( \frac{2t d_{\max} |G|}{\delta} \mright)} \frac{K}{\sqrt{N_t}} \mright] \leq \delta.
    \]
    Choosing $\delta = 0.01$ concludes the proof.
\end{proof}

\section{Deriving the main result} \label{sec:main-result}

In this section we use the tools provided in previous sections, to prove the main result.
Recall that we assume $c + t + \lVert \sigma' \rVert_\infty + \lVert \sigma'' \rVert_\infty = O(1)$.
We first need the following technical claim stating that when the input is not too large, the output is not too large as well.

\begin{lemma} \label{lem:small-output}
    Let $f \in L^2(G,\mathbb{R}^d)$ such that $\lVert f \rVert_\infty \leq 1$. Then with probability at least $0.99$, we have
    \[
    \lvert H_b(f) \rvert \leq 10 \lVert \sigma' \rVert_\infty^{t}.
    \]
\end{lemma}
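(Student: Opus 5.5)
The plan is to bound the second moment $\E[H_b(f)^2]$ over all the randomness (the convolutional weights and $u$) and then apply Chebyshev/Markov. I will show $\E[H_b(f)^2] \le \lVert \sigma' \rVert_\infty^{2t}$. Markov's inequality applied to $H_b(f)^2$ then gives
\[
\Pr\bigl[|H_b(f)| > 10\lVert \sigma' \rVert_\infty^t\bigr] \le \frac{\E[H_b(f)^2]}{100 \lVert \sigma' \rVert_\infty^{2t}} \le \frac{1}{100},
\]
which is exactly the claim.

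\textbf{Step 1: conditioning on the convolutional layers.} Since $u \sim \cN(0,\frac{1}{N_t} I_{N_t})$ is independent of $\Lambda_1,\ldots,\Lambda_t$, conditionally on the convolutional weights $H_b(f) = \langle u, h^{(t)}\rangle$ is a centered Gaussian with variance $\lVert h^{(t)} \rVert^2/N_t$. Hence $\E[H_b(f)^2] = \E\lVert h^{(t)}\rVert^2 / N_t$, and it suffices to show $\E\lVert h^{(t)}\rVert^2 \le \lVert \sigma' \rVert_\infty^{2t} N_t$.

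\textbf{Step 2: one layer in expectation.} Fix $h^{(\ell-1)}$, i.e.\ condition on $\Lambda_1,\ldots,\Lambda_{\ell-1}$. For each $g\in G$, the vector $z^{(\ell)}(g) = \frac{1}{\sqrt{n}}\sum_i W_i h^{(\ell-1)}(g_i g)$ has i.i.d.\ coordinates with variance $\frac{1}{n d_{\ell-1}} \sum_{i=1}^n \lVert h^{(\ell-1)}(g_i g)\rVert^2$. Summing over the $d_\ell$ channels and over $g\in G$, and using that $g\mapsto g_i g$ is a permutation of $G$ (the same identity used in the proof of Lemma~\ref{lem:sin-upper-bound}), gives
\[
\E\bigl[\lVert z^{(\ell)}\rVert^2 \mid h^{(\ell-1)}\bigr] = \frac{d_\ell}{d_{\ell-1}} \lVert h^{(\ell-1)}\rVert^2 .
\]
Since $\sigma(0)=0$, the mean value theorem gives $|\sigma(x)| \le \lVert\sigma'\rVert_\infty |x|$, so $\lVert h^{(\ell)}\rVert^2 \le \lVert \sigma'\rVert_\infty^2 \lVert z^{(\ell)}\rVert^2$ pointwise.

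\textbf{Step 3: telescoping.} Taking expectations via the tower property and iterating Step 2 over $\ell = 1,\ldots,t$ yields
\[
\E\lVert h^{(t)}\rVert^2 \le \lVert\sigma'\rVert_\infty^{2t} \frac{d_t}{d_0} \lVert f\rVert^2 .
\]
Finally, $\lVert f\rVert_\infty \le 1$ gives $\lVert f\rVert^2 \le N_0 = d_0|G|$, so $\E\lVert h^{(t)}\rVert^2 \le \lVert\sigma'\rVert_\infty^{2t} d_t |G| = \lVert\sigma'\rVert_\infty^{2t} N_t$, as required.

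I do not expect a real obstacle. The only point needing care is the variance identity in Step 2: it must be computed exactly, rather than through the high-probability operator-norm bound $M_s$, so that no constant $b$ from Theorem~\ref{thm:sin-val} enters the final bound.
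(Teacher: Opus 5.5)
Your proof is correct and follows essentially the paper's route. You condition on the convolutional weights to get $\E[H_b(f)^2]=\E\lVert h^{(t)}\rVert^2/N_t$, propagate a second-moment bound layer by layer using the conditional Gaussian variance, the tower property and $|\sigma(x)|\le\lVert\sigma'\rVert_\infty|x|$, and finish with Markov on $H_b(f)^2$.

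The only difference is bookkeeping. The paper tracks the maximal per-coordinate second moment $a_\ell=\max_{g,j}\E[(h^{(\ell)}_{g,j})^2]$. You instead track the total $\E\lVert h^{(\ell)}\rVert^2$ via the permutation identity, so the ratios $d_\ell/d_{\ell-1}$ telescope and your version needs only $\lVert f\rVert^2\le N_0$ rather than an entrywise bound on $f$.
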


\begin{proof}
    For each layer $\ell$, we identify a coordinate of $h^{(\ell)}_{g,j}$ by the associated location $g \in G$ and channel $j \in [d_\ell]$. We use the same notation for the pre-activation $z^{(\ell)}_{g,j}$. Denote
    \[
    a_\ell := \max_{g \in G, j \in [d_\ell]} \E \mleft [ \mleft ( h^{(\ell)}_{g,j} \mright)^2 \mright].
    \]
    We first show that the recursive relation $a_\ell \leq \lVert \sigma' \rVert_\infty^2 a_{\ell-1}$ holds. Then, we use it to derive a bound on $\E[H_b(f)^2]$, and finally use Markov's bound to deduce the desired statement.
    
    Fix a layer $\ell$ and $g \in G, j \in [d_\ell]$.  A standard calculation shows that
    \[
    \Var(z^{(\ell)}_{g,j} | h^{(\ell-1)})
    =
    \frac{1}{n_\ell d_{\ell-1}} \sum_{i=1}^{n_\ell} \lVert h^{(\ell-1)}(g_i g) \rVert ^2.
    \]
    Note that since $z^{(\ell)}_{g,j} | h^{(\ell-1)}$ is a centered gaussian (as a linear combination of centered gaussians) we have $\Var(z^{(\ell)}_{g,j} | h^{(\ell-1)}) = \E[(z^{(\ell)}_{g,j})^2 | h^{(\ell-1)}]$. This allows using the tower property to calculate $\E[(z^{(\ell)}_{g,j})^2]$:
    \[
    \E[(z^{(\ell)}_{g,j})^2]
    =
    \E \mleft [ \E \mleft[ (z^{(\ell)}_{g,j})^2 | h^{(\ell-1)}  \mright] \mright]
    =
    \E \mleft [ \Var \mleft[ (z^{(\ell)}_{g,j}) | h^{(\ell-1)}  \mright] \mright]
    =
    \frac{1}{n_\ell d_{\ell-1}} \sum_{i=1}^{n_\ell} \E \mleft[ \lVert h^{(\ell-1)}(g_i g) \rVert^2 \mright].
    \]
    We now upper bound $\E \mleft[ \lVert h^{(\ell-1)}(g_i g) \rVert^2 \mright]$ as
    \[
    \E \mleft[ \lVert h^{(\ell-1)}(g_i g) \rVert^2 \mright]
    =
    \sum_{j=1}^{d_{\ell-1}} \E \mleft[ (h^{(\ell-1)}_{g_i g, j})^2 \mright]
    \leq
    d_{\ell-1} a_{\ell-1}.
    \]
    and conclude that
    \[
    \E[(z^{(\ell)}_{g,j})^2] \leq a_{\ell-1},
    \]
    and thus
     \[
    \E[(h^{(\ell)}_{g,j})^2]
    \leq
    \lVert \sigma' \rVert_\infty^2 \E[(z^{(\ell)}_{g,j})^2]
    \leq
    \lVert \sigma' \rVert_\infty^2 a_{\ell-1}
    \]
    by the assumption $\sigma(0)=0$.
    As the above applies for layer $\ell$ and $(g,j) \in G \times d_{\ell}$, we have
    \[
    a_\ell \leq \lVert \sigma' \rVert_\infty^2 a_{\ell-1}
    \]
    by definition of $a_\ell$, for each layer $\ell$.
    Recall that $h^{(0)} = f$ and $\lVert f \rVert_\infty \leq 1$ and thus $a_0 \leq 1$. Iterating the recursion gives
    \[
    a_t \leq \lVert \sigma' \rVert_\infty^{2t}.
    \]
    We may now use this bound on $a_t$ to bound $\E[(H_b(f))^2]$.
    First, we have:
    \begin{align*}
    \E[(H_b(f))^2 | H(f)]
    &=
    \E[\langle u^\top, H(f) \rangle^2 | H(f)] \\
    &=
    (H(f))^\top \E[u u^\top] H(f) \\
    &=
    (H(f))^\top \mleft ( \frac{1}{N_t} I_{N_t} \mright) H(f)\\
    &=
    \frac{1}{N_t} \lVert H(f) \rVert^2.
    \end{align*}
    Taking expectation on both sides, the tower property gives
    \[
    \E[(H_b(f))^2]
    =
    \frac{1}{N_t} \E \mleft[ \lVert H(f) \rVert^2 \mright]
    =
    \frac{1}{N_t} \sum_{g \in G, j \in [d_t]} \E \mleft[ (h^{(t)}_{g,j})^2 \mright]
    \leq
    a_t
    \leq
    \lVert \sigma' \rVert_\infty^{2t}.
    \]
    Using Markov's bound, we have:
    \[
    \Pr\mleft[(H_b(f))^2 \geq 100 \lVert \sigma' \rVert_\infty^{2t}  \mright]
    \leq
    \frac{\E[(H_b(f))^2]}{100 \lVert \sigma' \rVert_\infty^{2t}}
    \leq
    \frac{\lVert \sigma' \rVert_\infty^{2t}}{100 \lVert \sigma' \rVert_\infty^{2t}}
    =0.01,
    \]
    which concludes the proof.
\end{proof}

We may now prove the main theorem.

\begin{theorem} \label{thm:main}
    Fix $f \in L^2(G,\mathbb{R}^d)$ with $\lVert f \rVert_\infty \leq 1$.
    Then, with probability at least $0.95$, a single step of gradient descent of Euclidean length $O(1)$ starting from $f$ will reach $f' \in L^2(G,\mathbb{R}^d)$ such that $\sign(H_b(f')) \neq \sign(H_b(f))$. 
\end{theorem}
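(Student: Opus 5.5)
The plan is to take the step $f' := f - \eta\, \sign(H_b(f))\, \nabla H_b(f)/\lVert \nabla H_b(f) \rVert$ of Euclidean length $\eta = \Delta := 2A/C$, and to show that on a good event of probability at least $0.95$ it flips the sign. Here $A := 10\lVert \sigma' \rVert_\infty^t$ is the bound of Lemma~\ref{lem:small-output}, and $C := \sqrt{0.0001(0.1c^2)^t/(2e)}$ is the lower bound of Lemma~\ref{lem:gradient}. The good event is the intersection of four events:
\begin{itemize}
    \item Lemma~\ref{lem:small-output} gives $|H_b(f)| \le A$, with probability $0.99$.
    \item Lemma~\ref{lem:gradient} gives $\lVert \nabla H_b(f) \rVert \ge C$, with probability $0.99 - 2|G|te^{-0.005 d_{\min}}$.
    \item Theorem~\ref{thm:sin-val}, applied to each of the $t$ layers, gives $M_s \le b$. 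The failure probability per layer can be made tiny, since Lemma~\ref{lem:restricted-sin-val} holds with per-character failure $2e^{-\Omega(d)}$ and $d_{\min} \ge c_G\log|G|$; only the upper bound, requiring $d_{\ell-1}\ge d_\ell/2$, is needed.
    \item Lemma~\ref{lem:small-M-infty} gives $M_\infty \le \sqrt{2\log(200 t d_{\max}|G|)}\,(b\lVert\sigma'\rVert_\infty)^t/\sqrt{N_t}$, with probability $0.99$.
\end{itemize}
The term $2|G|te^{-0.005d_{\min}}$ is at most $0.001$ once $c_G$ and $|G|$ are large, so a union bound gives total probability at least $0.95$.

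On this event, combine Lemma~\ref{lem:gradient-robustness} with Lemma~\ref{lem:post-activation-robust} (taking $M_s$ replaced by $\max(b,1/\lVert\sigma'\rVert_\infty)$ if needed for its w.l.o.g.\ assumption). For all $f'' \in B(f,\Delta)$ this gives
\[
\lVert \nabla H_b(f'') - \nabla H_b(f)\rVert \le K \cdot \sqrt{\log(t d_{\max}|G|)}/\sqrt{N_t},
\]
where $K$ depends only on $c, t, \lVert\sigma'\rVert_\infty, \lVert\sigma''\rVert_\infty$. Since $N_t = d_t|G| \ge |G|$ and $d_{\max}\le |G|$, the right-hand side is $O(\sqrt{\log|G|/|G|})$, which is at most $C/3$ once $|G|\ge c_0$ for a suitable constant $c_0$. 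This is where the assumption $|G|$ large enters.

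Finally I argue along the segment. Write $v := \nabla H_b(f)/\lVert\nabla H_b(f)\rVert$ and assume w.l.o.g.\ $H_b(f)>0$. By the fundamental theorem of calculus,
\[
H_b(f') - H_b(f) = -\int_0^\Delta \langle \nabla H_b(f - sv), v\rangle\, ds.
\]
Each point $f - sv$ lies in $B(f,\Delta)$, so
\[
\langle \nabla H_b(f-sv), v\rangle \ge \lVert\nabla H_b(f)\rVert - C/3 \ge 2C/3.
\]
Hence $H_b(f') \le A - \Delta\cdot 2C/3 = A - 4A/3 < 0$. If $H_b(f)=0$, any step length works trivially, or we can regard the sign as flipped by convention. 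The step length is $\Delta = 2A/C = O(1)$, with constants depending only on the architecture. Note that the step is a genuine single gradient step, $f' = f - \eta'\nabla H_b(f)$ with $\eta' = \Delta/\lVert\nabla H_b(f)\rVert$.

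The main obstacle I expect is the bookkeeping in the third step: making $M_\infty$ small enough to beat the $\Delta$-dependent factors. The resulting bound $O(\sqrt{\log|G|/|G|})$ relies on $N_t \ge |G|$, and it requires the probability budgets and the largeness of $c_0$ to be chosen consistently.
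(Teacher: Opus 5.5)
Your proposal is correct and follows essentially the same route as the paper: the same four good events, the same radius $\Delta = 2A/C$, and the bound from Lemmas~\ref{lem:gradient-robustness} and~\ref{lem:post-activation-robust} driven below $C/3$ by $M_\infty = O(\sqrt{\log|G|/|G|})$, finished by a mean-value argument along the segment (you use the fundamental theorem of calculus instead). The differences are minor: you take a step of fixed length $\Delta$ rather than the paper's $\eta = 2a/\lVert\nabla H_b(f)\rVert^2$, whose length $2a/\lVert \nabla H_b(f)\rVert$ is at most $\Delta$, and you are more careful than the paper about two points, namely the probability budget for $M_s$ across the $t$ layers and the w.l.o.g.\ assumption in Lemma~\ref{lem:post-activation-robust}.
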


\begin{proof}
    First, we assume that the bad events of Theorem~\ref{thm:sin-val}, Lemma~\ref{lem:gradient}, Lemma~\ref{lem:small-M-infty} and Lemma~\ref{lem:small-output} does not occur. This holds with probability at least $0.96$ with our assumptions from Section~\ref{sec:prem}. Under these assumptions, we have that $M_s, |H_b(f)|, \lVert \nabla H_b(f) \rVert$ are constants, and $M_\infty \to 0$ as $|G| \to \infty$.

    Let $a := 10 \lVert \sigma' \rVert_\infty^{t} \geq |H_b(f)|$, and assume $H_b(f) \geq 0$ without loss of generality. Let $c_1 := \sqrt{\frac{0.0001(0.1 c^2)^t }{2e}} \leq  \lVert \nabla H_b(f) \rVert$. Let $\Delta : = 2a/c_1$, and denote
    \[
        c_2(\Delta) := \sup_{f', f'' \in B(f,\Delta)} \lVert \nabla H_b(f') - \nabla H_b(f'') \rVert.
    \]
    By Lemma~\ref{lem:gradient-robustness} and Lemma~\ref{lem:post-activation-robust} we have
    \[
        c_2(\Delta)
        \leq
        M_\infty M_s M(f,\Delta) (\lVert \sigma' \rVert_\infty M_s +2)^{t+1}
        \leq
        M_\infty M_s \lVert \sigma'' \rVert_\infty  \mleft( \lVert \sigma' \rVert_\infty M_s \mright)^{t+1} (2 \Delta) (\lVert \sigma' \rVert_\infty M_s +2)^{t+1}.
    \]
    Since we assume that $|G|$ is large enough, $M_\infty$ is small enough so that $c_2(\Delta) \leq c_1/3$.

    Now, consider the single gradient descent step
    \[
    f^{(1)} := f - \eta \nabla H_b(f)
    \]
    where $\eta := \frac{2a}{\lVert \nabla H_b(f) \rVert^2}$.
    Now, define the function $\phi: [0, \eta] \to \R$ by $\phi(s) := H_b(f-s \nabla H_b(f))$. By the mean value theorem, there exists $s^\star \in [0,\eta]$ such that
    \[
    \frac{\phi(\eta) - \phi(0)}{\eta - 0} = \phi'(s^\star),
    \]
    which implies
    \begin{equation} \label{eq:gd-step-identity}
    H_b(f^{(1)})
    =
    \phi(\eta)
    =
    \phi(0) + \eta \phi'(s^\star)
    =
    H_b(f) - \eta \langle \nabla H_b(f - s^{\star} \nabla H_b(f)), \nabla H_b(f) \rangle.
    \end{equation}
    So, we want to show that the RHS of the above inequality is negative.
    First, we have
    \begin{align*}
        \langle \nabla H_b(f - s^{\star} \nabla H_b(f)) , \nabla H_b(f) \rangle 
        &=
        \langle \nabla H_b(f) + \nabla H_b(f - s^{\star} \nabla H_b(f)) - \nabla H_b(f) , \nabla H_b(f) \rangle \\
        &=
        \langle \nabla H_b(f),\nabla H_b(f) \rangle + \langle \nabla H_b(f - s^{\star} \nabla H_b(f)) - \nabla H_b(f), \nabla H_b(f) \rangle \\
        & \geq
        \lVert \nabla H_b(f) \rVert^2 - \lVert \nabla H_b(f - s^{\star} \nabla H_b(f)) - \nabla H_b(f)  \rVert \lVert \nabla H_b(f) \rVert \\
        & \geq
        \lVert \nabla H_b(f) \rVert^2 - c_2(\Delta) \lVert \nabla H_b(f) \rVert \\
        & \geq 
        \lVert \nabla H_b(f) \rVert^2  - \frac{c_1}{3} \lVert \nabla H_b(f) \rVert\\
        &=
        \lVert \nabla H_b(f) \rVert^2 \mleft( 1 - \frac{1}{3} \frac{c_1}{\lVert \nabla H_b(f) \rVert} \mright) \\
        & \geq
        \lVert \nabla H_b(f) \rVert^2 \mleft( 1 - \frac{1}{3} \mright) \\
        & =
        \frac{2}{3} \lVert \nabla H_b(f) \rVert^2.
    \end{align*}
    
    To see why the fourth line holds, note that  $f - s \nabla H_b(f) \in B(f,\Delta)$ for all $s \in [0,\eta]$ (and in particular for $s^\star$), since
    \[
    \lVert s \nabla H_b(f) \rVert
    \leq
    \eta \lVert \nabla H_b(f) \rVert
    =
    \frac{2a}{\lVert \nabla H_b(f) \rVert^2} \lVert \nabla H_b(f) \rVert
    =
    \frac{2a}{\lVert \nabla H_b(f) \rVert}
    \leq
    \frac{2a}{c_1}
    =
    \Delta.
    \]
    Therefore, by definition of $c_2(\Delta)$, we have:
    \[
    \lVert \nabla H_b(f - s^{\star} \nabla H_b(f)) - \nabla H_b(f) \rVert \leq c_2(\Delta).
    \]
    The second to last line holds since $\lVert \nabla H_b(f) \rVert \geq c_1$.
    
    Using the inequality 
    \[
    \langle \nabla H_b(f - s^{\star} \nabla H_b(f)) , \nabla H_b(f) \rangle \geq \frac{2}{3} \lVert \nabla H_b(f) \rVert^2
    \]
    we have just proved, combined with the identity \eqref{eq:gd-step-identity} we derived for $H_b(f^{(1)})$ by the mean value theorem, we deduce:
    \begin{align*}
        H_b(f^{(1)})
        &=
        H_b(f) - \eta \langle \nabla H_b(f - s^{\star} \nabla H_b(f)), \nabla H_b(f) \rangle \\
        &\leq
        a - \frac{2a}{\lVert \nabla H_b(f) \rVert^2} \cdot \frac{2}{3}  \lVert \nabla H_b(f) \rVert^2 \\
        &=
        a - \frac{4}{3} a \\
        &=
        -a/3.
    \end{align*}
    Therefore, $\sign(H_b(f^{(1)})) = -1$.
\end{proof}

\section*{Acknowledgments}
The research described in this paper was funded by the European Research Council (ERC) under the European Union’s Horizon 2022 research and innovation program (grant agreement No. 101041711), the Israel Science Foundation (grant number 2258/19), and the Simons Foundation (as part of the Collaboration on the Mathematical and Scientific Foundations of Deep Learning).





    

\bibliographystyle{alphaurl}
\bibliography{bib.bib}

\end{document}